\documentclass{article}

\usepackage{arxiv_preprint,times}

\usepackage{amsmath,amsfonts,bm}

\def\eqref#1{equation~\ref{#1}}

\def\1{\bm{1}}

\DeclareMathAlphabet{\mathsfit}{\encodingdefault}{\sfdefault}{m}{sl}
\SetMathAlphabet{\mathsfit}{bold}{\encodingdefault}{\sfdefault}{bx}{n}

\usepackage{amsmath,amssymb,amsthm}
\usepackage{graphicx}
\usepackage{booktabs}
\usepackage{float}
\usepackage{algorithm,algorithmic}
\usepackage[table]{xcolor}
\definecolor{gainshade}{RGB}{55,157,93}
\newif\ifreviewchanges
\ifdefined\ReviewVersion\reviewchangestrue\else\reviewchangesfalse\fi

\usepackage{microtype}
\usepackage{hyperref}
\usepackage{url}
\hypersetup{
  colorlinks=true,
  linkcolor=black,
  citecolor=black,
  urlcolor=blue
}

\newtheorem{theorem}{Theorem}
\newtheorem{proposition}[theorem]{Proposition}
\newtheorem{lemma}[theorem]{Lemma}

\theoremstyle{definition}

\newcommand{\vmf}{\operatorname{vMF}}

\title{Does a Shared Temperature Imply a Shared\\
Angular Scale in Probabilistic\\
Contrastive Learning?}

\author{\parbox{\dimexpr\textwidth-2\tabcolsep\relax}{\centering
Ningkang Peng\textsuperscript{1}, Qianfeng Yu\textsuperscript{1}, Jingyang Mao\textsuperscript{1}, Xiaoqian Peng\textsuperscript{2},\\
Tingyu Lu\textsuperscript{3}, Peirong Ma\textsuperscript{1}, Yanhui Gu\textsuperscript{1}\\[0.5em]
\normalfont\textsuperscript{1}Nanjing Normal University\\
\textsuperscript{2}Nanjing University of Chinese Medicine\quad\textsuperscript{3}Tohoku University\\
\texttt{nkpeng@nnu.edu.cn}, \texttt{gu@njnu.edu.cn}}}
\iclrfinalcopy

\renewcommand{\eqref}[1]{\textup{(\ref{#1})}}

\begin{document}

\maketitle

\begin{abstract}
In probabilistic contrastive learning, a shared temperature is commonly
interpreted as a shared similarity scale, but this interpretation does not hold for high-dimensional distributional class representations. We study
the exact von Mises--Fisher (vMF) probabilistic score used by ProCo when
representation dimension and class concentration grow jointly. We prove
that the score retains a class-dependent leading angular gain
$g_c=A_c/\tau$, where $A_c$ is the mean resultant length. This gain enters
Softmax competition, pairwise decision boundaries, and feature gradients.
On real CIFAR-LT, ImageNet-LT, and iNaturalist representations, the theory
accurately predicts boundary movements and local gradient changes under
the full vMF score. Classwise temperature adjustment also changes the
cosine-zero intercept and finite-dimensional response. We construct
intercept-preserving and Pure Angular controls to separate the leading
gain from these accompanying changes. Complete gain equalization yields
a shared-scale cosine prototype rule at leading order; a finite-dimensional margin condition guarantees agreement of
the two classifiers. Across 16 frozen representation settings, prediction
agreement is 98.43--99.99\%, with disagreements concentrated at small
cosine margins. In controlled contrastive-only training with the
training-frequency prior, Pure Angular editing improves both learned
representations at all tested CIFAR-10/100 imbalance factors and retains
positive changes on ImageNet-LT. Thus vMF concentration not only describes
class distributions, but also forms a decision and learning scale in
high-dimensional probabilistic contrastive learning.
\end{abstract}

\section{Introduction}

In similarity-based contrastive learning, temperature controls how strongly
a score responds to changes in similarity
\citep{chen2020simclr,khosla2020supervised,wang2021behaviour}. When all classes share one
temperature, a natural intuition is that they share the same similarity
scale. When a class is represented by a feature distribution rather than
a single prototype, however, its score aggregates similarity across the
entire distribution. \emph{Does a shared temperature still imply a shared
angular response?}
More broadly, once similarity is aggregated over a class distribution, its
effective scale can depend on the geometry of that distribution as well as
on the nominal temperature.

We make this distribution-dependent scaling effect precise through the
exact von Mises--Fisher (vMF) probabilistic score used by ProCo
\citep{du2024probabilisticcontrastivelearninglongtailed}. ProCo models each
class by a vMF distribution on the unit sphere and obtains its score by
integrating exponential similarity over that distribution. Such spherical
distributional modeling builds on directional statistics
\citep{banerjee2005clustering,Dryden_2005}. At fixed
dimension, increasing concentration makes the distribution collapse toward
its mean direction, recovering a cosine score scaled by the shared
temperature. Modern neural representations have hundreds or thousands of
dimensions, while class concentration need not grow much faster than
dimension. We therefore study their joint growth and identify the
class-dependent structure that survives in the high-dimensional score.

Our first result identifies a class-dependent leading angular gain.
For class mean resultant length $A_c$, the coefficient $g_c=A_c/\tau$
gives the leading linear angular response. We establish a uniform
expansion of the exact score as dimension and concentration grow jointly,
controlling both the score remainder and its angular derivative.
Consequently, a shared nominal temperature does not generally imply a shared
effective angular scale once classes are represented distributionally; in
the vMF case, this dependence is captured explicitly by $A_c/\tau$.

This gain propagates through multiclass learning. It changes relative
competition in Softmax, rotates pairwise boundary normals, and reweights
class directions in feature gradients. We derive these consequences from
the same expansion and test them on real CIFAR-LT, ImageNet-LT, and
iNaturalist representations. Accurate predictions of full-score boundary
movements and local feature gradients turn $g_c$ from an analytical
coefficient into a quantitatively testable decision and learning scale.

A further question is how to intervene on this gain in isolation.
Changing class temperatures modifies $A_c/\tau_c$, but also changes the
cosine-zero intercept and finite-dimensional response. We distinguish
direct temperature adjustment, intercept-preserving adjustment, and Pure
Angular editing of only the leading linear term. These realizations can
share the same target gain while producing different complete scores.
Fixed-state experiments show that restoring the intercept or editing only
the angular term removes anomalous predictions caused by large
temperature-induced intercept drift.

The decomposition also reveals the classification geometry after complete
equalization. The Pure Angular score is a
shared-scale cosine prototype score plus a finite-dimensional correction.
We give a sufficient condition involving the top-two cosine margin and
the correction range for the two rules to predict the same class.
Across 16 frozen CIFAR, ImageNet-LT, and iNaturalist representation settings,
prediction agreement is 98.43--99.99\%; remaining disagreements concentrate
at small cosine margins.

Finally, we test whether the local score mechanism propagates to learned
representations. Following the distinction between representation and
classifier learning in long-tailed recognition \citep{kang2020decoupling},
we evaluate the learned features with a common linear classifier.
In contrastive-only training with the training-frequency
prior, complete Pure Angular equalization improves Encoder and Projection
readouts at every tested CIFAR-10/100 imbalance factor (10, 50, and 100),
with positive changes on ImageNet-LT. These results connect the
high-dimensional score to decision geometry, feature gradients, and
representations learned through controlled training.

Our contributions are:
\begin{itemize}
\item \textbf{High-dimensional angular gain.} We establish a uniform expansion
of the exact vMF score and its feature derivative under joint dimension--
concentration growth, identifying the leading gain $g_c=A_c/\tau$.
\item \textbf{Decision and learning mechanisms.} We derive how gain enters
Softmax competition, pairwise boundaries, and feature gradients, and
quantitatively validate these predictions on real representations.
\item \textbf{Gain realizations and cosine geometry.} We separate three
score realizations and characterize complete equalization through a
finite-dimensional margin condition for agreement with cosine prototypes.
\end{itemize}

\section{High-Dimensional vMF Angular Gain}
\label{sec:gain-foundation}

\subsection{Exact vMF score in ProCo}
For $p\ge3$, let $z,\mu_c\in\mathbb S^{p-1}$ be a normalized query and a class mean
direction. A class feature $X_c\sim\vmf(\mu_c,\kappa_c)$ has density
$\exp(\kappa_c\mu_c^\top x)/Z_p(\kappa_c)$, where
$Z_p(\kappa)=(2\pi)^{p/2}I_{p/2-1}(\kappa)/\kappa^{p/2-1}$.
Writing $t=1/\tau$, $\rho=\mu_c^\top z$, and $\nu=p/2-1$, the exact
log-moment-generating score in ProCo is
\begin{equation}
 q_c(\rho;\tau)=\log\mathbb E\exp(tz^\top X_c)
 =\log\frac{Z_p(\widetilde\kappa_c)}{Z_p(\kappa_c)},\qquad
 \widetilde\kappa_c=\sqrt{\kappa_c^2+2\kappa_ct\rho+t^2}.
 \label{eq:main-exact}
\end{equation}
The mean resultant length is
$A_c=I_{\nu+1}(\kappa_c)/I_\nu(\kappa_c)$, so that
$\mathbb E X_c=A_c\mu_c$. For a class offset $b_c$, the contrastive loss
uses logits $s_c=b_c+q_c$ and $L_y=-\log\operatorname{softmax}(s)_y$.
An empirical prior corresponds to $b_c=\gamma\log\pi_c$.
Here $\mu_c$ describes the class mean direction and $A_c$ its mean
directional concentration. We ask whether the angular response of the
exact score is determined by the shared temperature alone at high dimension.

\subsection{From the fixed-dimensional limit to the high-dimensional regime}
At fixed $p$ and bounded $t$, taking $\kappa_c\to\infty$ gives
$q_c(\rho;\tau)=t\rho+O(\kappa_c^{-1})$: the usual shared-temperature
cosine limit. This limit is not uniform when dimension grows with
concentration. The classical large-order Bessel expansion
\citep{Olver_1954} instead gives, for $\kappa_c=\nu\lambda_c$,
\begin{equation}
 A_c=r_0(\lambda_c)+O(\nu^{-1}),\qquad
 r_0(\lambda)=\frac{\lambda}{1+\sqrt{1+\lambda^2}}.
 \label{eq:main-ratio}
\end{equation}
When concentration and dimension remain of the same order, $A_c$ generally
does not approach one. The shared temperature therefore retains a
class-dependent angular response. We make this statement precise below.
\begin{theorem}[Centered angular response]
\label{thm:centered-gain}
Fix $0<\lambda_-\le\lambda_c\le\lambda_+<\infty$ and
$0\le t\le T<\infty$. Uniformly over $\rho\in[-1,1]$ as $\nu\to\infty$,
\begin{equation}
 \boxed{q_c(\rho;\tau)=q_c(0;\tau)+g_c\rho+r_c(\rho;\tau),
 \qquad g_c=\frac{A_c}{\tau},\quad r_c(0;\tau)=0,}
 \label{eq:main-decomposition}
\end{equation}
where $\sup_\rho(|r_c|+|\partial_\rho r_c|)=O(\nu^{-1})$.
The cosine-zero intercept also satisfies $q_c(0;\tau)=O(\nu^{-1})$.
\end{theorem}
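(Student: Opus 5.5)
The plan is to reduce everything to the derivative of $\log Z_p$ and to the Riccati equation satisfied by the Bessel ratio. Write $F(\kappa)=\log Z_p(\kappa)=\tfrac p2\log(2\pi)+\log I_\nu(\kappa)-\nu\log\kappa$, so that $q_c(\rho;\tau)=F(\widetilde\kappa_c)-F(\kappa_c)$. The recurrence $I_\nu'(\kappa)/I_\nu(\kappa)=\nu/\kappa+I_{\nu+1}(\kappa)/I_\nu(\kappa)$ gives $F'(\kappa)=A_p(\kappa):=I_{\nu+1}(\kappa)/I_\nu(\kappa)$, with $A_c=A_p(\kappa_c)$ and $0\le A_p\le 1$. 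Differentiating $\widetilde\kappa_c$ in $\rho$ then gives the exact identity
\begin{equation*}
 \partial_\rho q_c(\rho;\tau)=t\,A_p(\widetilde\kappa_c)\,\frac{\kappa_c}{\widetilde\kappa_c}.
\end{equation*}
The whole theorem will follow once I show that the right-hand side equals $tA_c+O(\nu^{-1})$ uniformly in $\rho\in[-1,1]$.

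\textbf{Step 1 (geometry of $\widetilde\kappa_c$).} With $\kappa_c=\nu\lambda_c$ and $\lambda_c\in[\lambda_-,\lambda_+]$, we have $\widetilde\kappa_c^2=(\kappa_c+t\rho)^2+t^2(1-\rho^2)$. Hence $\widetilde\kappa_c=\kappa_c+t\rho+O(T^2/\nu)$, and in particular $|\widetilde\kappa_c-\kappa_c|\le T+O(\nu^{-1})$. It also follows that $\kappa_c/\widetilde\kappa_c=1+O(\nu^{-1})$, and that for large $\nu$ every $\widetilde\kappa_c$ lies in the window $K_\nu=[\nu\lambda_-/2,\,2\nu\lambda_+]$. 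All of this is uniform in $\rho$ and in $t\le T$.

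\textbf{Step 2 (the ratio is flat on the $O(1)$ scale).} This is the key point and the expected main obstacle. The shift $\widetilde\kappa_c-\kappa_c$ is only $O(1)$, not small, so I need $A_p'=O(\nu^{-1})$ uniformly on $K_\nu$. For this I use the Riccati equation $A_p'(\kappa)=1-A_p^2-(2\nu+1)A_p/\kappa$, which follows from the Bessel recurrences. Writing $\kappa=\nu\lambda$ and inserting \eqref{eq:main-ratio}, $A_p=r_0(\lambda)+O(\nu^{-1})$, gives
\begin{equation*}
 A_p'(\nu\lambda)=1-r_0(\lambda)^2-\frac{2r_0(\lambda)}{\lambda}+O(\nu^{-1}).
\end{equation*}
Since $r_0=(\sqrt{1+\lambda^2}-1)/\lambda$, a direct computation gives $r_0^2+2r_0/\lambda=1$, so the leading term vanishes. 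The delicate part is that the Olver remainder in \eqref{eq:main-ratio} must be uniform for $\lambda$ in the compact set $[\lambda_-/2,2\lambda_+]$. I would invoke the uniform large-order (Debye) expansion of $I_\nu(\nu\lambda)$, which holds uniformly on compact $\lambda$-sets, rather than a pointwise statement. The mean value theorem then gives $A_p(\widetilde\kappa_c)=A_c+O(\nu^{-1})$ uniformly in $\rho$.

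\textbf{Step 3 (assembling the decomposition).} Combining Steps 1 and 2 gives $\partial_\rho q_c=tA_c+O(\nu^{-1})=g_c+O(\nu^{-1})$ uniformly. Define $r_c(\rho;\tau)=q_c(\rho;\tau)-q_c(0;\tau)-g_c\rho$. Then $r_c(0)=0$ and $\partial_\rho r_c=O(\nu^{-1})$, and integrating from $0$ over $|\rho|\le1$ gives $|r_c|=O(\nu^{-1})$. This yields \eqref{eq:main-decomposition} with the stated bound on $\sup_\rho(|r_c|+|\partial_\rho r_c|)$.

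\textbf{Step 4 (the intercept).} By the mean value theorem and $0\le F'=A_p\le1$, we get $0\le q_c(0;\tau)=F(\sqrt{\kappa_c^2+t^2})-F(\kappa_c)\le\sqrt{\kappa_c^2+t^2}-\kappa_c\le t^2/(2\kappa_c)\le T^2/(2\nu\lambda_-)=O(\nu^{-1})$.
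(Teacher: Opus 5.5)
Your proposal is correct and follows essentially the same route as the paper's proof. Both arguments rest on the exact derivative $\partial_\rho q_c=t\,R_\nu(\widetilde\kappa_c)\kappa_c/\widetilde\kappa_c$, on the Riccati identity combined with $1-r_0^2-2r_0/\lambda=0$ to get $R_\nu'=O(\nu^{-1})$ uniformly on an enlarged compact window, on integrating the centered derivative from $\rho=0$, and on $0<R_\nu<1$ for the intercept; your explicit bound $q_c(0;\tau)\le T^2/(2\nu\lambda_-)$ and your care about uniformity on $[\lambda_-/2,2\lambda_+]$ just make the paper's steps slightly more explicit.
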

The full proof appears in Appendix~\ref{app:centered-proof}; the second-order statement is
Theorem~\ref{thm:joint}. The finite-dimensional coefficient $A_c/\tau$
and the asymptotic coefficient $r_0(\lambda_c)/\tau$ differ by
$O(\nu^{-1})$. Neither is asserted to be the exact slope at every cosine.
Indeed, the exact derivative is
\begin{equation}
 \partial_\rho q_c(\rho;\tau)
 =\frac{\kappa_c t}{\widetilde\kappa_c}
   \frac{I_{\nu+1}(\widetilde\kappa_c)}{I_\nu(\widetilde\kappa_c)}.
 \label{eq:main-exact-slope}
\end{equation}
The proof uses the exact derivative and a uniform Bessel-ratio expansion,
controlling the remainder and its first derivative together rather than
differentiating a value-only error bound. The full proof and second-order
joint expansion are in the appendix.

\subsection{Consequences for decision and learning}
Put $a_c=b_c+q_c(0;\tau)$ and $\ell_c=a_c+g_c\mu_c^\top z$.
For a fixed number of classes, the theorem gives
$P_c=P_c^{\rm lin}+O(\nu^{-1})$, where
$P^{\rm lin}=\operatorname{softmax}(\ell)$, and
$L_y=-\log P_y^{\rm lin}+O(\nu^{-1})$.
The same decomposition gives the exact pairwise log odds
\begin{equation}
 \log\frac{P_a}{P_b}
 =a_a-a_b+(g_a\mu_a-g_b\mu_b)^\top z+r_a-r_b.
 \label{eq:main-boundary}
\end{equation}
The leading boundary is
\begin{equation}
 (a_a-a_b)+(g_a\mu_a-g_b\mu_b)^\top z=0.
 \label{eq:main-leading-boundary}
\end{equation}
Gain rotates the linear boundary normal; prior and intrinsic intercepts
change its offset. With equal offsets, the boundary on the prototype short
arc solves $g_a\cos u=g_b\cos(\theta-u)$, where
$\theta=\arccos(\mu_a^\top\mu_b)$. Equal gains place this root at
$u=\theta/2$.

Let $\Pi_z=I-zz^\top$ project onto the sphere's tangent space. The exact
and linear-mean loss gradients are connected by
\begin{equation}
 \nabla_{\mathbb S}L_y
 =\Pi_z\sum_c(P_c-\mathbf1_{c=y})
       [g_c+\partial_\rho r_c]\mu_c
 =\Pi_z\sum_c(P_c^{\rm lin}-\mathbf1_{c=y})g_c\mu_c
       +O(\nu^{-1}).
 \label{eq:main-gradient}
\end{equation}
Gain affects learning through two paths: it directly rescales each class
direction in the gradient and changes the Softmax probabilities that weight
all competing directions. Class-dependent angular gain therefore determines
the leading geometry of the high-dimensional score and forms both a
decision scale and a local learning scale.

\section{Controlled Studies of Angular Gain}
\label{sec:controlled-studies}

We test whether the high-dimensional mechanism can be observed, predicted,
and controlled in real models. We examine gain heterogeneity, full-score
predictions, alternative gain realizations, and learned representations.
Tables~\ref{tab:frozen-vmf} and~\ref{tab:angular-training} report its effects
on frozen decisions and learned features, respectively.

\begin{table}[t]
\centering\small
\setlength{\tabcolsep}{3pt}
\caption{\textbf{Representation accuracy after contrastive-only training.}
Long-tail linear-probe accuracy (\%) after feature learning with the training-frequency
prior. All interventions use $\beta=0$. Results report mean $\pm$ sample
standard deviation over three seeds. $\Delta$ is the difference between
the displayed Pure Angular and ProCo means in percentage points; darker
green marks a larger increase.
Readout protocols are given in Section~\ref{sec:new-learning} and
Appendices~\ref{sec:representation-evaluation-protocol} and~\ref{sec:algorithm-hyperparameters}.}
\label{tab:angular-training}
\begin{tabular}{@{}lcccccr@{}}
\toprule
Dataset / IF & Feature & ProCo & \shortstack{Direct\\temperature\\$\beta=0$} & \shortstack{Intercept-\\preserving\\$\beta=0$} & \shortstack{Pure Angular\\$\beta=0$} & $\Delta$\\
\midrule
CIFAR-10 / 10 & Encoder & $86.28 \pm 0.09$ & $86.52 \pm 0.15$ & $86.54 \pm 0.11$ & $86.93 \pm 0.10$ & \cellcolor{gainshade!35.0}$+0.65$\\
 & Projection & $87.10 \pm 0.06$ & $87.55 \pm 0.08$ & $87.65 \pm 0.16$ & $87.96 \pm 0.13$ & \cellcolor{gainshade!43.1}$+0.86$\\
CIFAR-10 / 50 & Encoder & $75.73 \pm 0.12$ & $75.93 \pm 0.19$ & $76.02 \pm 0.10$ & $76.14 \pm 0.17$ & \cellcolor{gainshade!25.8}$+0.41$\\
 & Projection & $77.66 \pm 0.07$ & $77.64 \pm 0.11$ & $78.46 \pm 0.14$ & $78.83 \pm 0.09$ & \cellcolor{gainshade!55.0}$+1.17$\\
CIFAR-10 / 100 & Encoder & $68.24 \pm 0.16$ & $67.89 \pm 0.13$ & $68.13 \pm 0.13$ & $68.43 \pm 0.11$ & \cellcolor{gainshade!17.3}$+0.19$\\
 & Projection & $72.65 \pm 0.10$ & $72.48 \pm 0.14$ & $72.74 \pm 0.08$ & $73.12 \pm 0.16$ & \cellcolor{gainshade!28.1}$+0.47$\\
CIFAR-100 / 10 & Encoder & $53.08 \pm 0.12$ & $53.11 \pm 0.11$ & $53.24 \pm 0.16$ & $53.82 \pm 0.14$ & \cellcolor{gainshade!38.5}$+0.74$\\
 & Projection & $52.89 \pm 0.09$ & $53.24 \pm 0.15$ & $53.09 \pm 0.18$ & $53.75 \pm 0.11$ & \cellcolor{gainshade!43.1}$+0.86$\\
CIFAR-100 / 50 & Encoder & $37.97 \pm 0.15$ & $39.14 \pm 0.17$ & $38.46 \pm 0.17$ & $38.91 \pm 0.13$ & \cellcolor{gainshade!46.2}$+0.94$\\
 & Projection & $37.40 \pm 0.08$ & $37.44 \pm 0.16$ & $38.14 \pm 0.11$ & $37.76 \pm 0.12$ & \cellcolor{gainshade!23.8}$+0.36$\\
CIFAR-100 / 100 & Encoder & $34.62 \pm 0.14$ & $34.11 \pm 0.19$ & $34.32 \pm 0.16$ & $35.39 \pm 0.20$ & \cellcolor{gainshade!39.6}$+0.77$\\
 & Projection & $34.19 \pm 0.10$ & $33.68 \pm 0.17$ & $33.27 \pm 0.12$ & $34.41 \pm 0.09$ & \cellcolor{gainshade!18.5}$+0.22$\\
ImageNet-LT & Encoder & $23.05 \pm 0.12$ & $22.93 \pm 0.15$ & $23.12 \pm 0.16$ & $23.22 \pm 0.19$ & \cellcolor{gainshade!16.5}$+0.17$\\
 & Projection & $36.86 \pm 0.06$ & $37.05 \pm 0.10$ & $37.04 \pm 0.13$ & $37.07 \pm 0.11$ & \cellcolor{gainshade!18.1}$+0.21$\\
\bottomrule
\end{tabular}
\end{table}

\subsection{Does class-dependent gain exist in real representations?}

We fit classwise vMF statistics to frozen ProCo projection features from
CIFAR-100-LT, ImageNet-LT, and iNaturalist 2018
\citep{krizhevsky2009learning,deng2009imagenet,liu2019largescale,vanhorn2018inaturalist}.
At the shared temperature $\tau_0=0.1$, the resulting $A_c/\tau_0$ varies
across classes. Figure~\ref{fig:gain-discovery} displays all-class
heterogeneity together with a real class-pair example. These results confirm that a shared temperature leaves substantial
class-dependent angular-scale heterogeneity in real ProCo representations.

\begin{figure}[t]
\centering
\includegraphics[width=139.7mm]{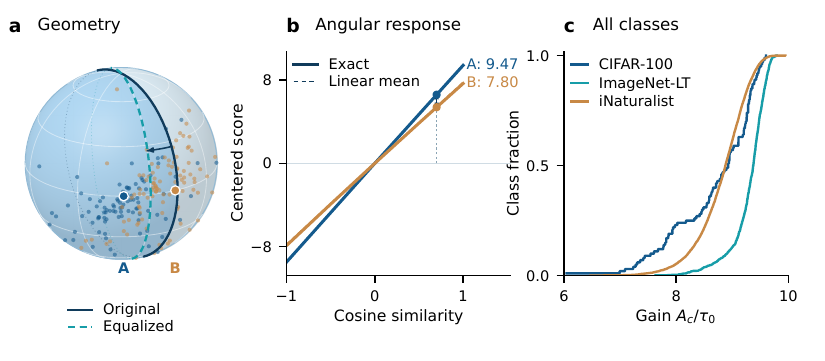}
\caption{\textbf{Class-dependent angular gain at a shared temperature.}
(a) A training-selected CIFAR-100-LT pair in a boundary-sign-preserving
spherical projection. Solid and dashed curves show original and equalized
linear-mean boundaries at zero prior; shading shows the original regions.
(b) Exact centered scores and linear-mean approximations for the same pair.
(c) All-class empirical distributions of $A_c/\tau_0$ in three projection
spaces, with equal weight per class.}
\label{fig:gain-discovery}
\end{figure}

\subsection{Can high-dimensional theory predict the full vMF score?}

\paragraph{Numerical regime check.}
Independent high-precision Bessel evaluations confirm the predicted
residual orders. Successive joint approximations have median slopes
$-0.999,-2.000,-2.999$ (Figure~\ref{fig:joint-validation}), and the
second-corrected score reaches median absolute error $2.32\times10^{-7}$
in the joint regime. The fixed-dimensional expansion is more accurate
at low dimension and high concentration, consistent with its distinct limit.

\begin{figure}[t]
\centering
\includegraphics[width=112mm]{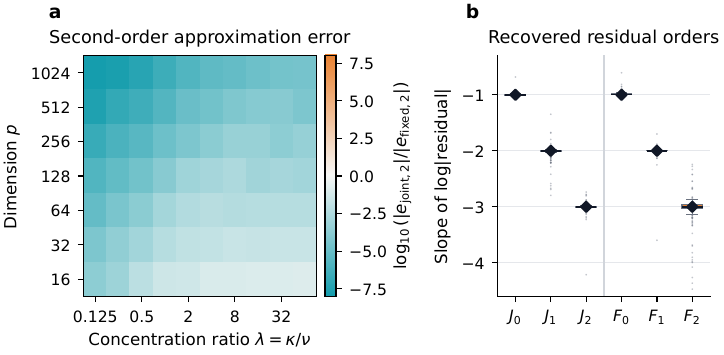}
\caption{\textbf{Accuracy and residual orders of the joint expansion.}
Left: median log error ratio of second-order joint and fixed-dimensional
approximations over the prespecified temperature--cosine grid.
Right: observed residual orders for successive approximations; diamonds
mark theoretical orders, boxes show interquartile ranges, and whiskers use
$1.5$ interquartile ranges. Formulas, validity domains, and fitting details
are in Theorem~\ref{thm:joint} and Appendix~\ref{app:modern-protocol}.}
\label{fig:joint-validation}\label{fig:phase-domain}
\end{figure}

\paragraph{Real boundaries and gradients.}
We freeze CIFAR-100-LT, ImageNet-LT, and iNaturalist projection features
with dimensions 128, 1,024, and 1,024. On each dataset, 60 pairs are
preselected from training geometry: five highly similar and five random
pairs in each of six shot-group pair types. Every pair is retained,
including pairs without a root on the prototype short arc.
The intervention changes the finite-dimensional target gain from
$g_c$ to the class mean $\bar g$, implemented by $\tau_c=A_c/\bar g$.
Both the exact score and the joint leading approximation are evaluated at
these same temperatures, with the same prototypes and prior.
Figure~\ref{fig:main-boundaries} compares predicted and exact movements.
At zero prior, their displacement directions agree on every pair with a
unique root before and after intervention. The remaining 7, 8, and 11
pairs acquire a root after equalization; none has an original short-arc
root. Root creation is recorded separately from displacement error.
We additionally evaluate all $\binom{10}{2}=45$ unordered class pairs from
an original contrastive-only CIFAR-10-LT IF100 model's frozen 128-dimensional
Projection features. This exhaustive extension uses the same temperature
intervention, zero evaluation prior, and exact/B$_0$ score comparison.
All 45 pairs have unique boundaries before and after intervention, and all
45 movement directions agree; the displacement MAE is $0.0116^\circ$.
Pair inclusion is independent of the measured prediction error.

\begin{figure}[t]
\centering
\includegraphics[width=139.7mm]{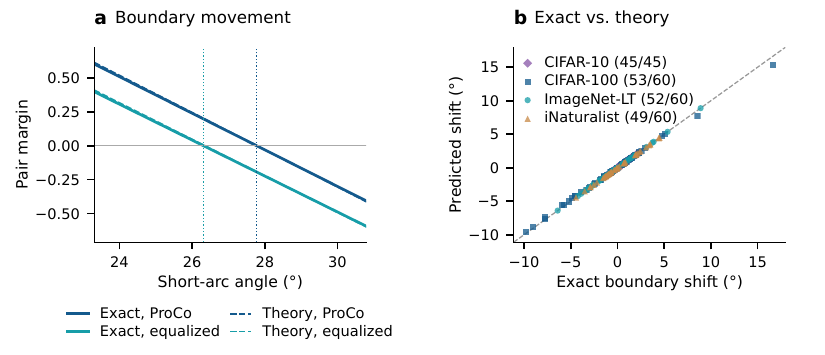}
\caption{\textbf{Predicted and exact boundary displacement.}
Left: exact and joint-leading margins near the roots of a fixed
CIFAR-100-LT pair. Right: theory versus exact displacement for all
preselected pairs with a unique root in both conditions at zero prior.
The legend gives comparable-pair coverage. The CIFAR-10 exhaustive extension has MAE $0.0116^\circ$ (45/45 pairs); for CIFAR-100, ImageNet-LT, and iNaturalist, displacement MAEs are
$0.119^\circ$, $0.0027^\circ$, and $0.0042^\circ$, respectively.}
\label{fig:main-boundaries}\label{fig:boundary-closure}
\end{figure}

On real queries, the linear-mean approximation predicts the direction of
exact margin changes with $98.63\%$, $99.92\%$, and $100\%$ agreement on
CIFAR-100-LT, ImageNet-LT, and iNaturalist, respectively. Leading tangent
gradients also closely track exact gradients at fixed class statistics
(Appendix~\ref{sec:gradient-protocol}). Together, these results connect the
score expansion to observable decision changes and local learning signals.

\subsection{Can angular gain be isolated from other temperature effects?}
\label{sec:realization}

For positive fitted resultants, let $\bar g=C^{-1}\sum_cg_c$ and
$g_c^\star=\bar g+\beta(g_c-\bar g)$. The original ProCo
coefficient is recovered at $\beta=1$ and complete linear-mean equalization
at $\beta=0$. We compare three realizations with the original score
$q_c^0(\rho)=q_c(\rho;\tau_0)$, using $\tau_c^\star=A_c/g_c^\star$:
\begin{align}
 q_c^{\rm temp}(\rho)&=q_c(\rho;\tau_c^\star),
 \label{eq:temperature-control}\\
 q_c^{\rm preserve}(\rho)&=q_c(\rho;\tau_c^\star)
  -q_c(0;\tau_c^\star)+q_c(0;\tau_0),
 \label{eq:intercept-control}\\
 q_c^{\rm angular}(\rho)&=q_c^0(\rho)+(g_c^\star-g_c)\rho.
 \label{eq:Pure Angular-control}
\end{align}
For $\beta\ne1$, direct temperature adjustment generally changes the intercept, linear coefficient,
and nonlinear remainder in~\eqref{eq:main-decomposition}.
Intercept preservation restores the original cosine-zero score but leaves
the new remainder. Pure Angular editing preserves both the original
intercept and remainder, changing only the stated linear coefficient.
It does not force all exact derivatives in~\eqref{eq:main-exact-slope}
to coincide.

Writing $h_c^0=q_c(0;\tau_0)$, $h_c^\star=q_c(0;\tau_c^\star)$,
and $r_c^0,r_c^\star$ for the corresponding remainders, the exact differences are
\begin{equation}
 q_c^{\rm temp}-q_c^{\rm preserve}=h_c^\star-h_c^0,
 \qquad q_c^{\rm preserve}-q_c^{\rm angular}=r_c^\star-r_c^0.
 \label{eq:realization-differences}
\end{equation}
These identities separately identify temperature-induced intercept changes
and finite-dimensional remainder changes, rather than combining them into
a single temperature effect.

\paragraph{Complete equalization and cosine prototypes.}
At $\beta=0$, define $\delta_c(z)=q_c^0(\rho_c)-g_c\rho_c=h_c^0+r_c^0(\rho_c)$.
Then $q_c^{\rm angular}=\bar g\rho_c+\delta_c$ exactly.
Let $m_{\cos}(z)=\rho_{(1)}-\rho_{(2)}$ be the largest-to-second-largest
cosine gap and $D(z)=\max_c\delta_c-\min_c\delta_c$. 
\begin{equation}
 \bar g\,m_{\cos}(z)>D(z)
 \quad\Longrightarrow\quad
 \arg\max_c q_c^{\rm angular}(z)=\arg\max_c\rho_c.
 \label{eq:cosine-agreement}
\end{equation}
Indeed, no residual difference can reverse the leading winner's margin.
Under Theorem~\ref{thm:centered-gain} and $\bar g$ bounded away from zero,
disagreement is confined to an $O(\nu^{-1})$ cosine-margin band.

\paragraph{Frozen classification and the cosine control.}
We compare original vMF, cosine prototypes, and complete Pure Angular
scores on the same frozen features and fitted class statistics
(Table~\ref{tab:frozen-vmf}). Encoder and Projection features are normalized
for spherical scoring, with zero prior offset throughout.

Cosine prototypes closely track Pure Angular equalization across all sixteen
conditions: the accuracy difference is $-0.170$ to $+0.110$ points
(Pure Angular minus cosine), with 98.43--99.99\% prediction agreement.
The overwhelming majority of disagreements occur in the lowest
cosine-margin region.
CIFAR-10 IF50/100 provides the complementary case: equalization decreases
Encoder/Projection accuracy by $0.75/0.14$ and $0.74/0.23$ points,
respectively, while retaining 99.46--99.95\% agreement with cosine.
The shared-scale cosine connection therefore holds empirically across both
improvements and decreases relative to the original vMF classifier.
Appendix~\ref{app:cosine-equivalence} retains all three interventions and
per-query checks.

\paragraph{A diagnostic failure state.}
In a CIFAR-100 IF10 uniform-prior state, direct temperature equalization
collapses predictions to one class and raises cross-entropy from $2.25$ to $72.34$.
The affected class's cosine-zero score rises from $0.389$ to $79.146$.
Restoring the original intercept reduces cross-entropy to $2.15$ and
restores prediction diversity. Pure Angular editing gives essentially the
same recovery (Figure~\ref{fig:realization}). Temperature-induced intercept and finite-dimensional response changes
can therefore substantially alter the complete score, while Pure Angular
editing isolates its leading angular component.

\begin{figure}[t]
\centering
\includegraphics[width=120mm]{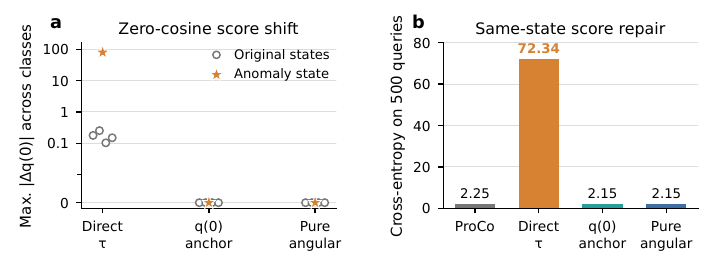}
\caption{\textbf{Intercept drift under matched gain interventions.}
(a) Maximum absolute classwise change in the cosine-zero score
(linear scale below 0.01, logarithmic above).
Open circles denote four separately fixed original ProCo states; stars
denote interventions on the independently trained IF10 uniform-prior
temperature-failure state.
(b) Exact cross-entropy on the same 500 queries from the failure state.
Within each state, queries, statistics, and prior are fixed; the three
interventions share $g_c^*=\bar g$.}
\label{fig:realization}
\end{figure}

\begin{table}[t]
\centering\small
\setlength{\tabcolsep}{5pt}
\caption{\textbf{Frozen classification after complete gain equalization.}
Accuracy and cosine--Pure Angular prediction agreement (\%) on identical
features and class statistics, with zero evaluation offset and $\beta=0$.
CIFAR and ImageNet use original contrastive-only models; iNaturalist uses
the official ProCo model. Full realization comparisons are in Appendix~\ref{app:cosine-equivalence};
evaluation protocols are given in Appendices~\ref{sec:controlled-protocols}
and~\ref{sec:frozen-numerical-protocols}.}
\label{tab:frozen-vmf}
\begin{tabular}{@{}lccccr@{}}
\toprule
Dataset / IF & Feature & Original vMF & Cosine & Pure Angular & Agreement\\
\midrule
CIFAR-10 / 10 & Encoder & 85.21 & 85.82 & 85.90 & 99.74\\
 & Projection & 87.85 & 87.84 & 87.83 & 99.99\\
CIFAR-10 / 50 & Encoder & 81.47 & 80.76 & 80.72 & 99.46\\
 & Projection & 80.72 & 80.58 & 80.58 & 99.95\\
CIFAR-10 / 100 & Encoder & 78.76 & 78.19 & 78.02 & 99.51\\
 & Projection & 77.81 & 77.59 & 77.58 & 99.93\\
CIFAR-100 / 10 & Encoder & 48.87 & 52.48 & 52.50 & 98.76\\
 & Projection & 54.38 & 55.06 & 55.15 & 99.32\\
CIFAR-100 / 50 & Encoder & 39.96 & 44.24 & 44.35 & 98.43\\
 & Projection & 42.90 & 43.57 & 43.63 & 99.00\\
CIFAR-100 / 100 & Encoder & 36.77 & 41.53 & 41.59 & 98.52\\
 & Projection & 39.80 & 40.84 & 40.90 & 98.65\\
ImageNet-LT & Encoder & 41.39 & 45.98 & 45.97 & 99.70\\
 & Projection & 46.14 & 46.39 & 46.39 & 99.88\\
iNaturalist 2018 & Encoder & 57.61 & 64.75 & 64.76 & 99.89\\
 & Projection & 63.07 & 65.40 & 65.42 & 99.84\\
\bottomrule
\end{tabular}
\end{table}

On CIFAR-100, ImageNet-LT, and iNaturalist,
local cross-entropy gradients are also close:
the stacked-query relative difference between Pure Angular and cosine is
1.45--1.62\% on CIFAR-100 and 0.12--0.15\% on ImageNet-LT/iNaturalist,
with median gradient cosines above 0.99995. Complete results and finite-dimensional bounds are in
Appendix~\ref{app:cosine-equivalence}.

\subsection{Does angular gain affect learned representations?}
\label{sec:new-learning}

The frozen experiments show that class-dependent angular gain changes the
full vMF score, decision boundaries, and local feature gradients. We now
test whether this local mechanism propagates to representations learned
through training.

The main experiments use an objective containing only the probabilistic
contrastive loss. Within each long-tail setting, original ProCo and the
alternative angular-gain realizations use the same data and optimization
configuration. After training, we perform linear evaluation separately on
Encoder and Projection representations. Table~\ref{tab:angular-training}
reports means and sample standard deviations over three seeds for all four
methods. Reported improvements compare the displayed Pure Angular and
ProCo means.

With the training-frequency prior, complete Pure Angular equalization
improves both Encoder and Projection representations at all CIFAR-10 and
CIFAR-100 imbalance factors: IF10, IF50, and IF100
(Table~\ref{tab:angular-training}). Improvements range from $+0.19$ to
$+1.17$ percentage points on CIFAR-10 and from $+0.22$ to $+0.94$ points
on CIFAR-100. All six long-tail settings and twelve representation
readouts exhibit positive changes.

These results show that controlled changes to the leading angular gain
influence feature learning beyond their immediate effect on fixed scores.
Direct temperature adjustment and intercept preservation do not exhibit
the same consistent improvement pattern as Pure Angular, further showing
that the complete score realization affects the resulting representations.

On ImageNet-LT, original ProCo achieves $23.05\pm0.12\%$ and
$36.86\pm0.06\%$ Encoder/Projection accuracy, while complete Pure Angular
equalization achieves $23.22\pm0.19\%$ and $37.07\pm0.11\%$, respectively,
corresponding to mean changes of $+0.17/+0.21$ percentage points.
The single-run experiment amplifying between-class gain differences to
$\beta=1.5$ yields $23.56/37.76\%$. Angular gain therefore remains an
effective variable for representation learning at scale, and the best
gain structure need not correspond to complete equalization.

Together, the frozen-score, boundary, local-gradient, and end-to-end
training results establish a continuous evidence chain:
\[
\boxed{\text{Angular gain}\,\longrightarrow\,\text{Score response}
\,\longrightarrow\,\text{Boundaries and gradients}
\,\longrightarrow\,\text{Learned representations}.}
\]

\section{Discussion and Scope}

\subsection{Class frequency and concentration}
The class mean resultant length $A_c$ describes directional geometry in a
given representation space, whereas the class count $n_c$ describes the
amount of statistical support. They play different roles in the model.

For independent unit-vector samples with population mean $A\mu$, the
sample mean vector length $R_n$ satisfies
\[
\mathbb E[R_n^2]=A^2+\frac{1-A^2}{n}.
\]
Finite sample size therefore affects the estimation of concentration
statistics without changing their population definition. Similarly, when
the current query is included in the class statistics used for scoring,
its leading-order self-inclusion effect is
\[
\Delta_{\mathrm{self}}=\frac{1-A^2}{\tau n}.
\]
These finite-sample effects can be more pronounced in long-tail classes
and affect practical probabilistic scores together with the class prior.
We examine these statistical effects through support subsampling and
frozen replay, respectively; complete results appear in the appendix
(Section~\ref{sec:finite-sample-diagnostics}).

\subsection{The vMF score model and real representation distributions}
Our theory directly analyzes the exact probabilistic score at a given vMF
statistical state. For fixed $(\mu_c,\kappa_c)$, its score expansion,
decision boundaries, and local gradients can be computed and tested directly.

Real neural representations need not follow a single vMF distribution
exactly. Multimodal class structure, feature normalization, and dynamically
updated online statistics can all lead to departures from that assumption.
Two levels must therefore be distinguished: the score, boundary, and local
gradient mechanisms are deterministic results at a given statistical
state, whereas population-error and random-sampling conclusions require
additional distributional assumptions.

End-to-end experiments further show that these local score mechanisms can
propagate to the final learned representation. We do not interpret the
local gradient expansion as a closed-form description of the complete
training trajectory.

\subsection{How should angular gain be intervened on?}
We identify the leading angular coefficient
\[
g_c=\frac{A_c}{\tau},
\]
but its different realizations have different effects on the complete
probabilistic score.

Direct classwise temperature adjustment changes the leading angular term,
cosine-zero intercept, and finite-dimensional remainder together.
Intercept preservation removes the intercept change while retaining the
new finite-dimensional response. Pure Angular edits only the linear
angular term identified by the theory, preserving the original intercept
and remainder.

After complete equalization, the leading Pure Angular score reduces to a
shared-scale cosine prototype classifier. The 98.43--99.99\% prediction
agreement on real frozen representations further indicates that
class-dependent gain is one of the main sources of departure from
shared-scale cosine geometry in the original vMF score. The remaining
finite-dimensional corrections primarily affect low-margin regions.

This result suggests a broader principle: interventions on a theoretical
variable in a probabilistic score should distinguish the intended change
from the other score structures altered by its parameterization.
Pure Angular provides a controlled intervention aligned with the
high-dimensional decomposition, allowing angular gain itself to be
studied separately.

\section{Conclusion}
We study class-dependent angular scales in high-dimensional vMF
probabilistic contrastive scores. When representation dimension and class
concentration grow jointly, the exact score retains the leading angular
gain $g_c=A_c/\tau$. A shared temperature therefore does not imply a
shared angular response.

This gain enters Softmax competition, pairwise decision boundaries, and
feature gradients. Experiments on real CIFAR-LT, ImageNet-LT, and
iNaturalist representations show that the high-dimensional theory
accurately predicts boundary movements and local gradient changes under
the full vMF score.

We distinguish direct temperature adjustment, intercept preservation,
and Pure Angular as three gain realizations. Complete equalization of
the leading gain brings the probabilistic classifier toward a shared-scale
cosine prototype rule, whose finite-dimensional departures are characterized
by the cosine margin and residual score terms. Controlled training further
shows that changes to this angular scale propagate to learned representations.

More broadly, these results show that in distributional representation
learning, a global temperature need not define a global similarity scale:
the effective response can also depend on class-specific distributional
geometry. In the vMF case, this interaction is captured explicitly by
$A_c/\tau$.

\subsection*{AI use statement}
Generative AI tools assisted with code development and debugging,
theoretical derivations and proof drafting, literature searches and reference
organization, and manuscript drafting, editing, and language polishing.
The authors are responsible for the final content, including the correctness
of the code, mathematical claims, and reported results.

\subsection*{Reproducibility statement}
The appendix provides derivations, dataset and estimator protocols, fixed
intervention settings, checkpoint selection, and readout details. The research
artifacts retain complete grids, class-pair selections, training trajectories,
fixed inputs, and source data for the figures.

\bibliographystyle{arxiv_references}
\bibliography{references}

\clearpage
\appendix
\numberwithin{equation}{section}
\raggedbottom
\begingroup
\begin{center}
{\Large\bfseries Does a Shared Temperature Imply a Shared\\
Angular Scale in Probabilistic Contrastive Learning?\par}
\medskip
{\Large Supplementary Material\par}
\end{center}

\medskip
{\large\bfseries Contents\par}

\definecolor{contentsblue}{RGB}{47,112,171}
\hypersetup{linkcolor=contentsblue}
\setcounter{tocdepth}{2}

\makeatletter
\renewcommand*\l@section[2]{%
  \vspace{4pt}%
  \@dottedtocline{1}{0em}{1.5em}{\bfseries #1}{\bfseries #2}}
\renewcommand*\l@subsection[2]{%
  \@dottedtocline{2}{1.5em}{2.3em}{#1}{#2}}

\small
\newif\ifshowappendixcontents
\showappendixcontentsfalse

\let\originalcontentsline\contentsline
\def\appendixstart{appendix.A}
\def\appendixstartalt{section.A}

\renewcommand\contentsline[4]{%
  \def\currentanchor{#4}%
  \ifx\currentanchor\appendixstart
    \showappendixcontentstrue
  \fi
  \ifx\currentanchor\appendixstartalt
    \showappendixcontentstrue
  \fi
  \ifshowappendixcontents
    \originalcontentsline{#1}{#2}{#3}{#4}%
  \fi}

\@starttoc{toc}
\makeatother
\endgroup

\clearpage
\section{Related Work}
\label{nav:related_work:1}
\label{sec:related-work}

\paragraph{Contrastive learning under class imbalance.}
Instance-based frameworks such as SimCLR and MoCo learn representations by
contrasting augmented observations \citep{chen2020simclr,he2020moco};
supervised contrastive learning also treats same-label examples as positives
\citep{khosla2020supervised}.
Long-tailed extensions address the unequal influence of classes through
learnable class centers in PaCo \citep{cui2021parametric}, class averaging and
class completion in Balanced Contrastive Learning \citep{zhu2022balanced},
and uniformly distributed targets in Targeted Supervised Contrastive
Learning \citep{li2022targeted}.
These methods motivate studying how class geometry enters contrastive
competition. Our question concerns the angular response already induced by
integrating similarity over a class distribution, and how to isolate this
response within the exact probabilistic score.

\paragraph{Probabilistic contrastive learning.}
ProCo introduces the vMF-based probabilistic contrastive formulation studied
in this work, where classwise scores are obtained from ratios of vMF
normalizers rather than from point-prototype cosine similarities
\citep{du2024probabilisticcontrastivelearninglongtailed}.
Related formulations reuse or extend this distributional scoring structure in
long-tailed recognition and out-of-distribution settings.
PATT adopts the same normalization-ratio form for long-tailed OOD detection
\citep{he2024longtailedoutofdistributiondetectionprioritizing}, while BAPE
uses an explicit vMF generative Bayes classifier with class priors
\citep{du2025bapelearningexplicitbayes}.
VMF-GOS combines closely related vMF score terms with virtual-outlier
construction for long-tailed OOD detection
\citep{peng2026vmfgosgeometryguidedvirtualoutlier}.
MARVEL derives an explicit $O(\kappa^{-1})$ correction in a fixed-dimensional,
large-concentration regime
\citep{anudeep2026marvelmarginawarerobustvon}.
Our analysis instead studies the exact ProCo log-MGF score when dimension and
concentration grow jointly, and follows the resulting class-dependent angular
response through multiclass competition, decision geometry, and learning
gradients.

\paragraph{Class priors, temperature, and contrastive geometry.}
Class-prior adjustment and angular scaling act on different components of a
classifier.
Related long-tail methods rebalance losses using the effective number of
samples \citep{cui2019classbalanced}, introduce label-distribution-aware
margins \citep{cao2019ldam}, or account for label-distribution shift through
Balanced Softmax \citep{ren2020balanced}.
Frequency-based logit adjustment modifies additive class offsets
\citep{menon2021longtail}, whereas the concentration-dependent factor
$A_c/\tau$ studied here changes the angular response of the probabilistic
score.
This distinction is important in long-tailed settings, where class frequency
can simultaneously affect the chosen prior and the statistical uncertainty of
estimated class geometry.

Temperature has also been studied as a global control of contrastive hardness
and sample weighting
\citep{wang2021behaviour}, while alignment and uniformity characterize the
geometry induced by contrastive representation learning on the hypersphere
\citep{wang2020alignment}.
These perspectives concern a shared temperature or global representation
geometry.
In contrast, the vMF score introduces an additional class-dependent angular
scale through the class resultant length, even when the nominal temperature is
shared.

\paragraph{Prototype geometry and representation evaluation.}
Prototypical Networks classify using distances to class prototypes
\citep{snell2017prototypical}, while proxy-based metric learning uses learned
representative points to construct a training objective
\citep{movshovitz2017proxies}.
CosFace and ArcFace explicitly modify cosine or angular margins in normalized
classifiers \citep{wang2018cosface,deng2019arcface}.
Our cosine-equivalence result characterizes when an edited distributional
score agrees with a shared-scale cosine prototype rule, including the
finite-dimensional residual that can change low-margin predictions.

Decoupling representation learning from classifier learning is an established
approach to long-tailed recognition \citep{kang2020decoupling}.
This distinction is also essential for interpreting our experiments:
frozen vMF rescoring tests a decision rule on unchanged features, whereas a
common linear evaluator tests features produced by different training
objectives. Improvements in these two evaluations support different claims.

\paragraph{Directional statistics and high-dimensional asymptotics.}
Mixtures of vMF distributions provide a classical model for directional
clustering and connect distributional modeling to cosine-based clustering
\citep{banerjee2005clustering}.
Our analysis builds on classical results for directional distributions and
modified Bessel functions, including large-order asymptotic expansions
\citep{Olver_1954}, computation and bounds for adjacent Bessel ratios
\citep{Amos_1974,Hornik_2013,Ruiz_Antolin_2016}, and vMF parameter estimation
\citep{Sra_2011}.
High-dimensional directional statistics further characterize concentration
phenomena on growing-dimensional spheres
\citep{Dryden_2005}.

Using these tools, we derive a uniform joint dimension--concentration expansion
of the exact probabilistic contrastive score and its first derivative.
This lets the same high-dimensional quantity be propagated from the classwise
score to Softmax probabilities, pairwise decision boundaries, and feature
gradients.
The resulting analysis complements classical directional asymptotics by
identifying their operational consequences inside a modern probabilistic
contrastive objective.
\section{The Exact Probabilistic Contrastive Score}
\label{sec:exact-score}

Let
\[
p\ge 3,
\qquad
\nu=\frac p2-1,
\]
and let $\sigma$ denote surface measure on
$\mathbb S^{p-1}$.

For a natural parameter
\[
\eta\in\mathbb R^p,
\]
define the spherical partition function and its log-partition function by
\[
Z_p(\eta)
=
\int_{\mathbb S^{p-1}}
\exp(\eta^\top x)\,d\sigma(x),
\qquad
\Phi_p(\eta)
=
\log Z_p(\eta).
\]

For class $j$, write
\[
\eta_j
=
\kappa_j\mu_j,
\qquad
\|\mu_j\|=1,
\]
and let the normalized query satisfy
\[
\|z\|=1.
\]
Define
\[
\rho_j
=
\mu_j^\top z,
\qquad
t
=
\tau^{-1}.
\]
After exponential tilting by the query,
\[
\widetilde\kappa_j
=
\|\kappa_j\mu_j+tz\|
=
\sqrt{
\kappa_j^2
+
2\kappa_j t\rho_j
+
t^2
}.
\]

The exact class score is
\[
q_j(z)
=
\Phi_p(\eta_j+tz)
-
\Phi_p(\eta_j),
\]
or equivalently
\begin{equation}
q_j(z)
=
\log
\mathbb E_{X_j\sim\mathrm{vMF}_p(\mu_j,\kappa_j)}
\exp(tz^\top X_j).
\label{eq:exact-logmgf}
\end{equation}

Thus the ProCo score is a log moment-generating function of the directional
similarity $z^\top X_j$.

\subsection{Exponential-Family and Bregman Representation}

Define the Bregman divergence generated by $\Phi_p$ as
\[
D_{\Phi_p}(a,b)
=
\Phi_p(a)
-
\Phi_p(b)
-
\nabla\Phi_p(b)^\top(a-b).
\]

\begin{proposition}[Exponential-family and Bregman view]
\label{prop:bregman-view}
Let
\[
R_\nu(x)
=
\frac{I_{\nu+1}(x)}{I_\nu(x)}.
\]
Then
\begin{equation}
q_j
=
t\rho_j R_\nu(\kappa_j)
+
D_{\Phi_p}(\eta_j+tz,\eta_j),
\label{eq:bregman-score}
\end{equation}
with
\[
D_{\Phi_p}(\eta_j+tz,\eta_j)\ge 0.
\]
\end{proposition}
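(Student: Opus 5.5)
The identity follows directly from the definition of $D_{\Phi_p}$ once we know the gradient of the log-partition function. Nonnegativity then follows from convexity of $\Phi_p$.

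\emph{Step 1: the gradient.} We compute $\nabla\Phi_p(\eta_j)$. Because $Z_p$ is an integral of $\exp(\eta^\top x)$ over a compact set, we may differentiate under the integral sign. This gives
\[
\nabla\Phi_p(\eta)=\frac{\int x\,e^{\eta^\top x}\,d\sigma(x)}{Z_p(\eta)},
\]
which is the mean of $\mathrm{vMF}_p$ with natural parameter $\eta$. By rotational invariance this mean is parallel to $\eta/\|\eta\|$. To find its length, note that $Z_p(\eta)$ depends only on $\kappa=\|\eta\|$ and equals $c_p\kappa^{-\nu}I_\nu(\kappa)$. The Bessel identity
\[
\frac{d}{d\kappa}\bigl(\kappa^{-\nu}I_\nu(\kappa)\bigr)=\kappa^{-\nu}I_{\nu+1}(\kappa)
\]
then gives $\frac{d}{d\kappa}\log Z_p=R_\nu(\kappa)$. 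Hence $\nabla\Phi_p(\eta_j)=R_\nu(\kappa_j)\mu_j$, consistent with $\mathbb E X_j=A_j\mu_j$ stated earlier.

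\emph{Step 2: the identity.} Set $a=\eta_j+tz$ and $b=\eta_j$, so that $a-b=tz$. The definition of the Bregman divergence reads
\[
\Phi_p(a)-\Phi_p(b)=\nabla\Phi_p(b)^\top(a-b)+D_{\Phi_p}(a,b).
\]
By \eqref{eq:exact-logmgf}, the left side is $q_j$. By Step 1, the linear term is $R_\nu(\kappa_j)\mu_j^\top tz=t\rho_jR_\nu(\kappa_j)$. This is exactly \eqref{eq:bregman-score}.

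\emph{Step 3: nonnegativity.} It suffices that $\Phi_p$ is convex. Its Hessian is the covariance matrix of $X\sim\mathrm{vMF}_p(\eta)$, obtained by differentiating under the integral a second time. A covariance matrix is positive semidefinite. Alternatively, Hölder's inequality shows that $\log\int e^{\eta^\top x}\,d\sigma$ is convex in $\eta$. A convex differentiable function lies above its tangent plane, so $D_{\Phi_p}\ge0$.

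The only step needing care is Step 1, specifically the radial derivative of $\log Z_p$. The normalization constant $(2\pi)^{p/2}$ in the paper differs from the surface-measure integral by a constant factor independent of $\kappa$, and such a factor cancels in every log-ratio and gradient used here. The case $\eta=0$ does not arise because $\kappa_j>0$. The argument never uses the explicit form of $\widetilde\kappa_j$.
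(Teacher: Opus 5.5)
Your proof is correct and follows the paper's argument. As in the paper, you identify $\nabla\Phi_p(\eta_j)$ with the vMF mean $R_\nu(\kappa_j)\mu_j$, read the identity off the definition of $D_{\Phi_p}$, and get nonnegativity from convexity of the log-partition function; you also make explicit the Bessel identity $\frac{d}{d\kappa}\bigl(\kappa^{-\nu}I_\nu(\kappa)\bigr)=\kappa^{-\nu}I_{\nu+1}(\kappa)$ and the covariance (or H\"older) proof of convexity that the paper leaves implicit, and your constant-factor caveat is unnecessary (though harmless), since the paper's $(2\pi)^{p/2}I_\nu(\kappa)/\kappa^{\nu}$ is exactly the surface-measure integral $\int_{\mathbb S^{p-1}}e^{\kappa\mu^\top x}\,d\sigma(x)$.
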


\begin{proof}
For the vMF exponential family,
\[
\nabla\Phi_p(\eta_j)
=
\mathbb E[X_j].
\]
Rotational symmetry gives
\[
\mathbb E[X_j]
=
R_\nu(\kappa_j)\mu_j.
\]
Adding and subtracting the first-order Taylor term of
$\Phi_p(\eta_j+tz)$ around $\eta_j$ yields
\[
q_j
=
tz^\top\nabla\Phi_p(\eta_j)
+
D_{\Phi_p}(\eta_j+tz,\eta_j),
\]
which becomes Eq.~\eqref{eq:bregman-score}.
Nonnegativity follows from convexity of the log-partition function.
\end{proof}

The decomposition separates the projected class mean from higher-order
distributional corrections:
\[
t\rho_jR_\nu(\kappa_j)
=
t\,z^\top\mathbb E[X_j].
\]
The remaining Bregman term contains covariance and higher-order fluctuation
effects.

More explicitly,
\begin{equation}
D_{\Phi_p}(\eta_j+tz,\eta_j)
=
\int_0^t
(t-u)\,
\operatorname{Var}_{\eta_j+uz}
(z^\top X_j)
\,du.
\label{eq:bregman-variance}
\end{equation}

\subsection{Endpoint and Integral Representations}

Define
\[
\phi_\nu(x)
=
\log I_\nu(x)
-
\nu\log x.
\]
Since
\[
\phi_\nu'(x)
=
R_\nu(x),
\]
the score admits the equivalent endpoint representation
\[
q_j
=
\phi_\nu(\widetilde\kappa_j)
-
\phi_\nu(\kappa_j),
\]
and hence the integral representation
\begin{equation}
q_j
=
\int_{\kappa_j}^{\widetilde\kappa_j}
R_\nu(x)\,dx.
\label{eq:score-integral}
\end{equation}

These forms are useful for the two asymptotic regimes developed in
Appendix~\ref{sec:asymptotics}.

\subsection{Exact Derivatives}

The log-partition form also gives exact derivatives without approximation.
Along the temperature-tilt path,
\[
\partial_t q_j
=
z^\top
\mathbb E_t[X_j],
\]
and
\[
\partial_t^2 q_j
=
\operatorname{Var}_t(z^\top X_j),
\]
where $\mathbb E_t$ and $\operatorname{Var}_t$ are taken under the tilted
natural parameter
\[
\eta_j+tz.
\]

The exact ambient feature gradient is
\begin{equation}
\nabla_z q_j
=
t\,\mathbb E_t[X_j].
\label{eq:exact-score-gradient}
\end{equation}

Using the vMF mean under the tilted state gives
\begin{equation}
\nabla_z q_j
=
tR_\nu(\widetilde\kappa_j)
\frac{\kappa_j\mu_j+tz}
{\widetilde\kappa_j}.
\label{eq:exact-vector-gradient}
\end{equation}

Since
\[
\rho_j=\mu_j^\top z,
\]
the exact derivative with respect to cosine similarity is
\begin{equation}
\partial_{\rho_j}q_j
=
\frac{\kappa_j t}
{\widetilde\kappa_j}
R_\nu(\widetilde\kappa_j).
\label{eq:exact-rho-derivative}
\end{equation}

These exact derivatives are used to establish the
$C^1$ high-dimensional control in Theorem~1 and Appendix~\ref{sec:centered-proof}.

\subsection{Multiclass Logits and Loss}

For a class offset $b_j$, define
\[
s_j
=
b_j+q_j.
\]
The multiclass probability and per-example cross-entropy are
\begin{equation}
P_j
=
\frac{e^{s_j}}
{\sum_k e^{s_k}},
\qquad
L_y
=
-s_y
+
\log\sum_j e^{s_j}.
\label{eq:multiclass-loss}
\end{equation}

When the model uses a frequency-based prior offset, we write
\[
b_j
=
\gamma\log\pi_j.
\]

We distinguish three class quantities throughout the analysis:
\[
n_j,
\qquad
\kappa_j,
\qquad
A_j
=
R_\nu(\kappa_j).
\]
Here $n_j$ is statistical support, $\kappa_j$ is the fitted concentration,
and $A_j$ is the mean resultant length.
Their finite-sample relationships are analyzed separately in
Appendix~\ref{sec:finite-sample}.
\section{High-Dimensional Asymptotic Expansions}
\label{sec:asymptotics}

The fixed-dimensional large-concentration limit and the joint
dimension--concentration limit lead to different angular scales.
This section makes the two regimes explicit and gives the higher-order
joint expansion used throughout the paper.

\subsection{Fixed Dimension and Large Concentration}
\label{sec:fixed-dimensional-expansion}

We begin with the classical fixed-dimensional regime.
Let $p$, $t=\tau^{-1}$, and $\rho$ be fixed, and let
$\kappa\to\infty$.

\begin{proposition}[Fixed-dimensional score]
\label{prop:fixed-dimensional-score}
For fixed $p$, $t$, and $\rho$,
\[
q(\rho;\tau)
=
t\rho
+
\frac{Q_1}{\kappa}
+
\frac{Q_2}{\kappa^2}
+
O(\kappa^{-3}),
\]
where
\[
Q_1
=
\frac{t^2(1-\rho^2)}{2}
-
\frac{(p-1)t\rho}{2},
\]
and
\[
Q_2
=
\frac{(p-1)(p-3)t\rho}{8}
+
\frac{(p-1)t^2(2\rho^2-1)}{4}
-
\frac{t^3\rho(1-\rho^2)}{2}.
\]
\end{proposition}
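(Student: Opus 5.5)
The plan is to work from the endpoint representation $q=\phi_\nu(\widetilde\kappa)-\phi_\nu(\kappa)$ with $\phi_\nu(x)=\log I_\nu(x)-\nu\log x$. At fixed $p$, $\nu$ is fixed, so the classical large-argument (Hankel) expansion of $I_\nu$ applies at both endpoints:
\[
\log I_\nu(x)=x-\tfrac12\log(2\pi x)+\log\Bigl(1-\frac{c_1}{x}+\frac{c_2}{x^2}+O(x^{-3})\Bigr),\qquad c_1=\frac{4\nu^2-1}{8}.
\]
Both $\kappa$ and $\widetilde\kappa=\kappa+O(1)$ tend to infinity, so this is legitimate. Substituting gives
\[
\phi_\nu(x)=x-\Bigl(\nu+\tfrac12\Bigr)\log x-\tfrac12\log(2\pi)-\frac{c_1}{x}+\frac{c_2-c_1^2/2}{x^2}+O(x^{-3}).
\]
Hence
\[
q=(\widetilde\kappa-\kappa)-\frac{p-1}{2}\log\frac{\widetilde\kappa}{\kappa}-c_1\Bigl(\frac1{\widetilde\kappa}-\frac1\kappa\Bigr)+O(\kappa^{-3}).
\]
The $x^{-2}$ terms contribute only $O(\kappa^{-3})$ because $\widetilde\kappa^{-2}-\kappa^{-2}=O(\kappa^{-3})$. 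I also use $\nu+\tfrac12=(p-1)/2$ and $4\nu^2-1=(p-1)(p-3)$.

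The second step expands the tilted concentration with $\varepsilon=2t\rho/\kappa+t^2/\kappa^2$. Writing $\widetilde\kappa=\kappa\sqrt{1+\varepsilon}$, a Taylor expansion to third order in $1/\kappa$ gives
\[
\widetilde\kappa-\kappa=t\rho+\frac{t^2(1-\rho^2)}{2\kappa}-\frac{t^3\rho(1-\rho^2)}{2\kappa^2}+O(\kappa^{-3}).
\]
Similarly, $\tfrac12\log(1+\varepsilon)$ expands as
\[
\log\frac{\widetilde\kappa}{\kappa}=\frac{t\rho}{\kappa}+\frac{t^2(1-2\rho^2)}{2\kappa^2}+O(\kappa^{-3}).
\]
Finally, $1/\widetilde\kappa-1/\kappa=-t\rho/\kappa^2+O(\kappa^{-3})$.

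The third step collects orders. At order $1$ the only term is $t\rho$. At order $\kappa^{-1}$ the terms are $t^2(1-\rho^2)/2$ and $-(p-1)t\rho/2$, which give $Q_1$. At order $\kappa^{-2}$ the three contributions are:
\begin{itemize}
\item $-t^3\rho(1-\rho^2)/2$ from $\widetilde\kappa-\kappa$;
\item $(p-1)t^2(2\rho^2-1)/4$ from the logarithm;
\item $c_1t\rho=(p-1)(p-3)t\rho/8$ from the Hankel correction.
\end{itemize}
Their sum is exactly $Q_2$.

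The main obstacle is keeping the error terms honest rather than the algebra. Two points need care. First, the Hankel remainder must hold uniformly for $x$ in $[\kappa-t,\kappa+t]$, which is fine at fixed $\nu$ once $\kappa$ is large. Second, subtracting the two endpoint expansions must not lose an order. For that second point I would expand the differences of $x^{-1}$ and $x^{-2}$ terms via the mean value theorem, which gains a factor $\widetilde\kappa-\kappa=O(1)$ times one extra power of $\kappa^{-1}$. This yields the stated $O(\kappa^{-3})$ remainder, with a constant depending on $p$, $t$ and $\rho$ but bounded uniformly for $\rho\in[-1,1]$.
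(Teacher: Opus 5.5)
Your proposal is correct. I checked each step: the expansions of $\widetilde\kappa-\kappa$ and of $\log(\widetilde\kappa/\kappa)$ are right, as are the identities $\nu+\frac12=(p-1)/2$ and $4\nu^2-1=(p-1)(p-3)$. The three $\kappa^{-2}$ contributions do sum to $Q_2$.

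The paper states this proposition without proof. It only sets up the endpoint representation $q=\phi_\nu(\widetilde\kappa)-\phi_\nu(\kappa)$ and the integral representation $q=\int_\kappa^{\widetilde\kappa}R_\nu(x)\,dx$ in Appendix~B as the tools for both regimes. Your use of the endpoint form with the Hankel expansion is a natural way to fill that gap.

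An equivalent route, closer to how the paper proves the joint expansion, is to integrate
\[
R_\nu(x)=1-\frac{p-1}{2x}+\frac{(p-1)(p-3)}{8x^2}+O(x^{-3})
\]
over $[\kappa,\widetilde\kappa]$. This gives the same three terms. It is slightly more economical: the interval has length $|\widetilde\kappa-\kappa|\le t$, so the ratio remainder integrates directly to $O(\kappa^{-3})$. In your version you instead have to carry $\phi_\nu$ one order further, through the $c_2-c_1^2/2$ term, only to see it drop out.

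Your remark that the remainder constant depends on $p$ is the point the paper draws on next. The coefficients grow with $p$; for example, $Q_2$ contains $(p-1)(p-3)t\rho/8$. The expansion is therefore not ordered when $\kappa$ and $p$ grow at the same rate.
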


The leading term is therefore
\[
q(\rho;\tau)
=
\frac{\rho}{\tau}
+
O(\kappa^{-1}),
\]
which is the familiar shared-temperature cosine limit.

At $t=1$, the first correction overlaps the fixed-dimensional
large-concentration result considered in MARVEL
\citep{anudeep2026marvelmarginawarerobustvon}.
The important point for the present work is that the coefficients
$Q_1,Q_2,\ldots$ depend explicitly on $p$.
Consequently, this expansion is not ordered when $p$ grows at the
same rate as $\kappa$.

\subsection{Joint Dimension--Concentration Growth}
\label{sec:joint-expansion}

We now let dimension and concentration grow together.
Write
\[
\kappa=\nu\lambda,
\qquad
\nu=\frac p2-1,
\]
with
\[
0<\lambda_- \le \lambda \le \lambda_+<\infty.
\]

Define
\[
S=\sqrt{1+\lambda^2},
\]
and
\[
r_0(\lambda)
=
\frac{\lambda}{1+S},
\]
\[
r_1(\lambda)
=
-\frac{\lambda}{2S^2},
\]
\[
r_2(\lambda)
=
\frac{\lambda(4-\lambda^2)}{8S^5}.
\]

The leading coefficient $r_0$ is the limiting mean resultant length:
\[
R_\nu(\nu\lambda)
=
r_0(\lambda)
+
O(\nu^{-1}).
\]

\begin{theorem}[Uniform joint score expansion]
\label{thm:joint-score-expansion}\label{thm:joint}
Fix
\[
0<\lambda_-<\lambda_+<\infty,
\qquad
0\le t\le T<\infty.
\]
Uniformly for
\[
\lambda\in[\lambda_-,\lambda_+],
\qquad
\rho\in[-1,1],
\]
the exact vMF log-MGF score satisfies
\[
q
=
B_0
+
\frac{B_1}{\nu}
+
\frac{B_2}{\nu^2}
+
O(\nu^{-3}),
\]
where
\[
B_0
=
t\rho r_0,
\]
\[
B_1
=
t\rho r_1
+
\frac{t^2}{2}
\left[
(1-\rho^2)\frac{r_0}{\lambda}
+
\rho^2 r_0'
\right],
\]
and
\[
\begin{aligned}
B_2
={}&
t\rho r_2
+
\frac{t^2}{2}
\left[
(1-\rho^2)\frac{r_1}{\lambda}
+
\rho^2 r_1'
\right]
\\
&+
\frac{t^3}{6}
\left[
\rho^3 r_0''
+
3\rho(1-\rho^2)\frac{r_0'}{\lambda}
-
3\rho(1-\rho^2)\frac{r_0}{\lambda^2}
\right].
\end{aligned}
\]
The same expansion is $C^1$-uniform on the sphere for fixed class
statistics, with derivatives taken in the tangent space.
\end{theorem}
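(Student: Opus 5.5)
We work from the integral representation \eqref{eq:score-integral}, $q=\int_{\kappa}^{\widetilde\kappa}R_\nu(x)\,dx$, and rescale $x=\nu y$:
\[
q=\nu\int_{\lambda}^{\widetilde\lambda}R_\nu(\nu y)\,dy,
\qquad
\widetilde\lambda=\frac{\widetilde\kappa}{\nu}=\sqrt{\lambda^2+2\lambda\epsilon\rho+\epsilon^2},
\qquad
\epsilon=\frac{t}{\nu}.
\]
Since $0\le t\le T$, the upper endpoint satisfies $\widetilde\lambda-\lambda=O(\nu^{-1})$ uniformly. For $\nu$ large, $\widetilde\lambda$ therefore stays in a compact interval $[\lambda_-/2,2\lambda_+]$ bounded away from zero.

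The first ingredient is a uniform large-order expansion of the Bessel ratio on that compact set,
\[
R_\nu(\nu y)=r_0(y)+\frac{r_1(y)}{\nu}+\frac{r_2(y)}{\nu^2}+E_\nu(y),
\qquad
\sup_y\bigl(|E_\nu|+|E_\nu'|\bigr)=O(\nu^{-3}).
\]
I would derive it from Olver's uniform expansion \citep{Olver_1954} of $I_\nu(\nu y)$ and $I_{\nu+1}(\nu y)$. To line up the two orders, write $I_{\nu+1}(\nu y)=I_{\mu}(\mu y')$ with $\mu=\nu+1$ and $y'=\nu y/(\nu+1)$, then expand in $1/\nu$. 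An alternative route is the Riccati equation $R'=1-R^2-\frac{2\nu+1}{x}R$. Substituting the ansatz $R=\sum_k r_k(y)\nu^{-k}$ into it gives the algebraic equation $r_0^2+\frac{2r_0}{y}-1=0$, whose positive root is $r_0=y/(1+S)$. It also gives linear equations for $r_1$ and $r_2$; checking these against the stated formulas is routine algebra. The derivative bound on $E_\nu$ follows either from the Riccati equation itself, which expresses $R'$ through $R$, or from the differentiability of Olver's expansion in $y$.

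Next, insert this expansion into the integral and Taylor expand in $\delta=\widetilde\lambda-\lambda$. We have $\int_\lambda^{\lambda+\delta}f=f\delta+f'\delta^2/2+f''\delta^3/6+O(\delta^4)$, and $\delta$ itself expands as
\[
\delta=\epsilon\rho+\frac{\epsilon^2(1-\rho^2)}{2\lambda}-\frac{\epsilon^3\rho(1-\rho^2)}{2\lambda^2}+O(\epsilon^4).
\]
Multiplying by $\nu$ and collecting powers of $\nu^{-1}$ yields $B_0=t\rho r_0$. The $\nu^{-1}$ terms combine as $t\rho r_1+\tfrac{t^2}{2}[(1-\rho^2)r_0/\lambda+\rho^2r_0']$. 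The same bookkeeping gives $B_2$: the $r_0''\rho^3$ term comes from $\delta^3$, the cross term $3\rho(1-\rho^2)r_0'/\lambda$ comes from $\delta^2$, and $-3\rho(1-\rho^2)r_0/\lambda^2$ comes from the cubic term of $\delta$. The error contributed by $E_\nu$ is $\nu\cdot O(\nu^{-3})\cdot|\delta|=O(\nu^{-3})$. All constants depend only on $\lambda_\pm$ and $T$, which gives uniformity in $\lambda$ and $\rho$.

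For the $C^1$ statement, I would not differentiate the value expansion. Instead, I would use the exact derivative \eqref{eq:exact-rho-derivative},
\[
\partial_\rho q=\frac{\lambda t}{\widetilde\lambda}\,R_\nu(\nu\widetilde\lambda).
\]
Apply the ratio expansion at $\widetilde\lambda$, whose derivative-controlled remainder keeps the error uniform, and expand $\widetilde\lambda$ and $\lambda/\widetilde\lambda$ in $\epsilon$. This gives $\partial_\rho B_0+\nu^{-1}\partial_\rho B_1+\nu^{-2}\partial_\rho B_2+O(\nu^{-3})$; the agreement of coefficients can be checked either termwise or by noting that both sides come from the same analytic function of $(\epsilon,\rho)$. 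The tangent gradient is $\Pi_z\mu\,\partial_\rho q$ with $\|\Pi_z\mu\|\le1$, so the $C^1$ uniformity on the sphere follows.

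The main obstacle is the uniform Bessel-ratio expansion to third order with derivative control. Going from Olver's expansion of each Bessel function to their ratio requires handling the order shift $\nu\to\nu+1$ uniformly. I would first try the Riccati route: prove that the truncated series is an approximate solution with defect $O(\nu^{-3})$, then use the stability of the Riccati equation (linearization coefficient $\approx-2(r_0+1/y)\nu<0$) to bound the true error. The initial datum needed for that comparison is supplied by a single application of Olver's leading-order asymptotics.
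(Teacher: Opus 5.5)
Your proposal is correct and is essentially the paper's proof. The shared steps are the integral representation $q=\int_\kappa^{\widetilde\kappa}R_\nu$, a uniform $O(\nu^{-3})$ Bessel-ratio remainder from the Riccati equation, Taylor expansion of the endpoint $\widetilde\lambda-\lambda$ and of the integrand, and the endpoint derivative $\partial_\rho q=(\lambda t/\widetilde\lambda)R_\nu(\nu\widetilde\lambda)$ for the $C^1$ claim; the paper runs its barrier argument from $e(0)=0$, since $R_\nu(0)=0=r_0(0)=r_1(0)=r_2(0)$, rather than seeding it with Olver.

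You never actually use the bound on $E_\nu'$ that you call the main obstacle. Both $\nu\int_\lambda^{\widetilde\lambda}E_\nu$ and $(\lambda t/\widetilde\lambda)E_\nu(\widetilde\lambda)$ need only $\sup|E_\nu|$, which is exactly how the paper avoids it. Moreover, reading $E_\nu'$ off the Riccati equation costs a factor $\nu$ and gives only $O(\nu^{-2})$ unless you carry the expansion one order further.
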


\begin{proof}[Proof of Theorem~\ref{thm:joint-score-expansion}]
Set $\varepsilon=\nu^{-1}$ and $d=\widetilde\kappa-\kappa$.
The reverse triangle inequality gives $|d|\le t\le T$.
For sufficiently large $\nu$, every point between $\lambda$ and
$\lambda+\varepsilon d$ lies in a fixed compact subset of $(0,\infty)$.
Lemma~\ref{lem:bessel-ratio-remainder}, proved independently in
Appendix~\ref{sec:additional-proofs}, gives
\[
R_\nu(\nu u)=a_\varepsilon(u)+e_\nu(u),\qquad
 a_\varepsilon=r_0+\varepsilon r_1+\varepsilon^2r_2,
 \qquad |e_\nu(u)|\le\varepsilon^3.
\]
The exact integral representation in Appendix~\ref{sec:exact-score} implies
\[
q=\int_0^d R_\nu(\kappa+s)\,ds
 =Q_\nu+E_\nu,\qquad
Q_\nu=\int_0^d a_\varepsilon(\lambda+\varepsilon s)\,ds,
\qquad |E_\nu|\le T\varepsilon^3.
\]
These are oriented integrals when $d<0$.
Expanding the square root defining $\widetilde\kappa$ yields
\[
d=t\rho+\varepsilon\frac{t^2(1-\rho^2)}{2\lambda}
-\varepsilon^2\frac{t^3\rho(1-\rho^2)}{2\lambda^2}
+O_{C^1(\rho)}(\varepsilon^3).
\]
The compact lower bound on $\lambda$ makes this Taylor remainder uniform
also for $\rho=\pm1$ and $t=0$. Taylor expansion of the smooth integrand gives
\[
Q_\nu=(r_0+\varepsilon r_1+\varepsilon^2r_2)d
 +\frac{\varepsilon}{2}(r_0'+\varepsilon r_1')d^2
 +\frac{\varepsilon^2}{6}r_0''d^3
 +O_{C^1(\rho)}(\varepsilon^3).
\]
Indeed, the omitted integrand and its endpoint contribution are bounded
by $C\varepsilon^3$ because $|s|\le T$ and the required derivatives of
$r_0,r_1,r_2$ are bounded on that compact interval.
Substitution of the expansion of $d$ and collection of powers of
$\varepsilon$ give exactly $B_0,B_1,B_2$ in the theorem.

It remains to control the derivative of the exact remainder, rather than
formally differentiate a scalar big-$O$ bound. Since $\rho$ enters the
integral only through its endpoint,
\[
\partial_\rho E_\nu
 =e_\nu(\lambda+\varepsilon d)\,\partial_\rho d,
\qquad
\partial_\rho d=\frac{\kappa t}{\widetilde\kappa}.
\]
For $\kappa\ge2T$, $|\partial_\rho d|\le2T$, so
$|\partial_\rho E_\nu|\le2T\varepsilon^3$.
Finally, for any scalar function $f(\rho)$,
$\nabla_{\mathbb S}f=f'(\rho)(I-zz^\top)\mu$ and
$\|(I-zz^\top)\mu\|\le1$. Thus both the score error and its spherical
gradient are uniformly $O(\nu^{-3})$, proving the stated $C^1$ claim.
\end{proof}

The leading score is
\[
B_0
=
\frac{r_0(\lambda)}{\tau}\rho.
\]
Since
\[
A
=
R_\nu(\kappa)
=
r_0(\lambda)
+
O(\nu^{-1}),
\]
we may equivalently write
\[
B_0
=
\frac{A}{\tau}\rho
+
O(\nu^{-1}).
\]

Thus the joint regime retains the class-dependent angular coefficient
\[
g
=
\frac{A}{\tau},
\]
or asymptotically
\[
g
\sim
\frac{r_0(\lambda)}{\tau}.
\]

Only when
\[
\lambda\to\infty
\]
does
\[
r_0(\lambda)\to 1,
\]
recovering the fixed-dimensional cosine scale $1/\tau$.

\subsection{Why the Two Limits Differ}
\label{sec:noncommuting-limits}

The fixed-dimensional and joint limits correspond to different
asymptotic paths.

In the fixed-dimensional regime,
\[
p=\text{constant},
\qquad
\kappa\to\infty,
\]
and the vMF distribution collapses toward its mean direction.
The leading score therefore becomes
\[
q(\rho;\tau)\sim \frac{\rho}{\tau}.
\]

In the joint regime,
\[
\kappa=\nu\lambda,
\qquad
\nu\to\infty,
\]
and a finite value of $\lambda$ leaves a nontrivial resultant
$r_0(\lambda)<1$.
Dimension-dependent contributions that appear at successive orders in the
fixed-$p$ expansion are resummed into
\[
r_0,\qquad r_1,\qquad r_2,\ldots
\]
in the joint expansion.

This is the origin of the class-dependent angular scale studied in the
main text.

\subsection{Independent Derivation from Directional Cumulants}
\label{sec:cumulant-derivation}

The same coefficients can be recovered without expanding the score
endpoint directly.

Let
\[
Y=z^\top X,
\qquad
R=R_\nu(\kappa).
\]
Directional derivatives of the vMF log-partition function give
\[
\mathbb E[Y]
=
\rho R,
\]
\[
\operatorname{Var}(Y)
=
(1-\rho^2)\frac{R}{\kappa}
+
\rho^2 R',
\]
and
\[
\operatorname{cum}_3(Y)
=
\rho^3 R''
+
3\rho(1-\rho^2)
\left(
\frac{R'}{\kappa}
-
\frac{R}{\kappa^2}
\right).
\]

Hence
\[
q
=
t\mathbb E[Y]
+
\frac{t^2}{2}\operatorname{Var}(Y)
+
\frac{t^3}{6}\operatorname{cum}_3(Y)
+
O(\nu^{-3}).
\]

Substituting
\[
R_\nu(\nu\lambda)
=
r_0
+
\frac{r_1}{\nu}
+
\frac{r_2}{\nu^2}
+
O(\nu^{-3})
\]
recovers exactly the coefficients
$B_0,B_1,B_2$ above.

This representation also gives an interpretation of the expansion:
the leading term comes from the projected class mean,
the first correction from directional covariance,
and the second correction includes directional skewness.

\subsection{Range of the Expansion}
\label{sec:expansion-range}

Theorem~\ref{thm:joint-score-expansion} assumes bounded inverse temperature
$t$.
More generally, the natural small parameter is $t/\nu$.

For
\[
t=o(\sqrt{\nu}),
\]
the leading linear-mean approximation can still have vanishing absolute
error under the same joint scaling.
When
\[
t=\Theta(\nu),
\]
the bounded-$t$ polynomial expansion is no longer appropriate and a
resummed free-energy limit is required.

All theoretical and empirical uses of the joint expansion in this paper
remain within the stated regime.
\section{Proof of the Centered Angular-Response Theorem}
\label{app:centered-proof}
\label{sec:centered-proof}

We prove Theorem~1 from the main text.
Recall
\[
R_\nu(x)
=
\frac{I_{\nu+1}(x)}{I_\nu(x)},
\qquad
A_c
=
R_\nu(\kappa_c),
\]
and let
\[
\kappa_c=\nu\lambda_c,
\qquad
0<\lambda_-\le \lambda_c\le \lambda_+<\infty.
\]
Throughout this section, the inverse temperature
\[
t=\tau^{-1}
\]
is bounded by a fixed constant $T$, and all bounds are uniform over
\[
\rho\in[-1,1].
\]

The exact score derivative from Eq.~\eqref{eq:exact-rho-derivative} is
\[
\partial_\rho q_c(\rho;\tau)
=
\frac{\kappa_c t}{\widetilde\kappa_c}
R_\nu(\widetilde\kappa_c),
\qquad
\widetilde\kappa_c
=
\sqrt{
\kappa_c^2
+
2\kappa_c t\rho
+
t^2
}.
\]

We first control the displacement of the tilted concentration.
Since $\kappa_c=\nu\lambda_c$ and $t$ is bounded,
\[
\widetilde\kappa_c-\kappa_c
=
t\rho+O(\nu^{-1}),
\]
uniformly in $\rho$. Hence
\[
\frac{\kappa_c}{\widetilde\kappa_c}
=
1+O(\nu^{-1}).
\]

Next, the uniform large-order ratio expansion gives
\[
R_\nu(\nu\lambda)
=
r_0(\lambda)
+
O(\nu^{-1}),
\qquad
r_0(\lambda)
=
\frac{\lambda}
{1+\sqrt{1+\lambda^2}},
\]
uniformly for $\lambda$ in compact positive intervals.

To control the derivative of the ratio without differentiating a
value-only asymptotic remainder, we use the exact Riccati identity
\[
R_\nu'(x)
=
1
-
R_\nu(x)^2
-
\frac{2\nu+1}{x}R_\nu(x).
\]
Substituting the uniform ratio expansion and using
\[
1-r_0(\lambda)^2-\frac{2}{\lambda}r_0(\lambda)=0
\]
gives
\[
R_\nu'(\nu\lambda)
=
O(\nu^{-1})
\]
uniformly on a slightly enlarged compact interval containing all
values between $\kappa_c/\nu$ and $\widetilde\kappa_c/\nu$.

Therefore,
\begin{equation}
R_\nu(\widetilde\kappa_c)
=
R_\nu(\kappa_c)
+
O(\nu^{-1}),
\label{eq:bessel-local-control}
\end{equation}
and the exact score derivative satisfies
\[
\partial_\rho q_c(\rho;\tau)
=
tA_c
+
O(\nu^{-1}).
\]

Define the centered remainder
\[
r_c(\rho;\tau)
=
q_c(\rho;\tau)
-
q_c(0;\tau)
-
tA_c\rho.
\]
Then
\[
r_c(0;\tau)=0
\]
and
\[
\partial_\rho r_c(\rho;\tau)
=
\partial_\rho q_c(\rho;\tau)
-
tA_c
=
O(\nu^{-1})
\]
uniformly over $\rho\in[-1,1]$.

Integrating from $0$ to $\rho$ gives
\[
r_c(\rho;\tau)
=
\int_0^\rho
\left[
\partial_v q_c(v;\tau)
-
tA_c
\right]\,dv.
\]
Because the integration interval has length at most one,
\[
\sup_{\rho\in[-1,1]}
|r_c(\rho;\tau)|
=
O(\nu^{-1}),
\]
and therefore
\[
\sup_{\rho\in[-1,1]}
\left(
|r_c(\rho;\tau)|
+
|\partial_\rho r_c(\rho;\tau)|
\right)
=
O(\nu^{-1}).
\]

It remains to control the cosine-zero intercept.
At $\rho=0$,
\[
\widetilde\kappa_c
=
\sqrt{\kappa_c^2+t^2}
=
\kappa_c
+
O(\nu^{-1}).
\]
Using the integral representation
\[
q_c(0;\tau)
=
\int_{\kappa_c}^{\widetilde\kappa_c}
R_\nu(x)\,dx
\]
and the bound
\[
0<R_\nu(x)<1,
\]
we obtain
\[
q_c(0;\tau)
=
O(\nu^{-1}).
\]

Finally, since
\[
tA_c
=
\frac{A_c}{\tau},
\]
the centered score can be written as
\[
q_c(\rho;\tau)
=
q_c(0;\tau)
+
g_c\rho
+
r_c(\rho;\tau),
\qquad
g_c
=
\frac{A_c}{\tau},
\]
with
\[
r_c(0;\tau)=0,
\qquad
\sup_\rho
\left(
|r_c|
+
|\partial_\rho r_c|
\right)
=
O(\nu^{-1}),
\]
and
\[
q_c(0;\tau)=O(\nu^{-1}).
\]
This proves Theorem~1.
\section{Softmax, Decision Geometry, and Pairwise Boundaries}
\label{sec:softmax-geometry}

The class-dependent angular gain derived in the main text affects not only
individual class scores but also multiclass competition and decision
boundaries.
This section records the corresponding deterministic geometry.

\subsection{Leading Multiclass Logits}

For the finite-dimensional centered linearization, define
\[
a_c=b_c+q_c(0;\tau),\qquad \ell_c=a_c+g_c\mu_c^\top z.
\]
At leading joint order, $q_c(0;\tau)=O(\nu^{-1})$ and
$g_c=r_0(\lambda_c)/\tau+O(\nu^{-1})$. Thus the leading logit becomes
\[
\ell_c^0
=
b_c
+
g_c^0\mu_c^\top z,
\qquad
g_c^0
=
\frac{r_0(\lambda_c)}{\tau}.
\]
The corresponding probabilities are
\[
P_c^0
=
\frac{\exp(\ell_c^0)}
{\sum_j \exp(\ell_j^0)}.
\]

From the joint expansion,
\[
q_c
=
g_c^0\mu_c^\top z
+
O(\nu^{-1}),
\]
uniformly over the compact parameter family.
For a fixed number of classes,
Softmax smoothness therefore gives
\[
P_c
=
P_c^0
+
O(\nu^{-1}),
\]
and
\[
L_y
=
L_y^0
+
O(\nu^{-1}),
\]
where
\[
L_y^0
=
-\ell_y^0
+
\log\sum_j \exp(\ell_j^0).
\]

Thus the gain $g_c^0$ changes two objects simultaneously:
the score assigned to class $c$ and the normalized probabilities that
weight all competing classes.

\subsection{Pairwise Equality Surfaces}

Consider two classes $a$ and $b$.
Their leading log-odds difference is
\begin{equation}
\log\frac{P_a^0}{P_b^0}
=
(b_a-b_b)
+
(g_a^0\mu_a-g_b^0\mu_b)^\top z.
\label{eq:leading-pairwise-logodds}
\end{equation}

Hence the leading equality surface is
\begin{equation}
(b_a-b_b)
+
(g_a^0\mu_a-g_b^0\mu_b)^\top z
=
0.
\label{eq:leading-boundary}
\end{equation}

The two terms have distinct geometric roles.

The bias difference
\[
b_a-b_b
\]
changes the offset of the boundary, whereas
\[
g_a^0\mu_a-g_b^0\mu_b
\]
determines its normal direction.

When
\[
g_a^0=g_b^0,
\]
the equality surface reduces to the shared-scale prototype rule
\[
(b_a-b_b)
+
g^0(\mu_a-\mu_b)^\top z
=
0.
\]

At equal offsets,
\[
b_a=b_b,
\]
the boundary is the spherical analogue of a prototype bisector only when
the two gains are equal.

\subsection{Boundary Geometry on the Prototype Short Arc}

Let
\[
\theta
=
\arccos(\mu_a^\top\mu_b)
\in(0,\pi)
\]
be the angle between two class prototypes.

Parameterize the short great-circle arc from class $a$ toward class $b$
by
\[
u\in[0,\theta].
\]
Along this arc,
\[
\mu_a^\top z(u)
=
\cos u,
\]
and
\[
\mu_b^\top z(u)
=
\cos(\theta-u).
\]

At zero class-offset difference, the leading pairwise margin becomes
\begin{equation}
D_0(u)
=
g_a^0\cos u
-
g_b^0\cos(\theta-u).
\label{eq:short-arc-margin}
\end{equation}

A boundary on the short arc therefore solves
\begin{equation}
g_a^0\cos u
=
g_b^0\cos(\theta-u).
\end{equation}

If
\[
g_a^0=g_b^0,
\]
then
\[
u=\frac{\theta}{2},
\]
recovering the midpoint boundary.

If
\[
g_a^0\ne g_b^0,
\]
the root generally shifts away from the midpoint.
Whenever a unique root remains on the short arc, equalizing the gains moves
the boundary toward the prototype that originally had the larger gain.
Equivalently, the class with the larger original angular gain occupies a
larger portion of the short arc.

For sufficiently extreme gain ratios, the equality root can leave the
short arc entirely.
The experiments therefore distinguish boundary displacement from boundary
creation or disappearance.

\subsection{Exact Pairwise Margins}

For finite dimension, let
\[
c
=
\mu_a^\top\mu_b.
\]
The exact class-centre margins are
\[
M_a^{\mathrm{exact}}
=
b_a-b_b
+
q_a(1)
-
q_b(c),
\]
and
\[
M_b^{\mathrm{exact}}
=
b_b-b_a
+
q_b(1)
-
q_a(c).
\]

Their leading joint-regime counterparts are
\[
M_a^0
=
b_a-b_b
+
g_a^0
-
g_b^0c,
\]
and
\[
M_b^0
=
b_b-b_a
+
g_b^0
-
g_a^0c.
\]

These expressions separate prototype geometry, class offsets, and angular
gain in a form that can be evaluated directly on frozen representations.

\subsection{Pairwise Boundaries and Class Count}

Adding additional classes changes multiclass posterior probabilities but does
not change the equality surface of a fixed pair.

Indeed, for any two classes $a$ and $b$,
\[
P_a=P_b
\quad\Longleftrightarrow\quad
s_a=s_b.
\]
The scores of all other classes cancel from this equality.

Class count does, however, affect one-versus-rest posterior thresholds.
Writing
\[
s_\star
=
\max_j s_j,
\]
we have
\[
\log\sum_j e^{s_j}
=
s_\star
+
\log N_{\mathrm{eff}},
\]
where
\begin{equation}
N_{\mathrm{eff}}
=
\sum_j
\exp(s_j-s_\star).
\end{equation}

Thus increasing the number or strength of competitors can move a
posterior threshold such as
\[
P_a=\frac12
\]
without moving the fixed pairwise equality surface
\[
s_a=s_b.
\]

This distinction is used in the controlled geometry experiments.

\subsection{Finite-Dimensional Corrections}

Using the centered decomposition
\[
q_c
=
q_c(0)
+
g_c\rho_c
+
r_c(\rho_c),
\]
the exact pairwise log odds are
\[
\log\frac{P_a}{P_b}
=
[a_a-a_b]
+
(g_a\mu_a-g_b\mu_b)^\top z
+
r_a-r_b,
\]
where
\[
a_c
=
b_c+q_c(0).
\]

The leading decision geometry is therefore perturbed by the finite-dimensional
difference
\[
r_a-r_b.
\]
The uniform bound from Theorem~1 implies
\[
r_a-r_b
=
O(\nu^{-1})
\]
under the joint high-dimensional assumptions.

This relation motivates the boundary-prediction experiments in the main text
and the more detailed root analysis in the experimental appendix.
\begin{figure}[H]
  \centering
  \includegraphics[width=.86\linewidth]{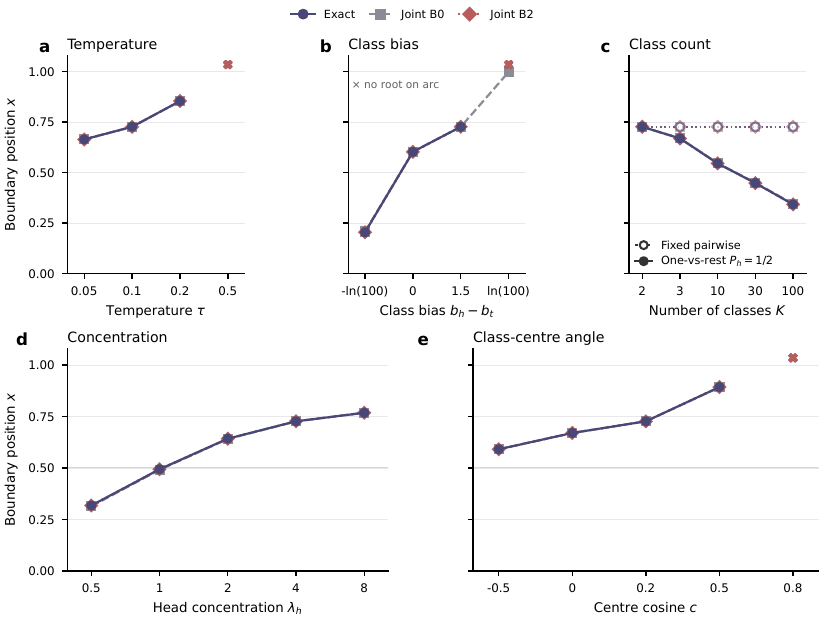}
  \caption{\textbf{Geometric effects of score parameters.} Temperature, bias, concentration, and prototype angle affect pairwise and one-versus-rest boundaries; class count affects only the latter. Joint B0/B2 are truncations through orders zero/two.}
  \label{fig:factor-app}
\end{figure}

\section{Learning Gradients and Conditional Margins}
\label{sec:risk-gradients}

The angular gain affects not only decision geometry but also the local
learning signal. We first derive the exact and leading feature gradients,
and then characterize a population-level consequence under a matched
binary vMF model.

\subsection{Exact and Leading Feature Gradients}
\label{sec:feature-gradients}

Hold the class statistics fixed while differentiating with respect to a
normalized query $z$.
For class $c$, define
\[
\eta_c=\kappa_c\mu_c,
\qquad
v_c=\eta_c+tz,
\qquad
\widetilde\kappa_c=\|v_c\|.
\]

From Appendix~\ref{sec:exact-score}, the exact score gradient is
\begin{equation}
\nabla_z q_c
=
tR_\nu(\widetilde\kappa_c)
\frac{v_c}{\widetilde\kappa_c}.
\label{eq:exact-class-gradient}
\end{equation}

For the multiclass loss
\[
L_y
=
-s_y+\log\sum_c e^{s_c},
\]
the exact ambient gradient is therefore
\begin{equation}
\nabla_z L_y
=
\sum_c
\left(
P_c-\mathbf 1[c=y]
\right)
tR_\nu(\widetilde\kappa_c)
\frac{v_c}{\widetilde\kappa_c}.
\label{eq:exact-loss-gradient}
\end{equation}

Because the representation is normalized, the relevant learning signal lies
in the tangent space of the sphere.
Let
\[
\Pi_z
=
I-zz^\top.
\]
The exact spherical gradient is
\begin{equation}
\nabla_{\mathbb S}L_y
=
\Pi_z\nabla_z L_y.
\end{equation}

Under the joint dimension--concentration regime,
\[
\kappa_c=\nu\lambda_c,
\]
Theorem~1 gives
\[
q_c
=
q_c(0)
+
g_c\rho_c
+
r_c(\rho_c),
\qquad
g_c=\frac{A_c}{\tau},
\]
with
\[
\partial_{\rho_c}r_c
=
O(\nu^{-1}).
\]

Hence
\[
\partial_{\rho_c}q_c
=
g_c
+
O(\nu^{-1}),
\]
and, for a fixed number of classes,
\[
P_c
=
P_c^{\mathrm{lin}}
+
O(\nu^{-1}),
\]
where
\[
P^{\mathrm{lin}}
=
\operatorname{softmax}(\ell),
\qquad
\ell_c
=
a_c
+
g_c\mu_c^\top z.
\]

Substituting into the tangent gradient yields
\begin{equation}
\nabla_{\mathbb S}L_y
=
\Pi_z
\sum_c
\left(
P_c^{\mathrm{lin}}
-
\mathbf 1[c=y]
\right)
g_c\mu_c
+
O(\nu^{-1}).
\label{eq:leading-sphere-gradient}
\end{equation}

Equation~\eqref{eq:leading-sphere-gradient} exposes two distinct roles of
the angular gain.
First, $g_c$ directly rescales the class direction $\mu_c$ in the gradient.
Second, changing $g_c$ modifies the Softmax probabilities and therefore
changes the weights assigned to all competing directions.

Thus even if the gain of the target class is unchanged, modifying the gains
of competing classes changes the local training signal.

\subsection{Ambient and Tangent Decomposition}
\label{sec:ambient-tangent-decomposition}

For $\kappa_c>0$, the exact loss gradient can also be written in the
prototype coordinates.
Let
\[
q_{\rho,c}
=
\partial_{\rho_c}q_c,
\]
and define
\[
C_c
=
\left(
P_c-\mathbf 1[c=y]
\right)
q_{\rho,c}.
\]
Then
\begin{equation}
\nabla_z L_y
=
\sum_c C_c\mu_c
+
z\sum_c\frac{t}{\kappa_c}C_c.
\label{eq:gradient-coordinate-decomposition}
\end{equation}

The second term is radial.
After projection onto the tangent space,
\[
\Pi_z z=0,
\]
and therefore
\begin{equation}
\nabla_{\mathbb S}L_y
=
\Pi_z
\sum_c C_c\mu_c.
\end{equation}

This decomposition explains why the angular derivative is the relevant
quantity for normalized representations even though the exact ambient
gradient contains an additional radial component.

\subsection{Conditional Margin Consequence}
\label{sec:conditional-margin}

We next connect the deterministic boundary geometry to a population-level
classification statement under a matched binary vMF model.

Assume
\[
Z\mid Y=j
\sim
\mathrm{vMF}_p(\mu_j,\nu\lambda_j),
\qquad
j\in\{h,t\},
\]
with fixed $0<t\le T<\infty$, fixed $\lambda_h,\lambda_t>0$, and fixed
class offsets $b_h,b_t$.
Embed $\mu_h$ and $\mu_t$ in a fixed finite-dimensional subspace as
$p\to\infty$.

Write
\[
r_j
=
r_0(\lambda_j),
\qquad
c
=
\mu_h^\top\mu_t,
\]
and let
\[
b
=
b_h-b_t,
\qquad
\delta
=
\frac{b}{t}.
\]

Define
\[
w
=
r_h\mu_h-r_t\mu_t.
\]
The leading binary margin is
\begin{equation}
D_0(z)
=
b+t\,w^\top z.
\label{eq:leading-binary-margin}
\end{equation}

The typical signed margins for the two classes are
\begin{equation}
R_h
=
\delta
+
r_h(r_h-r_tc),
\end{equation}
and
\begin{equation}
R_t
=
r_t(r_t-r_hc)
-
\delta.
\end{equation}

The additional factors $r_h$ and $r_t$ distinguish the behavior of a
random class sample from that of the class prototype itself.

\begin{theorem}[Conditional error transition]
\label{thm:conditional-error-transition}
Under the matched binary model above and a nondegenerate boundary
$w\ne0$, let
$D_\nu$ denote the exact score difference that predicts class $h$ when
$D_\nu\ge 0$.
Then
\[
\Pr_t(D_\nu\ge 0)
\longrightarrow
\begin{cases}
0, & R_t>0,\\[3pt]
\frac12, & R_t=0,\\[3pt]
1, & R_t<0.
\end{cases}
\]
An analogous statement holds for the head-class error.
\end{theorem}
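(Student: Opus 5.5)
The plan is to reduce the exact score difference to a linear statistic of $Z$ via Theorem~\ref{thm:joint-score-expansion}, and then use concentration of vMF projections in high dimension. Write $\rho_j(Z)=\mu_j^\top Z$. By Theorem~\ref{thm:joint-score-expansion}, applied uniformly over $\rho\in[-1,1]$ with $\kappa_j=\nu\lambda_j$ and bounded $t$,
\[
D_\nu(Z)=b+t\bigl(r_h\rho_h(Z)-r_t\rho_t(Z)\bigr)+O(\nu^{-1})=b+t\,w^\top Z+O(\nu^{-1}).
\]
The $O(\nu^{-1})$ term is deterministic and uniform, so it does not depend on the realization of $Z$.

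\textbf{Law of large numbers step.} Under $Y=t$ we have $\mathbb E Z=A_t\mu_t$ with $A_t=r_t+O(\nu^{-1})$. Hence $\mathbb E\,w^\top Z=A_t(r_hc-r_t)$. The cumulant formulas of Appendix~\ref{sec:cumulant-derivation} give
\[
\operatorname{Var}(w^\top Z)=\|w\|^2(1-\rho_w^2)\frac{A_t}{\kappa_t}+\|w\|^2\rho_w^2R_\nu'(\kappa_t),
\]
where $\rho_w$ is the cosine between $w$ and $\mu_t$. Both terms are $O(\nu^{-1})$: the first because $\kappa_t=\nu\lambda_t$, and the second by the Riccati argument of Appendix~\ref{app:centered-proof}. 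Chebyshev's inequality then gives $w^\top Z\to r_t(r_hc-r_t)$ in probability. Therefore
\[
D_\nu\to b+t\,r_t(r_hc-r_t)=-t\,R_t,
\]
using $\delta=b/t$. If $R_t>0$ the limit is negative, so $\Pr_t(D_\nu\ge0)\to0$; if $R_t<0$ the limit is positive and the probability tends to $1$. The head-class statement follows by exchanging the roles of $h$ and $t$.

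\textbf{Critical case $R_t=0$.} Here the mean $b+t\,\mathbb E\,w^\top Z$ equals $-tR_t+O(\nu^{-1})=O(\nu^{-1})$. The plan is to show
\[
\sqrt{\nu}\,\bigl(w^\top Z-\mathbb E\,w^\top Z\bigr)\Rightarrow N(0,s^2),\qquad s^2>0.
\]
After multiplying by $\sqrt\nu$, all deterministic errors are $O(\nu^{-1/2})\to0$, so Slutsky's lemma gives $\Pr_t(D_\nu\ge0)\to\Pr(N\ge0)=\tfrac12$. For the CLT I will use the tangent-normal representation
\[
Z=T\mu_t+\sqrt{1-T^2}\,U,
\]
where $U$ is uniform on the unit sphere of $\mu_t^\perp$ and independent of $T$, and $T$ has density proportional to $e^{\kappa_t x}(1-x^2)^{\nu-1/2}$. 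Then $T$ concentrates at $r_t$ with Gaussian fluctuations of order $\nu^{-1/2}$, by Laplace's method on $\kappa_tx+(\nu-\tfrac12)\log(1-x^2)$, whose maximizer tends to $r_0(\lambda_t)$. Meanwhile $\sqrt{p}\,u^\top U$ is asymptotically normal for fixed unit $u\perp\mu_t$ (Poincaré--Borel). Decompose $w=\alpha\mu_t+w_\perp$. The statistic $w^\top Z=\alpha T+\sqrt{1-T^2}\,w_\perp^\top U$ is then a sum of two independent asymptotically Gaussian pieces at scale $\nu^{-1/2}$.

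\textbf{Main obstacle.} I expect the hard part to be the nondegeneracy $s^2>0$ together with a rigorous Laplace CLT for $T$. The limiting variance is $\alpha^2\sigma_T^2+(1-r_t^2)\|w_\perp\|^2/2$, where $\sigma_T^2>0$ is the Laplace curvature constant; by Riccati it equals $\lim\nu R_\nu'(\nu\lambda_t)=\lambda_t^{-1}r_0(\lambda_t)/\sqrt{1+\lambda_t^2}>0$. Since $w\neq0$, at least one of $\alpha$ and $w_\perp$ is nonzero, and $r_t<1$, so $s^2>0$. I would first try to verify this limit directly from the vMF cumulants, which avoids the Laplace argument. If that fails, I would fall back on a direct local expansion of the density of $T$ around its mode.
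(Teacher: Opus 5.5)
Your proposal is correct and follows essentially the paper's route: reduce $D_\nu$ to $b+t\,w^\top Z+O(\nu^{-1})$ via the uniform joint expansion, get $D_\nu\to -tR_t$ in probability off the critical line, and at $R_t=0$ use the tangent--normal decomposition with a Laplace CLT for $T$ and a Gaussian limit for the orthogonal part. Your Chebyshev step with the exact vMF covariance makes the paper's one-line convergence claim explicit, and your limiting variance $r_0'(\lambda_t)\alpha^2+\tfrac{1-r_t^2}{2}\|w_\perp\|^2>0$ is the paper's $\sigma^2_{\lambda_t}(w)$, since $r_0'(\lambda)=r_0(\lambda)/(\lambda\sqrt{1+\lambda^2})$.

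One caution on your fallback: the cumulant formulas identify only the limiting variance, not Gaussianity. For the critical case you therefore still need the Laplace argument (the paper's projection-limit lemma), or else bounds on all higher cumulants.
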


The transition follows because the score approximation error is
$O(\nu^{-1})$, whereas the critical fluctuations of a fixed projection are
$O(\nu^{-1/2})$.

\subsection{Projection Limit}
\label{sec:projection-limit}

For a fixed bounded vector $v$ and
\[
Z\sim\mathrm{vMF}_p(\mu,\nu\lambda),
\]
the centered projection satisfies
\begin{equation}
\sqrt{\nu}
\left[
v^\top Z
-
r_0(\lambda)v^\top\mu
\right]
\Rightarrow
\mathcal N
\left(
0,
\sigma_\lambda^2(v)
\right),
\label{eq:projection-clt}
\end{equation}
where
\begin{equation}
\sigma_\lambda^2(v)
=
r_0'(\lambda)
(v^\top\mu)^2
+
\frac{1-r_0(\lambda)^2}{2}
\left(
\|v\|^2
-
(v^\top\mu)^2
\right).
\end{equation}

\begin{proof}[Proof of the projection limit]
Write $r=r_0(\lambda)$ and decompose
$Z=U\mu+\sqrt{1-U^2}V$, where $V$ is uniform on the unit sphere in
$\mu^\perp$ and independent of $U$. The density of $U$ is proportional to
\[
e^{\nu\lambda u}(1-u^2)^{\nu-1/2}
 =e^{\nu\phi(u)}(1-u^2)^{-1/2},\qquad
\phi(u)=\lambda u+\log(1-u^2),\quad -1<u<1.
\]
The function $\phi$ is strictly concave, has its unique maximizer at $r$,
and satisfies
\[
\phi''(r)=-\frac{2(1+r^2)}{(1-r^2)^2},\qquad
-\frac1{\phi''(r)}=r_0'(\lambda).
\]
For $u=r+x/\sqrt\nu$, Taylor's formula gives
$\nu[\phi(u)-\phi(r)]=\phi''(r)x^2/2+O(|x|^3/\sqrt\nu)$
on bounded $x$-intervals. Strict concavity bounds the integrand by a
Gaussian in a fixed neighborhood of $r$; outside that neighborhood,
$\phi(u)$ has a strictly smaller maximum. The integrable endpoint factor
$(1-u^2)^{-1/2}$ therefore contributes exponentially negligible tails.
Normalizing these integrals proves
$\sqrt\nu(U-r)\Rightarrow N(0,r_0'(\lambda))$.

Put $a=v^\top\mu$ and $v_\perp=v-a\mu$.
Represent $V=G/\|G\|$ for a standard Gaussian vector $G$ of dimension
$p-1=2\nu+1$, chosen independently of $U$. Since
$\|G\|/\sqrt{2\nu+1}\to1$ in probability,
$\sqrt\nu\,v_\perp^\top V\Rightarrow
N(0,\|v_\perp\|^2/2)$, independently of the longitudinal limit.
Consequently,
\[
\sqrt\nu(v^\top Z-ra)
=a\sqrt\nu(U-r)+\sqrt{1-U^2}\sqrt\nu\,v_\perp^\top V
\]
converges to the sum of those independent normal variables, with precisely
the variance in Eq.~\eqref{eq:projection-clt}.
\end{proof}

\begin{proof}[Proof of Theorem~\ref{thm:conditional-error-transition}]
By Theorem~\ref{thm:joint-score-expansion},
$\sup_z|D_\nu(z)-D_0(z)|\le C/\nu$.
Under class $t$, $D_0(Z)\to-tR_t$ in probability. This proves the limits
zero and one when $R_t$ is positive and negative, respectively.
If $R_t=0$, Eq.~\eqref{eq:projection-clt} gives
$\sqrt\nu D_\nu/t\Rightarrow N(0,\sigma_{\lambda_t}^2(w))$.
For $w\ne0$ this variance is positive, so continuity of the normal
cdf at zero gives the limit $1/2$. For class $h$,
$D_0(Z)\to tR_h$, and the same argument applied to the event $D_\nu<0$
proves the corresponding head-class statement.
\end{proof}

\begin{figure}[H]
  \centering
  \includegraphics[width=.82\linewidth]{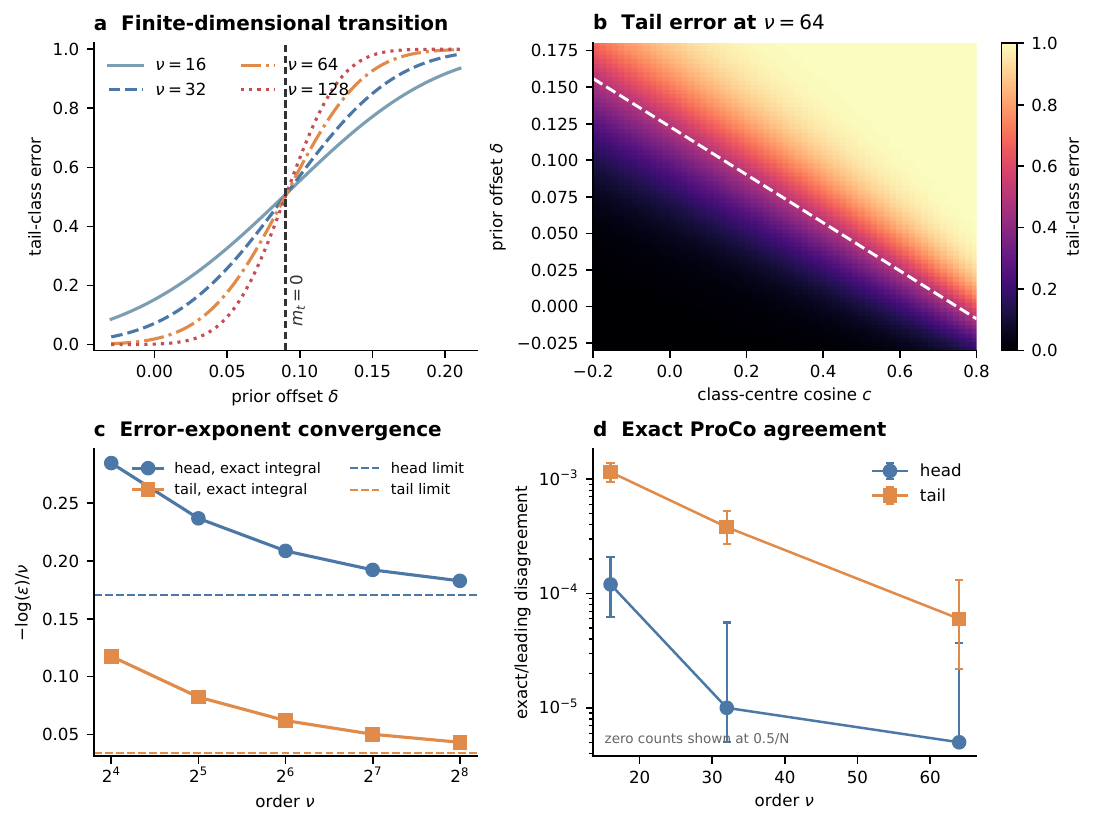}
  \caption{Finite-dimensional approach to the tail-error transition.
  (a,b) The transition narrows around $R_t=0$. (c) Exact one-dimensional
  errors approach the large-deviation exponents. (d) Exact ProCo and the
  leading rule agree.  Zero Monte Carlo counts are displayed at half a count
  and are not treated as zero probability.}
  \label{fig:synthetic-transition}
\end{figure}

\subsection{Large-Deviation Exponent}
\label{sec:large-deviation-exponent}

For completeness, let
\[
W=\|w\|,
\qquad
n=\frac{w}{W},
\qquad
\beta_j=\mu_j^\top n,
\qquad
\alpha_j=\sqrt{1-\beta_j^2},
\]
and
\[
\xi_0
=
-\frac{\delta}{W}.
\]

Define
\[
J(s)
=
\sqrt{1+s^2}
-
\log\left(
1+\sqrt{1+s^2}
\right),
\]
and
\begin{equation}
I_j(\xi)
=
J(\lambda_j)
-
J\left(
\lambda_j\alpha_j\sqrt{1-\xi^2}
\right)
-
\lambda_j\beta_j\xi
-
\log(1-\xi^2).
\label{eq:rate-function}
\end{equation}

If both typical margins are positive, the class-conditional error
probabilities satisfy
\begin{equation}
\lim_{\nu\to\infty}
-\frac{1}{\nu}\log\epsilon_j
=
I_j(\xi_0).
\label{eq:error-exponent}
\end{equation}

When
\[
\lambda_h\ge\lambda_t,
\qquad
b_h\ge b_t,
\qquad
R_t>0,
\]
the exponents obey
\[
I_h(\xi_0)
\ge
I_t(\xi_0),
\]
with equality only in the symmetric case.

\begin{proof}[Derivation of the error exponent and its ordering]
Choose a unit vector $e\perp n$ such that
$\mu_j=\beta_j n+\alpha_j e$ (any such $e$ is valid if $\alpha_j=0$).
The first two coordinates $(\xi,u)=(n^\top Z,e^\top Z)$ have density
proportional to
\[
\exp\{\nu\lambda_j(\beta_j\xi+\alpha_j u)\}
(1-\xi^2-u^2)^{\nu-1},\qquad \xi^2+u^2<1.
\]
Its limiting log-density, before normalization, is
$F_j(\xi,u)=\lambda_j(\beta_j\xi+\alpha_j u)
+\log(1-\xi^2-u^2)$.
Laplace's principle applies on the disk: on compact interior subsets it
is the usual smooth Laplace estimate; a boundary layer with
$1-\xi^2-u^2<\eta$ has integral bounded by a polynomial factor times
$e^{\nu\lambda_j}\eta^\nu$, and can be discarded as $\eta\downarrow0$.
For $a=1-\xi^2$ and $s=\lambda_j\alpha_j\sqrt a$, maximizing over
$u=\sqrt a\,v$ gives
\[
\sup_u F_j(\xi,u)
=\lambda_j\beta_j\xi+\log a+J(s)+\log2-1,
\]
since $\sup_{|v|<1}\{sv+\log(1-v^2)\}=J(s)+\log2-1$.
The unconstrained maximum is $J(\lambda_j)+\log2-1$.
Their difference is exactly $I_j(\xi)$ in Eq.~\eqref{eq:rate-function}.

The function $F_j$ is strictly concave, so the contracted rate is strictly
convex on $(-1,1)$, with unique minimum zero at $\xi=r_j\beta_j$.
Positive typical margins imply
$r_t\beta_t<\xi_0<r_h\beta_h$, in particular $|\xi_0|<1$.
Thus the rate infimum over the head error interval $\xi\le\xi_0$,
or the tail error interval $\xi\ge\xi_0$, equals $I_j(\xi_0)$.
The open and closed error intervals have the same infimum by continuity,
which proves the exponent for the leading rule.
For the exact rule, the uniform score bound gives, for example,
\[
\{D_0<-C/\nu\}\subseteq\{D_\nu<0\}
\subseteq\{D_0<C/\nu\}.
\]
Sandwiching these events by fixed threshold perturbations $\xi_0\pm\epsilon$,
applying the preceding interval bounds, and then letting $\epsilon\downarrow0$
proves Eq.~\eqref{eq:error-exponent} for the exact score as well.

For the exponent ordering, first take $\delta=0$ and define
$K(\lambda,a)=J(\lambda)-J(\lambda\sqrt{1-a^2})$, $0\le a\le1$.
Then $I_h(0)=K(\lambda_h,\beta_h)$ and
$I_t(0)=K(\lambda_t,|\beta_t|)$.
Under the stated assumptions $\beta_h>0>\beta_t$, and
\[
\beta_h-|\beta_t|=\frac{(r_h-r_t)(1+c)}{W}\ge0.
\]
Writing $S(x)=\sqrt{1+x^2}$, direct differentiation gives
\[
\partial_\lambda K
=\frac{S(\lambda)-S(\lambda\sqrt{1-a^2})}{\lambda}\ge0,
\qquad
\partial_a K
=\frac{\lambda^2a}{1+S(\lambda\sqrt{1-a^2})}\ge0.
\]
Hence $I_h(0)\ge I_t(0)$, strictly if $\lambda_h>\lambda_t$.
Increasing $\delta$ from zero moves $\xi_0=-\delta/W$ to the left,
away from the head typical point and toward the tail typical point.
As long as $R_t>0$, strict convexity increases the head rate and decreases
the tail rate. Equality therefore requires both
$\lambda_h=\lambda_t$ and $b_h=b_t$, the symmetric case.
\end{proof}

The synthetic experiments in the supplementary results verify both the
finite-dimensional transition and the exponent convergence.
\begin{figure}[H]
  \centering
  \includegraphics[width=0.66\linewidth]{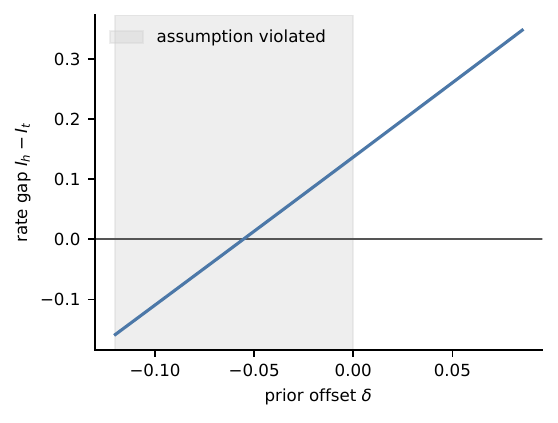}
  \caption{Reversing the prior ordering can reverse the head--tail exponent
  order; the prior condition in the theorem is substantive.}
  \label{fig:prior-control-app}
\end{figure}

\section{Cosine Equivalence and Gain Realizations}
\label{app:cosine-equivalence}
\label{sec:cosine-equivalence}

This section studies complete gain equalization at fixed representations
and class statistics.
We first separate the three score realizations exactly, then characterize
when Pure Angular equalization agrees with a shared-scale cosine prototype
classifier, and finally quantify the remaining loss and gradient
differences.

\subsection{Exact Separation of the Three Gain Realizations}
\label{sec:realization-separation}

Let the original score be
\[
q_c^0(\rho)
=
q_c(\rho;\tau_0),
\]
with original gain
\[
g_c
=
\frac{A_c}{\tau_0}.
\]

Define the class-average gain
\[
\bar g
=
\frac1C\sum_c g_c,
\]
and the target gain
\begin{equation}
g_c^\star
=
\bar g
+
\beta(g_c-\bar g).
\label{eq:target-gain}
\end{equation}

The original ProCo score is recovered at
\[
\beta=1,
\]
whereas
\[
\beta=0
\]
gives complete equalization of the leading angular coefficient.

For positive target gains, define
\[
\tau_c^\star
=
\frac{A_c}{g_c^\star}.
\]

The three controlled realizations are
\begin{equation}
q_c^{\mathrm{temp}}(\rho)
=
q_c(\rho;\tau_c^\star),
\end{equation}
\begin{equation}
q_c^{\mathrm{preserve}}(\rho)
=
q_c(\rho;\tau_c^\star)
-
q_c(0;\tau_c^\star)
+
q_c(0;\tau_0),
\end{equation}
and
\begin{equation}
q_c^{\mathrm{angular}}(\rho)
=
q_c^0(\rho)
+
(g_c^\star-g_c)\rho.
\end{equation}

Let
\[
h_c^0
=
q_c(0;\tau_0),
\qquad
h_c^\star
=
q_c(0;\tau_c^\star),
\]
and define the corresponding remainders
\[
r_c^0(\rho)
=
q_c(\rho;\tau_0)
-
h_c^0
-
g_c\rho,
\]
\[
r_c^\star(\rho)
=
q_c(\rho;\tau_c^\star)
-
h_c^\star
-
g_c^\star\rho.
\]

Then the three realizations decompose exactly as
\[
q_c^{\mathrm{temp}}
=
h_c^\star
+
g_c^\star\rho_c
+
r_c^\star,
\]
\[
q_c^{\mathrm{preserve}}
=
h_c^0
+
g_c^\star\rho_c
+
r_c^\star,
\]
and
\begin{equation}
q_c^{\mathrm{angular}}
=
h_c^0
+
g_c^\star\rho_c
+
r_c^0.
\label{eq:three-realizations}
\end{equation}

Consequently,
\begin{equation}
q_c^{\mathrm{temp}}
-
q_c^{\mathrm{preserve}}
=
h_c^\star-h_c^0,
\label{eq:temp-preserve-diff}
\end{equation}
and
\begin{equation}
q_c^{\mathrm{preserve}}
-
q_c^{\mathrm{angular}}
=
r_c^\star-r_c^0.
\label{eq:preserve-angular-diff}
\end{equation}

Thus direct temperature adjustment changes both the intrinsic intercept and
the finite-dimensional response.
Intercept preservation restores the original cosine-zero score but retains
the new remainder.
Pure Angular preserves both the original intercept and the original
remainder while changing only the specified leading linear coefficient.

\subsection{Intercept Shifts and Loss Gradients}
\label{sec:intercept-gradient-effect}

Direct temperature adjustment and intercept preservation differ only by a
class-dependent constant, so their classwise angular derivatives are
identical:
\[
d_c
=
\partial_{\rho_c}
q_c^{\mathrm{temp}}
=
\partial_{\rho_c}
q_c^{\mathrm{preserve}}.
\]

Nevertheless, their loss gradients need not agree.
With identical external class offsets,
\begin{equation}
\nabla_{\mathbb S}L_y^{\mathrm{temp}}
-
\nabla_{\mathbb S}L_y^{\mathrm{preserve}}
=
\Pi_z
\sum_c
\left(
P_c^{\mathrm{temp}}
-
P_c^{\mathrm{preserve}}
\right)
d_c\mu_c.
\label{eq:intercept-gradient-difference}
\end{equation}

The angular derivatives are unchanged, but the class-dependent intercept
shift changes the Softmax probabilities and therefore changes the weights
assigned to all class directions.

This identity explains why restoring the intercept can substantially alter
the loss even when the underlying score derivatives remain unchanged.

\subsection{Complete Equalization and Cosine Prototypes}
\label{sec:cosine-prediction-equivalence}

At complete equalization,
\[
\beta=0,
\qquad
g_c^\star=\bar g.
\]

Define
\begin{equation}
\delta_c(z)
=
q_c^0(\rho_c)
-
g_c\rho_c
=
h_c^0+r_c^0(\rho_c).
\label{eq:delta-definition}
\end{equation}

Equation~\eqref{eq:three-realizations} becomes
\begin{equation}
q_c^{\mathrm{angular}}(z)
=
\bar g\,\rho_c
+
\delta_c(z).
\label{eq:Pure Angular-cosine-decomposition}
\end{equation}

The leading term is therefore exactly a shared-scale cosine prototype
score.

Let
\[
\rho_{(1)}
\ge
\rho_{(2)}
\]
denote the largest and second-largest prototype cosines, and define
\begin{equation}
m_{\cos}(z)
=
\rho_{(1)}-\rho_{(2)}.
\end{equation}
Also define the residual range
\begin{equation}
D(z)
=
\max_c\delta_c(z)
-
\min_c\delta_c(z).
\end{equation}

\begin{proposition}[Finite-dimensional prediction stability]
\label{prop:cosine-prediction-stability}
Assume $\bar g>0$ and equal class offsets.
If
\begin{equation}
\bar g\,m_{\cos}(z)
>
D(z),
\label{eq:cosine-margin-certificate}
\end{equation}
then the shared-scale cosine prototype classifier and complete Pure Angular
equalization have the same unique winning class.
\end{proposition}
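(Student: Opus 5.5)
The argument is a direct comparison of margins based on the exact identity \eqref{eq:Pure Angular-cosine-decomposition}. No asymptotics are needed; the statement is purely finite-dimensional and deterministic in $z$.

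\textbf{Step 1: uniqueness of the cosine winner.} Equal class offsets cancel from both argmax rules, so they can be dropped. Since $\bar g>0$ and $D(z)\ge0$, the hypothesis \eqref{eq:cosine-margin-certificate} forces $m_{\cos}(z)>0$. Hence the largest prototype cosine is attained by a unique class, which we call $c^\star$. The cosine prototype classifier, which maximizes $\bar g\rho_c$ (equivalently $\rho_c$, because $\bar g>0$), therefore has the unique winner $c^\star$.

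\textbf{Step 2: the same class wins under Pure Angular.} For any $c\ne c^\star$ we have $\rho_{c^\star}-\rho_c\ge m_{\cos}(z)$, since $\rho_c\le\rho_{(2)}$. We also have $\delta_{c^\star}(z)-\delta_c(z)\ge-D(z)$ by the definition of the residual range. Combining these with \eqref{eq:Pure Angular-cosine-decomposition} gives
\[
q_{c^\star}^{\mathrm{angular}}(z)-q_c^{\mathrm{angular}}(z)
=\bar g(\rho_{c^\star}-\rho_c)+\delta_{c^\star}(z)-\delta_c(z)
\ge \bar g\,m_{\cos}(z)-D(z)>0 .
\]
So $c^\star$ strictly beats every other class under complete Pure Angular equalization. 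It is therefore the unique argmax there as well, and the two rules share the same unique winner.

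\textbf{Main obstacle.} Essentially none; the only care needed is bookkeeping. One must check that the decomposition $q_c^{\mathrm{angular}}=\bar g\rho_c+\delta_c$ holds exactly at $\beta=0$, which is immediate from the definitions $g_c^\star=\bar g$ and $\delta_c=q_c^0-g_c\rho_c$. One must also use the strict inequality to rule out ties in both rules.

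Finally, I would remark on the asymptotic corollary stated in the main text. Under Theorem~\ref{thm:centered-gain}, $D(z)=O(\nu^{-1})$ uniformly: the intercepts $h_c^0$ and the remainders $r_c^0$ are each $O(\nu^{-1})$. So whenever $\bar g$ is bounded below, disagreement can occur only on the band $m_{\cos}(z)\le D(z)/\bar g=O(\nu^{-1})$.
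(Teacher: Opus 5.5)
Your proposal is correct and follows essentially the same argument as the paper: fix the cosine maximizer, then bound $q_j^{\mathrm{angular}}-q_c^{\mathrm{angular}}=\bar g(\rho_j-\rho_c)+\delta_j-\delta_c$ below by $\bar g\,m_{\cos}(z)-D(z)>0$. Your Step~1, which derives $m_{\cos}(z)>0$ from $D(z)\ge 0$ and $\bar g>0$ to get uniqueness of the cosine winner, is a step the paper leaves implicit.
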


\begin{proof}
Let $j$ maximize $\rho_c$.
For every $c\neq j$,
\[
q_j^{\mathrm{angular}}
-
q_c^{\mathrm{angular}}
=
\bar g(\rho_j-\rho_c)
+
\delta_j-\delta_c.
\]
Since
\[
\rho_j-\rho_c
\ge
m_{\cos}(z),
\]
and
\[
\delta_j-\delta_c
\ge
-D(z),
\]
we obtain
\[
q_j^{\mathrm{angular}}
-
q_c^{\mathrm{angular}}
\ge
\bar g\,m_{\cos}(z)-D(z)
>
0.
\]
Thus $j$ is also the unique Pure Angular winner.
\end{proof}

With the same deterministic tie rule for both classifiers,
Proposition~\ref{prop:cosine-prediction-stability} immediately implies
\begin{equation}
\left|
R_{\mathrm{angular}}
-
R_{\cos}
\right|
\le
\Pr
\left(
\widehat y_{\mathrm{angular}}
\ne
\widehat y_{\cos}
\right),
\end{equation}
and
\begin{equation}
\Pr
\left(
\widehat y_{\mathrm{angular}}
\ne
\widehat y_{\cos}
\right)
\le
\Pr
\left(
m_{\cos}(z)
\le
\frac{D(z)}{\bar g}
\right).
\label{eq:prediction-disagreement-bound}
\end{equation}

Under the conditions of Theorem~1,
\[
\sup_{c,z}
|\delta_c(z)|
=
O(\nu^{-1}).
\]
If
\[
\bar g
\ge
g_{\min}>0,
\]
then disagreement requires
\[
m_{\cos}(z)
=
O(\nu^{-1}).
\]

Thus complete Pure Angular equalization approaches the shared-scale cosine
prototype rule outside a shrinking low-margin region.

For nonuniform external offsets, the appropriate comparator is
\[
b_c+\bar g\rho_c,
\]
and the same argument applies to its top-two logit gap.

\subsection{Finite-Dimensional Loss and Gradient Stability}
\label{sec:cosine-gradient-stability}

We now compare the complete Pure Angular logits
\[
s_c^{\mathrm{angular}}
=
b_c+\bar g\rho_c+\delta_c(\rho_c)
\]
with the shared-scale cosine logits
\[
s_c^{\cos}
=
b_c+\bar g\rho_c.
\]

\begin{proposition}[Fixed-state loss and gradient stability]
\label{prop:cosine-gradient-stability}
Suppose that at a query $z$,
\[
\max_c|\delta_c|
\le
\epsilon,
\]
and
\[
\max_c
|\partial_{\rho_c}\delta_c|
\le
\eta.
\]
Then
\begin{equation}
\left|
L_y^{\mathrm{angular}}
-
L_y^{\cos}
\right|
\le
2\epsilon,
\label{eq:cosine-loss-bound}
\end{equation}
and
\begin{equation}
\left\|
\nabla_{\mathbb S}L_y^{\mathrm{angular}}
-
\nabla_{\mathbb S}L_y^{\cos}
\right\|
\le
2\bar g\,\epsilon
+
2\eta.
\label{eq:cosine-gradient-bound}
\end{equation}
\end{proposition}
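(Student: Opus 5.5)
The plan is to compare the two losses and their spherical gradients term by term. Both are built from logit vectors that differ only by the classwise residuals $\delta_c$, so each bound should follow from two elementary Lipschitz properties of log-sum-exp and Softmax in the sup norm. Throughout, $s^{\mathrm{angular}}_c-s^{\cos}_c=\delta_c(\rho_c)$ with $|\delta_c|\le\epsilon$, by the hypothesis and the decomposition \eqref{eq:Pure Angular-cosine-decomposition}.

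\emph{Loss bound.} I will write $L_y=-s_y+\log\sum_c e^{s_c}$ as in \eqref{eq:multiclass-loss}. The term $-s_y$ changes by $|\delta_y|\le\epsilon$. The log-sum-exp function has gradient equal to a probability vector, so by the mean-value theorem it is $1$-Lipschitz with respect to $\|\cdot\|_\infty$. Its change is therefore at most $\max_c|\delta_c|\le\epsilon$, and together the two terms give \eqref{eq:cosine-loss-bound}.

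\emph{Gradient bound.} Each logit depends on $z$ only through $\rho_c=\mu_c^\top z$. By the same computation as in \eqref{eq:leading-sphere-gradient}, but exact,
\[
\nabla_{\mathbb S}L_y=\Pi_z\sum_c(P_c-\mathbf 1[c=y])\,d_c\,\mu_c,
\]
with $d_c=\bar g$ for the cosine rule and $d_c=\bar g+\partial_{\rho_c}\delta_c$ for Pure Angular. Subtracting and regrouping gives
\[
\Pi_z\sum_c\bigl(P^{\mathrm{angular}}_c-P^{\cos}_c\bigr)\bar g\,\mu_c
+\Pi_z\sum_c\bigl(P^{\mathrm{angular}}_c-\mathbf 1[c=y]\bigr)\partial_{\rho_c}\delta_c\,\mu_c .
\]
Using $\|\Pi_z\mu_c\|\le1$ and the triangle inequality, the second sum is bounded by $\eta\sum_c|P_c-\mathbf 1[c=y]|=2\eta(1-P_y)\le2\eta$. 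The first sum is bounded by $\bar g\,\|P^{\mathrm{angular}}-P^{\cos}\|_1$.

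\emph{Softmax comparison.} It remains to show $\|P^{\mathrm{angular}}-P^{\cos}\|_1\le2\epsilon$, and this is the only step needing care. The naive ratio bound $P^{\mathrm{angular}}_c/P^{\cos}_c\in[e^{-2\epsilon},e^{2\epsilon}]$ yields only $e^{2\epsilon}-1$, which is not linear in $\epsilon$. Instead I will interpolate along $s(u)=s^{\cos}+u\delta$ for $u\in[0,1]$. There $\tfrac{d}{du}P_c=P_c(\delta_c-\bar\delta)$, where $\bar\delta=\sum_kP_k\delta_k$. Hence
\[
\Bigl\|\tfrac{d}{du}P\Bigr\|_1=\sum_cP_c|\delta_c-\bar\delta|\le\max_c\delta_c-\min_c\delta_c\le2\epsilon,
\]
and integrating over $u\in[0,1]$ gives the claim. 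Combining the two pieces yields $2\bar g\epsilon+2\eta$, which is \eqref{eq:cosine-gradient-bound}.
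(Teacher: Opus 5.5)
Your proposal is correct and follows essentially the same route as the paper. It uses the same $\ell_\infty$-Lipschitz bound on log-sum-exp for the loss, and the same two-term regrouping of the tangent-gradient difference, bounded via $\|P^{\mathrm{angular}}-e_y\|_1\le2$ and $\|P^{\mathrm{angular}}-P^{\cos}\|_1\le2\epsilon$. The only difference is presentational: the paper cites the Softmax Jacobian bound $\|Jv\|_1\le2\|v\|_\infty$ and simply says ``hence'', whereas you make the path integration along $s^{\cos}+u\delta$ explicit.
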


\begin{proof}
The logit perturbation has
\[
\|\delta\|_\infty
\le
\epsilon.
\]
Log-sum-exp is $1$-Lipschitz in the $\ell_\infty$ norm, while the target
logit changes by at most $\epsilon$, giving
Eq.~\eqref{eq:cosine-loss-bound}.

For the gradient, let
\[
P^{\mathrm{angular}}
=
\operatorname{softmax}(s^{\mathrm{angular}}),
\qquad
P^{\cos}
=
\operatorname{softmax}(s^{\cos}).
\]
The Softmax Jacobian satisfies
\[
\|Jv\|_1
\le
2\|v\|_\infty,
\]
hence
\[
\|P^{\mathrm{angular}}-P^{\cos}\|_1
\le
2\epsilon.
\]

Subtracting the two tangent gradients gives
\[
\Pi_z
\left[
\bar g
\sum_c
\left(
P_c^{\mathrm{angular}}
-
P_c^{\cos}
\right)\mu_c
+
\sum_c
\left(
P_c^{\mathrm{angular}}
-
\mathbf 1[c=y]
\right)
\partial_{\rho_c}\delta_c\,
\mu_c
\right].
\]
Using
\[
\|\Pi_z\|\le 1,
\qquad
\|\mu_c\|=1,
\]
and
\[
\|P^{\mathrm{angular}}-e_y\|_1
\le 2,
\]
yields Eq.~\eqref{eq:cosine-gradient-bound}.
\end{proof}

By Theorem~1,
\[
\epsilon
=
O(\nu^{-1}),
\qquad
\eta
=
O(\nu^{-1}),
\]
so both the loss difference and local gradient difference are
$O(\nu^{-1})$ in the joint high-dimensional regime.

\subsection{Frozen Score-Realization Comparison}
\label{sec:frozen-realization-comparison}

The frozen experiments compare all score realizations on identical
representations and fitted class statistics.
No representation is changed within a row.

\begin{table}[t]
\centering\small
\setlength{\tabcolsep}{4pt}
\caption{\textbf{Frozen accuracy across score realizations.} Accuracy (\%) at $\beta=0$ and zero prior offset. $\Delta$ is Pure Angular minus original vMF in percentage points; darker green indicates a larger increase.}
\label{tab:frozen-vmf-realizations}
\begin{tabular}{@{}lcccccr@{}}
\toprule
Dataset / IF & Feature & Original & $q^{\rm temp}$ & $q^{\rm preserve}$ & $q^{\rm angular}$ & $\Delta$\\
\midrule
CIFAR-10 / 10 & Encoder & 85.21 & 85.92 & 85.84 & 85.90 & \cellcolor{gainshade!14.3}$+0.69$\\
 & Projection & 87.85 & 87.83 & 87.83 & 87.83 & $-0.02$\\
CIFAR-10 / 50 & Encoder & 81.47 & 80.71 & 80.72 & 80.72 & $-0.75$\\
 & Projection & 80.72 & 80.58 & 80.58 & 80.58 & $-0.14$\\
CIFAR-10 / 100 & Encoder & 78.76 & 78.00 & 78.12 & 78.02 & $-0.74$\\
 & Projection & 77.81 & 77.58 & 77.59 & 77.58 & $-0.23$\\
CIFAR-100 / 10 & Encoder & 48.87 & 52.50 & 52.48 & 52.50 & \cellcolor{gainshade!32.9}$+3.63$\\
 & Projection & 54.38 & 55.09 & 55.08 & 55.15 & \cellcolor{gainshade!14.8}$+0.77$\\
CIFAR-100 / 50 & Encoder & 39.96 & 44.35 & 44.31 & 44.35 & \cellcolor{gainshade!37.6}$+4.39$\\
 & Projection & 42.90 & 43.65 & 43.61 & 43.63 & \cellcolor{gainshade!14.6}$+0.73$\\
CIFAR-100 / 100 & Encoder & 36.77 & 41.59 & 41.49 & 41.59 & \cellcolor{gainshade!40.3}$+4.82$\\
 & Projection & 39.80 & 40.89 & 40.86 & 40.90 & \cellcolor{gainshade!16.9}$+1.10$\\
ImageNet-LT & Encoder & 41.39 & 45.94 & 45.98 & 45.97 & \cellcolor{gainshade!38.8}$+4.58$\\
 & Projection & 46.14 & 46.40 & 46.39 & 46.39 & \cellcolor{gainshade!11.6}$+0.25$\\
iNaturalist 2018 & Encoder & 57.61 & 64.78 & 64.76 & 64.76 & \cellcolor{gainshade!55.0}$+7.15$\\
 & Projection & 63.07 & 65.42 & 65.40 & 65.42 & \cellcolor{gainshade!24.8}$+2.35$\\
\bottomrule
\end{tabular}
\end{table}

The Direct Temperature, Intercept-Preserving, and Pure Angular rules produce
similar decisions in many normally trained frozen states, whereas the
same-state failure example in the main text shows that an aggressive
temperature realization can introduce a large intercept shift.

\subsection{Where Do Cosine and Pure Angular Disagree?}
\label{sec:cosine-diagnostics}

We next evaluate the margin certificate from
Eq.~\eqref{eq:cosine-margin-certificate} on every held-out query.

\begin{table}[t]
\centering\small
\setlength{\tabcolsep}{4pt}
\caption{\textbf{Prediction disagreements and margin certificates.} ``Low-margin'' reports the number of disagreements falling in the lowest cosine-gap decile. Certified queries satisfy $\bar g m_{\cos}>D$; all certified queries agree.}
\label{tab:cosine-diagnostics}
\begin{tabular}{@{}lcccr@{}}
\toprule
Dataset / IF & Feature & Disagree & Low-margin & Certified / Total\\
\midrule
CIFAR-10 / 10 & Encoder & 26 & 26 & 9,763 / 10,000\\
 & Projection & 1 & 1 & 9,982 / 10,000\\
CIFAR-10 / 50 & Encoder & 54 & 54 & 9,717 / 10,000\\
 & Projection & 5 & 5 & 9,969 / 10,000\\
CIFAR-10 / 100 & Encoder & 49 & 49 & 9,655 / 10,000\\
 & Projection & 7 & 7 & 9,960 / 10,000\\
CIFAR-100 / 10 & Encoder & 124 & 124 & 6,348 / 10,000\\
 & Projection & 68 & 68 & 8,857 / 10,000\\
CIFAR-100 / 50 & Encoder & 157 & 155 & 5,874 / 10,000\\
 & Projection & 100 & 100 & 8,501 / 10,000\\
CIFAR-100 / 100 & Encoder & 148 & 144 & 6,061 / 10,000\\
 & Projection & 135 & 135 & 8,272 / 10,000\\
ImageNet-LT & Encoder & 151 & 151 & 47,832 / 50,000\\
 & Projection & 60 & 60 & 48,427 / 50,000\\
iNaturalist 2018 & Encoder & 27 & 27 & 23,635 / 24,426\\
 & Projection & 38 & 38 & 23,594 / 24,426\\
\bottomrule
\end{tabular}
\end{table}

Prediction disagreement is rare and strongly localized to low cosine
margins.
Every query satisfying the sufficient certificate has identical cosine and
Pure Angular predictions.

The result holds both when equalization improves accuracy and when it
decreases accuracy relative to the original vMF score.
The cosine connection therefore describes the geometry induced by complete
gain equalization rather than the direction of its accuracy change.

\subsection{Local Gradient Comparison}
\label{sec:cosine-gradient-experiments}

Finally, we compare cross-entropy query gradients for the cosine and Pure
Angular scores on identical frozen states.

\begin{table}[t]
\centering\small
\setlength{\tabcolsep}{5pt}
\caption{\textbf{Loss and gradient differences between Pure Angular and cosine.} Complete equalization with zero prior offset. Gradient error is $100\|G_{\rm angular}-G_{\cos}\|_F/\|G_{\rm angular}\|_F$, with query tangent gradients as rows of $G$. The last column reports their median cosine similarity.}
\label{tab:cosine-gradients}
\begin{tabular}{@{}lcccr@{}}
\toprule
Dataset / IF & Feature & Mean $|\Delta L|$ & Grad. error (\%) & Median grad. cosine\\
\midrule
CIFAR-100 / 10 & Encoder & 0.01141 & 1.555 & 0.99999712\\
 & Projection & 0.01751 & 1.624 & 0.99996033\\
CIFAR-100 / 50 & Encoder & 0.00917 & 1.524 & 0.99999679\\
 & Projection & 0.02206 & 1.592 & 0.99996161\\
CIFAR-100 / 100 & Encoder & 0.00832 & 1.448 & 0.99999724\\
 & Projection & 0.02634 & 1.595 & 0.99995594\\
ImageNet-LT & Encoder & 0.00217 & 0.146 & 0.99999978\\
 & Projection & 0.00232 & 0.122 & 0.99999967\\
iNaturalist 2018 & Encoder & 0.00362 & 0.132 & 0.99999975\\
 & Projection & 0.00456 & 0.143 & 0.99999973\\
\bottomrule
\end{tabular}
\end{table}

On CIFAR-100, the stacked-query relative gradient difference is approximately
$1.45$--$1.62\%$.
On ImageNet-LT and iNaturalist it is approximately
$0.12$--$0.15\%$, with median gradient cosine similarity above $0.99995$.

Thus the high prediction agreement after complete equalization is
accompanied by closely aligned local learning signals.
\section{Additional Proofs and Finite-Sample Effects}
\label{sec:additional-proofs}

This appendix collects three technical results used by the main analysis:
the complete multiclass propagation of the joint score expansion, a uniform
remainder bound for the adjacent Bessel ratio, and finite-sample identities
that separate population geometry from statistical support.

\subsection{Complete Multiclass Propagation}
\label{sec:multiclass-expansion}

Let
\[
\epsilon
=
\nu^{-1},
\]
and suppose the class score expansion is
\[
s_j
=
\ell_j^0
+
\epsilon B_{1j}
+
\epsilon^2B_{2j}
+
O(\epsilon^3),
\]
where
\[
\ell_j^0
=
b_j+B_{0j}.
\]

Define
\[
P_j^0
=
\frac{e^{\ell_j^0}}
{\sum_k e^{\ell_k^0}},
\]
\[
m_r
=
\sum_j P_j^0B_{rj},
\]
\[
\Delta_{rj}
=
B_{rj}-m_r,
\]
and
\[
V_1
=
\sum_j
P_j^0\Delta_{1j}^2.
\]

\begin{theorem}[Complete multiclass propagation]
\label{thm:multiclass-propagation}
For a fixed number of classes and uniformly over the compact parameter
family of Theorem~\ref{thm:joint-score-expansion},
\begin{equation}
P_j
=
P_j^0
+
\epsilon
P_j^0\Delta_{1j}
+
\epsilon^2
P_j^0
\left[
\Delta_{2j}
+
\frac{\Delta_{1j}^2-V_1}{2}
\right]
+
O(\epsilon^3),
\label{eq:multiclass-prob-expansion}
\end{equation}
and
\begin{equation}
L_y
=
L_y^0
+
\epsilon(m_1-B_{1y})
+
\epsilon^2
\left(
m_2-B_{2y}
+
\frac{V_1}{2}
\right)
+
O(\epsilon^3),
\label{eq:multiclass-loss-expansion}
\end{equation}
where
\[
L_y^0
=
-\ell_y^0
+
\log\sum_j e^{\ell_j^0}.
\]
\end{theorem}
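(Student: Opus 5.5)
The plan is to treat the multiclass quantities as smooth functions of the logit vector and Taylor-expand them around $\ell^0$. Write $s=\ell^0+x$ with
\[
x_j=\epsilon B_{1j}+\epsilon^2B_{2j}+e_j,\qquad |e_j|\le C\epsilon^3,
\]
where the uniform remainder bound comes from Theorem~\ref{thm:joint-score-expansion}. Since the number of classes is fixed and the $B_{rj}$ are bounded uniformly over the compact family, we have $\|x\|_\infty=O(\epsilon)$ uniformly. All Taylor remainders below are therefore controlled by a constant times $\|x\|_\infty^3$.

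\textbf{Loss first.} Set $F(s)=\log\sum_k e^{s_k}$. Its derivatives at $\ell^0$ are
\[
\nabla F=P^0,\qquad \nabla^2F=\operatorname{diag}(P^0)-P^0P^{0\top}.
\]
Third derivatives of $F$ are bounded by an absolute constant on all of $\mathbb R^C$, because they are cumulant tensors of a categorical variable. Hence
\[
F(s)=F(\ell^0)+\sum_jP_j^0x_j+\tfrac12\operatorname{Var}_{P^0}(x)+O(\|x\|_\infty^3).
\]
Inserting $x$, the linear term gives $\epsilon m_1+\epsilon^2m_2+O(\epsilon^3)$. In the variance term only $\epsilon B_1$ contributes at order $\epsilon^2$, and it contributes $\epsilon^2V_1$. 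Subtracting $s_y=\ell_y^0+\epsilon B_{1y}+\epsilon^2B_{2y}+O(\epsilon^3)$ yields \eqref{eq:multiclass-loss-expansion}.

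\textbf{Probabilities.} Use the exact identity
\[
\log P_j=s_j-F(s).
\]
Combining the two expansions above gives
\[
\log P_j=\log P_j^0+\epsilon\Delta_{1j}+\epsilon^2\Bigl(\Delta_{2j}-\tfrac{V_1}{2}\Bigr)+O(\epsilon^3).
\]
Exponentiate and use $e^{u}=1+u+u^2/2+O(u^3)$ with $u=O(\epsilon)$. Because $P_j^0\le1$, the factor $P_j^0$ multiplies the error harmlessly. This gives
\[
P_j=P_j^0\Bigl[1+\epsilon\Delta_{1j}+\epsilon^2\bigl(\Delta_{2j}-\tfrac{V_1}{2}+\tfrac{\Delta_{1j}^2}{2}\bigr)\Bigr]+O(\epsilon^3),
\]
which is exactly \eqref{eq:multiclass-prob-expansion}. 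Working in the log domain avoids dividing by small $P_j^0$, so the error stays absolute and uniform.

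\textbf{Main obstacle: uniformity.} The key point is that the $O(\epsilon^3)$ constants do not depend on $\ell^0$, which may include arbitrary fixed offsets $b_j$ and bounded $B_{0j}$. I would argue this in two steps.
\begin{itemize}
\item The third-derivative bound for log-sum-exp is global, and $|\Delta_{rj}|\le 2\max_j|B_{rj}|$ is bounded.
\item The Taylor remainders therefore depend only on $\sup|B_{rj}|$ and on the constant $C$ from Theorem~\ref{thm:joint-score-expansion}.
\end{itemize}
I would also check that this holds uniformly over $\rho\in[-1,1]$ and $\lambda\in[\lambda_-,\lambda_+]$, and note that $\lambda_-<\lambda_+$ is harmless there.
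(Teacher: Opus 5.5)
Your proposal is correct and follows essentially the paper's argument: a second-order expansion of softmax/log-sum-exp about $\ell^0$, in which the Hessian $\operatorname{Diag}(P^0)-P^0(P^0)^\top$ produces the $V_1/2$ term, as the paper itself remarks after its proof. The differences are only in bookkeeping. The paper expands $P_j$ first, by normalizing $P_j^0e^{u_j}$ by $\sum_kP_k^0e^{u_k}$; that denominator is bounded below by $e^{-\|u\|_\infty}$, so this route is equally uniform and also never divides by a small $P_j^0$. You instead obtain $P_j$ from the loss via $\log P_j=s_j-F(s)$, and you make the uniformity of the $O(\epsilon^3)$ constants explicit through the global third-derivative bound, which the paper leaves implicit.
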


\begin{proof}
Write
\[
u_j
=
\epsilon B_{1j}
+
\epsilon^2B_{2j}
+
O(\epsilon^3).
\]
Then
\[
e^{u_j}
=
1
+
\epsilon B_{1j}
+
\epsilon^2
\left(
B_{2j}
+
\frac{B_{1j}^2}{2}
\right)
+
O(\epsilon^3).
\]

Normalizing
\[
P_j^0e^{u_j}
\]
by the corresponding sum gives
\[
P_j
=
P_j^0
\left[
1
+
\epsilon\Delta_{1j}
+
\epsilon^2
\left(
\Delta_{2j}
+
\frac{\Delta_{1j}^2-V_1}{2}
\right)
\right]
+
O(\epsilon^3),
\]
which proves
Eq.~\eqref{eq:multiclass-prob-expansion}.

For the loss,
\[
\log\sum_j
e^{\ell_j^0+u_j}
=
\log\sum_j e^{\ell_j^0}
+
\epsilon m_1
+
\epsilon^2
\left(
m_2+\frac{V_1}{2}
\right)
+
O(\epsilon^3).
\]
Subtracting the true-class score proves
Eq.~\eqref{eq:multiclass-loss-expansion}.
\end{proof}

The term
\[
\frac{V_1}{2}
=
\frac12
\operatorname{Var}_{P^0}(B_1)
\]
comes from the curvature of log-sum-exp.
It couples the classwise first-order score corrections and is required to
recover the full second-order multiclass loss.

Equivalently, for
\[
F(s)
=
\log\sum_j e^{s_j},
\]
the Hessian at $\ell^0$ is
\[
H
=
\operatorname{Diag}(P^0)
-
P^0(P^0)^\top,
\]
and
\[
B_1^\top H B_1
=
\operatorname{Var}_{P^0}(B_1)
=
V_1.
\]

The numerical validation of this coupling term is reported with the
supplementary mechanism experiments.

\begin{figure}[H]
  \centering
  \includegraphics[width=0.82\linewidth]{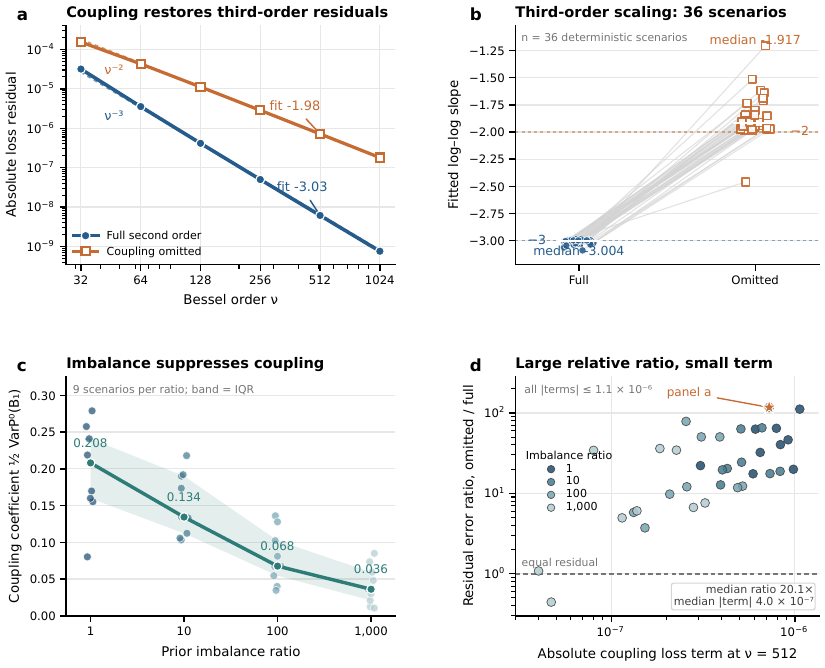}
  \caption{\textbf{Variance coupling in the multiclass expansion.} Retaining the second-order variance term recovers the predicted third-order residual decay.}
  \label{fig:softmax-coupling}
\end{figure}

\subsection{Uniform Remainder for the Bessel Ratio}
\label{sec:bessel-ratio-remainder}

The joint expansion in Appendix~\ref{sec:joint-expansion} uses a uniform
remainder bound for
\[
R_\nu(x)
=
\frac{I_{\nu+1}(x)}{I_\nu(x)}.
\]

\begin{lemma}[Uniform adjacent-ratio remainder]
\label{lem:bessel-ratio-remainder}
For
\[
\nu\ge\frac{10}{9},
\]
define
\begin{equation}
A_{2,\nu}(x)
=
\frac{x}
{\nu+\sqrt{\nu^2+x^2}}
-
\frac{x}
{2(\nu^2+x^2)}
+
\frac{x(4\nu^2-x^2)}
{8(\nu^2+x^2)^{5/2}}.
\label{eq:bessel-ratio-approx}
\end{equation}
Then
\begin{equation}
\left|
R_\nu(x)
-
A_{2,\nu}(x)
\right|
\le
\nu^{-3}
\qquad
\text{for all }x\ge 0.
\label{eq:bessel-ratio-bound}
\end{equation}
\end{lemma}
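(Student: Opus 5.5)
The plan is to work entirely with the Riccati equation already used in Appendix~\ref{sec:centered-proof},
\[
R_\nu'(x)=1-R_\nu(x)^2-\frac{2\nu+1}{x}R_\nu(x),\qquad R_\nu(0)=0,
\]
and to treat $A_{2,\nu}$ as an approximate solution whose defect can be controlled explicitly. Define the Riccati residual
\[
F(x)=A_{2,\nu}'(x)-1+A_{2,\nu}(x)^2+\frac{2\nu+1}{x}A_{2,\nu}(x).
\]
The first step is to compute $F$ in closed form, writing $S=\sqrt{\nu^2+x^2}$. The three terms of $A_{2,\nu}$ are exactly the truncation of the formal solution of the Riccati equation in powers of $1/S$. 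By construction, the contributions at the first three orders cancel. Only terms of relative order $S^{-3}$ (times bounded functions of $x/S$) remain. In the scaled variable $x=\nu\lambda$, this should give $|F(\nu\lambda)|\le \nu^{-3}\Psi(\lambda)$, with an explicit rational-algebraic function $\Psi$ of $\lambda$ and $\sqrt{1+\lambda^2}$ plus lower-order terms in $1/\nu$. The $+1$ in $2\nu+1$ also contributes corrections; these enter at the same order and must be kept.

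The second step is a linearized comparison argument. Set $e=R_\nu-A_{2,\nu}$. Subtracting the two Riccati relations gives the exact linear equation
\[
e'(x)=-k(x)\,e(x)-F(x),\qquad k(x)=R_\nu(x)+A_{2,\nu}(x)+\frac{2\nu+1}{x}.
\]
Both $R_\nu$ and $A_{2,\nu}$ vanish at $0$ (and behave like $x/(2\nu+2)$ there), so $e(0)=0$. Since $k>0$, the integrating factor gives
\[
|e(x)|\le\int_0^x \exp\!\Big(-\int_s^x k\Big)|F(s)|\,ds\le \sup_{s\le x}\frac{|F(s)|}{k(s)}.
\]
The singularity of $k$ at $0$ only helps, because it makes the damping stronger. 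For $k$ I would use only the crude lower bound $k(x)\ge (2\nu+1)/x+A_{2,\nu}(x)$, using $R_\nu\ge0$. After checking positivity of $A_{2,\nu}$, this is of order $1$ in the scaled variable. It tends to $+\infty$ as $\lambda\to0$ and to about $1$ as $\lambda\to\infty$.

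The third step, which I expect to be the main obstacle, is to make the constant exactly $1$ uniformly in $x\ge0$ and $\nu\ge 10/9$. The asymptotic statement $|e|=O(\nu^{-3})$ follows immediately from the two steps above. The explicit constant, however, requires an honest bound on $\sup_\lambda \nu^3|F(\nu\lambda)|/k(\nu\lambda)$ that includes the $1/\nu$ tails of $F$.

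My first attempt would be to substitute $\lambda=\sinh\theta$, or equivalently $w=\lambda/\sqrt{1+\lambda^2}\in[0,1)$. This turns $\nu^3F$ into a polynomial in $w$ and $1/\nu$. I would then bound it term by term, using that $\nu\ge10/9$ bounds all $1/\nu$ powers by fixed constants; the threshold $10/9$ is presumably exactly what makes the final numerical inequality close. If this crude bound overshoots, I would refine it in one of two ways. One option is to keep $R_\nu$ in $k$ via the Amos lower bound $R_\nu(x)\ge x/(\nu+1+\sqrt{(\nu+1)^2+x^2})$. The other is to integrate $F$ against the exact integrating factor rather than taking a supremum of $F/k$.

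Finally, as $x\to\infty$ both $R_\nu$ and $A_{2,\nu}$ tend to $1$, and $F\to0$. Uniformity at infinity is therefore automatic once the bound on $F/k$ holds on all of $[0,\infty)$.
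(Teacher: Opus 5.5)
Your argument is correct, but it closes the error equation differently from the paper.

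The paper rescales ($x=\nu z$, $\varepsilon=\nu^{-1}$) and expands $H_\varepsilon^2-a_\varepsilon^2=2a_\varepsilon e+e^2$. This gives the nonlinear equation $\varepsilon e'=E-Ke-e^2$ with the known coefficient $K=(2+\varepsilon)/z+2a_\varepsilon$. It then uses only the global bound $|E|\le\varepsilon^3(1+\varepsilon/4)$ and a first-contact barrier argument at $e=\pm\varepsilon^3$. The barrier closes because $K\ge 2$ exceeds $1+\varepsilon/4+\varepsilon^3$, which is where $\varepsilon\le 9/10$ enters.

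You instead factor $R_\nu^2-A_{2,\nu}^2=(R_\nu+A_{2,\nu})e$ exactly and keep the unknown $R_\nu$ inside the damping coefficient, using only $R_\nu\ge 0$. The error equation is then linear. Its Duhamel representation is valid at the singular endpoint, since the integrating factor vanishes like $x^{2\nu+1}$, and it yields $|e|\le\sup|F|/k$ with no barrier and no handling of $e^2$.

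What your route buys is a pointwise weighting. The residual is largest at $x=0$, where $|F|=\nu^{-3}$, and that is exactly where the $1/x$ damping is infinite. As a result, the step you flag as the main obstacle is easy. The residual is exactly
\[
F(\nu\lambda)=\nu^{-3}\,\frac{\lambda^4-10\lambda^2+4}{4(1+\lambda^2)^{7/2}}+\nu^{-4}\,\frac{\lambda^2(\lambda^2-4)^2}{64(1+\lambda^2)^5}
\]
(the paper's $E$ is $-F$), so there are no further $1/\nu$ tails. Your crude bound $k(\nu\lambda)\ge(2+\varepsilon)/\lambda+A_{2,\nu}(\nu\lambda)$ suffices, together with $A_{2,\nu}>0$ for $\varepsilon\le 9/10$. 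Elementary estimates then keep $\nu^3|F|/k$ well below $1$; numerically its supremum is about $0.085$. So neither the Amos bound nor the refined integral is needed, and the threshold $10/9$ is far from tight in your argument.

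The paper's barrier proof is shorter because it uses only $\sup|E|$ and $\inf K$. That is also why its constant is $1$ and why it needs an explicit threshold on $\nu$.
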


\begin{proof}
Set
\[
\varepsilon
=
\nu^{-1},
\qquad
x=\nu z,
\qquad
H_\varepsilon(z)
=
R_\nu(\nu z).
\]
The adjacent-ratio Riccati equation becomes
\begin{equation}
\varepsilon H_\varepsilon'
=
1
-
\frac{2+\varepsilon}{z}
H_\varepsilon
-
H_\varepsilon^2.
\label{eq:scaled-riccati}
\end{equation}

Let
\[
a_\varepsilon(z)
=
r_0(z)
+
\varepsilon r_1(z)
+
\varepsilon^2r_2(z).
\]
Substituting $a_\varepsilon$ into
Eq.~\eqref{eq:scaled-riccati} gives the residual
\begin{equation}
E
=
-
\varepsilon^3
\frac{z^4-10z^2+4}
{4(1+z^2)^{7/2}}
-
\varepsilon^4
\frac{z^2(z^2-4)^2}
{64(1+z^2)^5}.
\end{equation}
Elementary bounds imply
\begin{equation}
|E|
\le
\varepsilon^3
\left(
1+\frac{\varepsilon}{4}
\right).
\end{equation}

For
\[
e
=
H_\varepsilon-a_\varepsilon,
\]
we obtain
\begin{equation}
\varepsilon e'
=
E
-
Ke
-
e^2,
\label{eq:error-barrier-ode}
\end{equation}
with
\[
K
=
\frac{2\sqrt{1+z^2}}{z}
+
\frac{\varepsilon}
{z(1+z^2)}
+
2\varepsilon^2r_2(z).
\]

For
\[
\varepsilon\le\frac{9}{10},
\]
the coefficient $K$ remains uniformly positive.
At the barriers
\[
e=\pm\varepsilon^3,
\]
Eq.~\eqref{eq:error-barrier-ode} points inward.
Since
\[
e(0)=0,
\]
a first-contact argument prevents either barrier from being crossed.

Hence
\[
|e(z)|
\le
\varepsilon^3
\]
for all $z\ge 0$, proving
Eq.~\eqref{eq:bessel-ratio-bound}.
\end{proof}

In the joint scaling $x=\nu\lambda$,
Eq.~\eqref{eq:bessel-ratio-approx} reduces exactly to
\[
r_0(\lambda)
+
\frac{r_1(\lambda)}{\nu}
+
\frac{r_2(\lambda)}{\nu^2},
\]
which supplies the ratio expansion used in
Theorem~\ref{thm:joint-score-expansion}.

\subsection{Finite-Sample Resultant Identity}
\label{sec:finite-sample}

Let
\[
X_1,\ldots,X_n
\]
be independent unit vectors satisfying
\[
\mathbb E[X_i]
=
A\mu,
\qquad
\|\mu\|=1.
\]
Define
\[
\overline X_n
=
\frac1n\sum_{i=1}^n X_i,
\qquad
R_n
=
\|\overline X_n\|.
\]

Expanding the squared norm gives
\begin{equation}
\mathbb E[R_n^2]
=
A^2
+
\frac{1-A^2}{n}.
\label{eq:finite-resultant-identity}
\end{equation}

Indeed,
\[
\mathbb E
\left\|
\sum_{i=1}^n X_i
\right\|^2
=
n
+
n(n-1)A^2,
\]
and division by $n^2$ yields
Eq.~\eqref{eq:finite-resultant-identity}.

It follows that
\begin{equation}
\mathbb E
\left[
\frac{nR_n^2-1}{n-1}
\right]
=
A^2.
\label{eq:unbiased-resultant-square}
\end{equation}

Thus the population resultant $A$ and the support size $n$ play distinct
roles:
$A$ describes directional geometry, whereas $n$ controls the finite-sample
bias and uncertainty of the observed resultant.

\subsection{Current-Query Self-Inclusion}
\label{sec:self-inclusion}

Let
\[
\overline X_{-i}
=
\frac{1}{n-1}
\sum_{j\ne i}X_j.
\]
If the current observation $X_i$ is inserted into the class mean before it
is scored, then the leading linear score differs from the leave-one-out
version by
\[
\frac{
X_i^\top\overline X_n
-
X_i^\top\overline X_{-i}
}{\tau}.
\]

Taking expectation gives
\begin{equation}
\mathbb E
\left[
\frac{
X_i^\top\overline X_n
-
X_i^\top\overline X_{-i}
}{\tau}
\right]
=
\frac{1-A^2}{\tau n}.
\label{eq:self-inclusion}
\end{equation}

This identity requires only independent unit-vector sampling and does not
require a single-vMF population model.

The effect is therefore an estimation phenomenon whose magnitude scales as
$1/n$.

\subsection{Cross-Fitted Gain Equalization}
\label{sec:cross-fitted-estimator}

A cross-fitted construction can separate the score of each example from
the statistics used to estimate its class geometry.

For a held-out fold $k$ of class $c$, let
\[
\overline z_c^{(-k)}
=
\frac{1}{n_c^{(-k)}}
\sum_{\substack{i:y_i=c\\i\notin k}}
z_i,
\]
\[
R_c^{(-k)}
=
\left\|
\overline z_c^{(-k)}
\right\|,
\]
and
\[
\widehat\mu_c^{(-k)}
=
\frac{
\overline z_c^{(-k)}
}{
R_c^{(-k)}
}.
\]

Define
\[
U_c^{(-k)}
=
\frac{
n_c^{(-k)}
(R_c^{(-k)})^2
-
1
}{
n_c^{(-k)}-1
},
\]
and
\[
\widehat A_c^{(-k)}
=
\sqrt{
\max
\left\{
U_c^{(-k)},0
\right\}
}.
\]

Given a reference resultant $A_{\mathrm{ref}}>0$, define
\begin{equation}
\tau_c^{(-k)}
=
\tau
\operatorname{clip}
\left(
\frac{
\widehat A_c^{(-k)}
}{
A_{\mathrm{ref}}
},
r_{\min},
r_{\max}
\right).
\label{eq:cross-fit-temperature}
\end{equation}

\begin{proposition}[Conditional leading-gain equalization]
\label{prop:cross-fit-equalization}
Suppose
\[
\widehat\mu_c^{(-k)}
=
\mu_c
+
o_p(1),
\]
\[
\widehat A_c^{(-k)}
=
A_c
+
o_p(1),
\]
and the fitted concentration remains in the compact joint regime.
If clipping in
Eq.~\eqref{eq:cross-fit-temperature}
is inactive and the resulting inverse temperatures remain bounded, then
uniformly over unit $z$,
\begin{equation}
q_c(z;\tau_c^{(-k)})
=
\frac{A_{\mathrm{ref}}}{\tau}
\mu_c^\top z
+
O(\nu^{-1})
+
o_p(1).
\label{eq:cross-fit-score}
\end{equation}
\end{proposition}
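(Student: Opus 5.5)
The plan is to reduce Eq.~\eqref{eq:cross-fit-score} to Theorem~\ref{thm:centered-gain}, applied at the data-dependent temperature and data-dependent prototype, and then to control the plug-in errors by continuity. Because clipping is inactive, $\tau_c^{(-k)}=\tau\widehat A_c^{(-k)}/A_{\mathrm{ref}}$. Hence the inverse temperature is $t_c=A_{\mathrm{ref}}/(\tau\widehat A_c^{(-k)})$, which is bounded by some $T$ by assumption. I interpret $q_c(z;\tau_c^{(-k)})$ as the exact score evaluated with the fitted statistics $(\widehat\mu_c^{(-k)},\widehat\kappa_c)$, where $\widehat\kappa_c=\nu\widehat\lambda_c$ is consistent with $\widehat A_c^{(-k)}=R_\nu(\widehat\kappa_c)$ and $\widehat\lambda_c$ lies in the compact joint interval $[\lambda_-,\lambda_+]$.

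\textbf{Step 1 (uniform expansion at random parameters).} Theorem~\ref{thm:centered-gain} is uniform over $\lambda\in[\lambda_-,\lambda_+]$, $t\in[0,T]$ and $\rho\in[-1,1]$. Its bound is a deterministic statement about Bessel functions, so it applies pointwise on the event where the fitted state lies in the compact regime, with no measurability issue. Writing $\widehat\rho=(\widehat\mu_c^{(-k)})^\top z$, this gives
\[
q_c(z;\tau_c^{(-k)})=t_c\,R_\nu(\widehat\kappa_c)\,\widehat\rho+O(\nu^{-1}),
\]
where the intercept $q_c(0)=O(\nu^{-1})$ and the remainder $r_c=O(\nu^{-1})$ are both absorbed, uniformly in $z$.

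\textbf{Step 2 (gain cancellation).} Since $R_\nu(\widehat\kappa_c)=\widehat A_c^{(-k)}$, the product $t_cR_\nu(\widehat\kappa_c)$ equals $A_{\mathrm{ref}}/\tau$ \emph{exactly}. This is the whole point of the construction. If instead the fitted concentration is determined by some other estimator, then $R_\nu(\widehat\kappa_c)=\widehat A_c^{(-k)}+o_p(1)$. In that case the product is $A_{\mathrm{ref}}/\tau+o_p(1)$, using that $t_c$ is bounded and, because $A_c>0$ in the compact regime, that $\widehat A_c^{(-k)}$ is bounded away from zero with probability tending to one.

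\textbf{Step 3 (prototype replacement).} For unit $z$, $|\widehat\rho-\mu_c^\top z|\le\|\widehat\mu_c^{(-k)}-\mu_c\|=o_p(1)$ uniformly in $z$ by Cauchy--Schwarz. Multiplying by the bounded gain $A_{\mathrm{ref}}/\tau$ yields the $o_p(1)$ term, and combining Steps~1--3 gives Eq.~\eqref{eq:cross-fit-score}.

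The main obstacle is Step~1. Theorem~\ref{thm:centered-gain} is stated for fixed $\lambda_c$, so I must make sure the $O(\nu^{-1})$ constant depends only on $(\lambda_\pm,T)$ and not on the particular parameter. I would check this through the proof in Appendix~\ref{app:centered-proof}. The Riccati bound and Lemma~\ref{lem:bessel-ratio-remainder} are uniform on a slightly enlarged compact interval, and the displacement bound $|\widetilde\kappa-\kappa|\le t\le T$ is parameter-free. Once that uniformity is confirmed, the random parameters can be plugged in directly. The assumptions are then needed only on the high-probability event where the fitted state stays in the regime, and its complement contributes to the $o_p(1)$ term.
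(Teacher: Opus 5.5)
Your proposal is correct and takes essentially the paper's route: you apply the bounded-temperature joint expansion at the fitted state to get $q_c=t_c\widehat A_c^{(-k)}(\widehat\mu_c^{(-k)})^\top z+O(\nu^{-1})$, use inactive clipping to get $t_c\widehat A_c^{(-k)}=A_{\mathrm{ref}}/\tau$ exactly, and then replace $\widehat\mu_c^{(-k)}$ by $\mu_c$ using consistency of the direction. Your extra checks spell out details that the paper's three-line proof leaves implicit: that the $O(\nu^{-1})$ constant depends only on $(\lambda_\pm,T)$, that the Cauchy--Schwarz bound makes the prototype replacement uniform in $z$, and that $R_\nu(\widehat\kappa_c)=\widehat A_c^{(-k)}$.
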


\begin{proof}
Under the bounded-temperature joint expansion,
\[
q_c
=
t_c
\widehat A_c^{(-k)}
(\widehat\mu_c^{(-k)})^\top z
+
O_p(\nu^{-1}).
\]
When clipping is inactive,
\[
t_c
\widehat A_c^{(-k)}
=
\frac{A_{\mathrm{ref}}}{\tau}.
\]
Consistency of the fitted direction then gives
Eq.~\eqref{eq:cross-fit-score}.
\end{proof}

The construction illustrates how gain equalization can be separated from
current-query self-inclusion at the estimator level.
The main experiments retain the original ProCo estimator and use
cross-fitting only as an analytical alternative.

\subsection{Empirical Finite-Sample Diagnostics}
\label{sec:finite-sample-diagnostics}

The supplementary support-size experiments keep the representation fixed
while changing only the number of observations used to estimate the class
state.

On CIFAR-100 Projection features, increasing support from
$n=5$ to $n=100$ substantially reduces resultant estimation error.
A separate frozen replay measures the score increment caused by including
the current query in its class statistics.

These diagnostics empirically separate support-induced estimation effects
from the population angular geometry represented by $A_c$.
\begin{figure}[H]
\centering
\includegraphics[width=85mm]{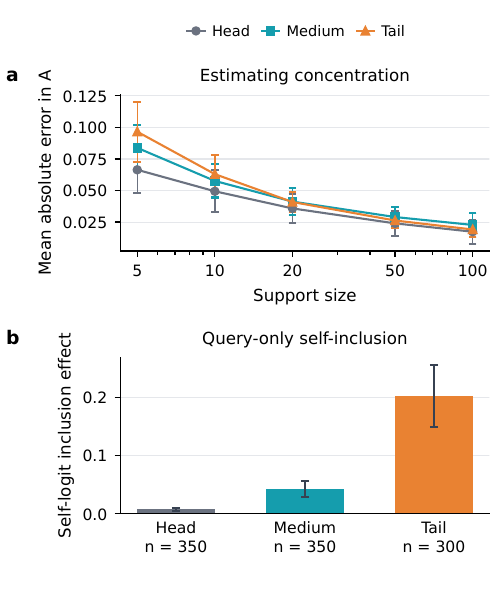}
\caption{\textbf{Support size and self-inclusion effects.} (a) Resultant-length estimation error; bars show between-class SD. (b) Current-query inclusion effects; bars show average within-class query SD.}
\label{fig:finite-sample}
\end{figure}

\section{Protocols for Angular-Response Controls}
\label{app:new-protocol}
\label{sec:controlled-protocols}

This section specifies the controlled score-realization experiments used to
separate angular gain from the other effects of classwise temperature
adjustment.

\subsection{Controlled Training Comparisons}
\label{sec:controlled-training-protocol}

We compare four score realizations:
the original ProCo score, direct classwise temperature adjustment,
intercept-preserving adjustment, and Pure Angular editing.
The target gain is
\[
g_c^\star
=
\bar g+\beta(g_c-\bar g),
\qquad
\bar g
=
\frac1C\sum_c g_c.
\]

Unless otherwise stated, complete equalization uses $\beta=0$. The native score corresponds to $\beta=1$. Within each controlled training comparison, all realizations use the same
training identities, initialization, data augmentations, optimizer, and
learning-rate schedule.
The online class-statistical state is retained from the corresponding ProCo
recipe.
Gain targets are computed from detached class statistics, and gradients do
not propagate through the statistical estimator or through the target-gain
construction.

For a training-frequency prior, the contrastive logits include $\log \pi_c$, where $\pi_c$ is computed from the corresponding long-tail training subset.
Uniform-prior controls instead use a common zero offset.
Contrastive-only experiments optimize only the probabilistic contrastive
objective.
Joint-training experiments additionally include the original classification
branch with unit weight.

The main representation-learning comparison uses the contrastive-only
objective with the training-frequency prior.
Additional uniform-prior and joint-training conditions are retained as
controlled extensions.

\subsection{Representation Evaluation}
\label{sec:representation-evaluation-protocol}

After the final training epoch, the encoder and projector are frozen.
Encoder and Projection representations are evaluated separately with ordinary
linear classifiers.

The main comparison uses the corresponding long-tail training subset as the
linear-classifier fitting pool.
The newly fitted classifiers use ordinary cross-entropy and do not include
an additional class-frequency logit offset or any vMF score modification.
Thus differences in the linear readouts arise only from the representations
learned during the preceding controlled training.

Historical balanced-support controls use the full balanced CIFAR fitting
pool and are reported separately.
The balanced-support and long-tail-support evaluations measure different
properties of the same frozen representation and are not pooled.

\subsection{Same-State Realization Diagnostic}
\label{sec:same-state-diagnostic}

The fixed-state diagnostic in the main text uses the completed
CIFAR-100-LT IF10 uniform-prior Direct Temperature model at its final state.

We take the first five queries of each class from the predefined feature
collection in its fixed ordering, giving 500 queries in total.
The query features, class statistics, and class offsets are held fixed while
only the score realization is changed.

Using the original ProCo score on this state gives $L_{\mathrm{CE}}=2.2452$ and diagnostic accuracy $37.2\%$.

Direct Temperature equalization gives $L_{\mathrm{CE}}=72.3403$ and assigns all 500 queries to a single class.

Intercept preservation gives $L_{\mathrm{CE}}=2.1502$, with $95$ distinct predicted classes.
Pure Angular gives $L_{\mathrm{CE}}=2.1503$, also with $95$ predicted classes.

For the collapsed class, $\kappa_c=5.4415, \; A_c=0.04244, \; \tau_c^\star=0.005730$. Its cosine-zero score changes from $q_c(0;\tau_0)=0.3888$ to $q_c(0;\tau_c^\star)=79.1456$. The exact cosine-zero angular slope under the adjusted temperature is $3.8058$, whereas the target leading gain is $7.4057$. Pure Angular preserves the native intercept and gives exact zero-cosine slope $7.4032$. This fixed-state comparison separates the score-level consequences of the
three realizations while leaving the representation and statistical state
unchanged.

\subsection{Frozen iNaturalist Realization Study}
\label{sec:inat-realization-protocol}

The frozen iNaturalist experiment uses the released ProCo representation.
Training identities are split once into an $80/20$ fitting/calibration split.
The fitting partition determines the class directions, resultants, and fitted
vMF concentrations.
The calibration partition supplies fixed queries for the local-gradient
diagnostics.

The official iNaturalist validation set is not used for fitting the class
statistics.
Encoder and Projection features are normalized before spherical scoring.

We evaluate the original score and the three controlled realizations at $\beta\in\{0,0.5\}$, under both zero offset and the training-frequency offset.

On Projection features, the original score obtains $63.072\%$ accuracy at zero offset and $49.169\%$ with the training-frequency offset.
Complete Pure Angular equalization gives $65.422\%$ and $49.226\%$, respectively.

The same intervention therefore acts differently under different additive
class offsets even when the representation and fitted geometric state are
unchanged.
\section{Experimental Protocols and Numerical Validation}
\label{app:modern-protocol}
\label{sec:frozen-numerical-protocols}

This section records the frozen-state, boundary, numerical-approximation,
and auxiliary architecture protocols used throughout the supplementary
experiments.

\subsection{Frozen Representations and Statistical States}
\label{sec:frozen-states}

Frozen analyses use deterministic features extracted from fixed ProCo
checkpoints.
Only training identities determine class directions and fitted
concentrations; held-out queries are never used to fit the class statistics.

CIFAR Projection features have dimension $128$.
ImageNet-LT and iNaturalist Projection features have dimension $1024$.
The released large-scale models use $2048$-dimensional Encoder features.

The finite-dimensional angular gain is
\[
g_c
=
\frac{A_p(\widehat\kappa_c)}{\tau_0},
\]
whereas the high-dimensional leading comparator uses
\[
g_c^{(0)}
=
\frac{r_0(\widehat\kappa_c/\nu)}{\tau_0}.
\]

Frozen rescoring holds the fitted state fixed.
This is distinct from the evolving class-statistical banks used during
training.

For CIFAR-100, ImageNet-LT, and iNaturalist, shot groups follow the original
training counts: $\text{head}:n_c>100, \; \text{medium}:20\le n_c\le 100, \; \text{tail}:n_c<20$. CIFAR-10 uses frequency-ranked groups because it contains only ten classes.

\subsection{Boundary Pair Selection}
\label{sec:boundary-selection}

For CIFAR-100-LT, ImageNet-LT, and iNaturalist, class pairs are selected from
training geometry before evaluation.

Within each of the six shot-group pair types $\mathrm{HH},\mathrm{HM},\mathrm{HT}, \mathrm{MM},\mathrm{MT},\mathrm{TT}$, we retain the five pairs with largest prototype cosine and five distinct
random control pairs.
This gives $60$ pairs per dataset.

The random construction uses the common experiment seed and does not inspect
evaluation labels, boundary errors, or intervention outcomes.

For CIFAR-10-LT IF100, all $\binom{10}{2}=45$ unordered class pairs are evaluated.

\subsection{Short-Arc Root Solving}
\label{sec:root-solving}

For prototypes $\mu_a$ and $\mu_b$, the short great-circle arc is evaluated
at $2001$ uniformly spaced locations.
Sign changes bracket candidate roots, and Brent refinement is used to obtain
the final root.

A displacement is reported only when both the original and intervened scores
have a unique root on the short arc.
Root creation, root disappearance, and multiple-root cases are recorded
separately.

At zero prior, the finite-dimensional linear-mean comparator gives boundary
displacement MAEs of $0.11986^\circ,\; 0.00272^\circ,\; 0.00419^\circ$ on CIFAR-100, ImageNet-LT, and iNaturalist, respectively.

For CIFAR-10, all $45$ class pairs are comparable and the displacement MAE is $0.011627^\circ$, with maximum error $0.029000^\circ$. \subsection{Pair--Query Margin Comparisons}
\label{sec:margin-change-protocol}

Queries from both classes of every selected pair are retained.
A query can therefore appear in more than one pair; reported denominators are
pair--query incidences rather than independent query counts.

Writing the measured intervention-induced margin change as
\[
\Delta m
=
\Delta m_{\mathrm{lin}}+r,
\]
the condition
\[
|\Delta m_{\mathrm{lin}}|
>
|r|
\]
certifies the sign of the exact margin change.

Using the measured full-score residual, this certificate covers $97.57\%,\; 99.83\%,\; 100\%$ of the selected incidences on CIFAR-100-LT, ImageNet-LT, and iNaturalist,
respectively, and is correct on every covered incidence.

\subsection{Numerical Expansion Validation}
\label{sec:numerical-expansion-validation}

The score-expansion experiments use independent high-precision Bessel
evaluations as reference values.

Residual-order fits require at least three points and coefficient of
determination $R^2\ge0.9$. Successive joint approximations recover median residual slopes $-0.999,\; -2.000,\; -2.999$. The second-corrected joint score has median absolute error $2.32\times 10^{-7}$ on the prescribed joint-regime grid.

The fixed-dimensional approximation is evaluated on its own
low-dimensional, high-concentration regime rather than treated as a universal
competitor.

\subsection{Local Gradient Diagnostics}
\label{sec:gradient-protocol}

For local-gradient comparisons, the model and its class-statistical state are
frozen.
Different score policies are then applied to identical queries.

The contrastive query gradient is projected by $\Pi_z=I-zz^\top$ onto the tangent space of the normalized representation.

For cosine-versus-Pure Angular comparisons, $1024$ held-out identities are
selected per frozen state without reference to labels or outcomes.
Both scores use identical class offsets and identical fitted class
statistics.

The reported stacked-query relative error is
\[
100\,
\frac{
\|G_{\mathrm{angular}}-G_{\cos}\|_F
}{
\|G_{\mathrm{angular}}\|_F
}.
\]

\subsection{Support-Size and Distribution-Fit Diagnostics}
\label{sec:support-fit-protocol}

Support-size experiments keep the representation fixed and vary only the
number of observations used to estimate the class state.

A disjoint finite reference set is used to measure resultant-estimation
error.
The reference is not interpreted as a known population parameter.

A separate frozen replay compares score changes from excluding the current
query with those from excluding the query identity's stored history.

These diagnostics separate goodness of fit of the population model from
predictive accuracy of the fitted score geometry.
Single-vMF fit diagnostics use held-out features and are kept distinct from
the approximation error of the fitted vMF score itself.

\begin{figure}[H]
  \centering
  \includegraphics[width=0.48\linewidth]{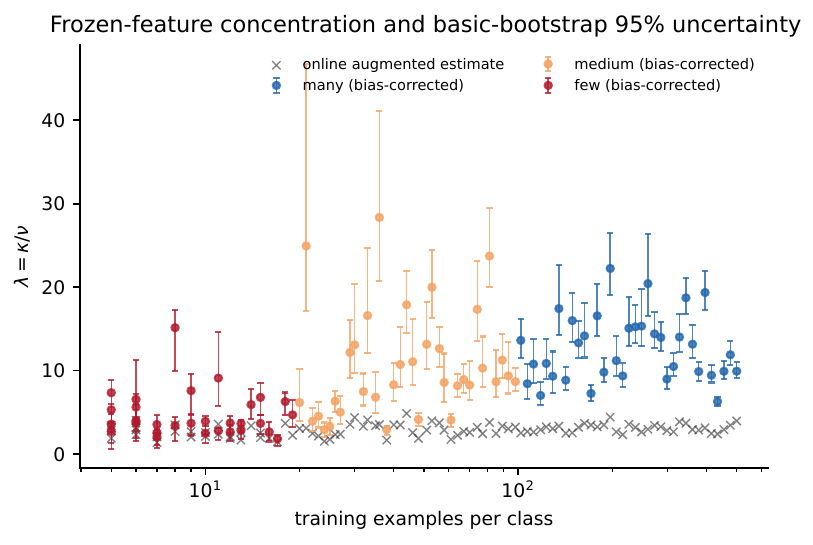}\hfill
  \includegraphics[width=0.48\linewidth]{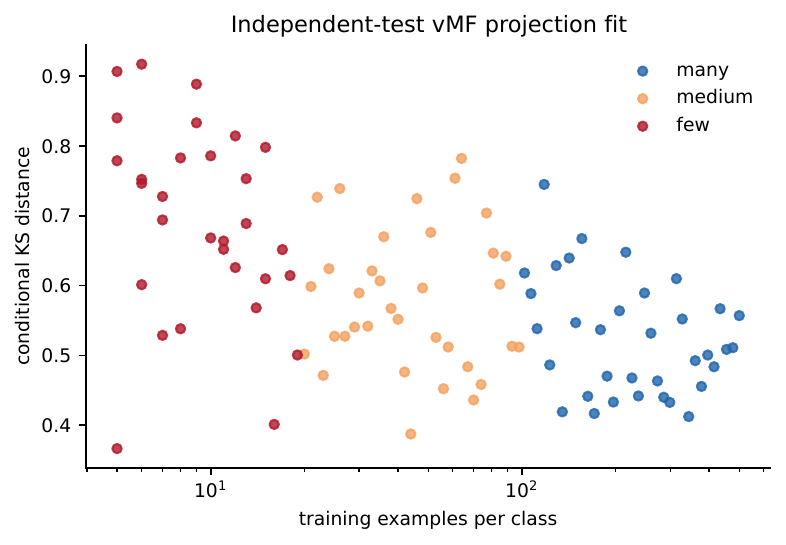}
  \caption{\textbf{Class geometry and vMF goodness of fit.} Left: class frequency and fitted concentration. Right: held-out projection goodness of fit.}
  \label{fig:real-scope-app}
\end{figure}

\begin{figure}[H]
  \centering
  \includegraphics[width=\linewidth]{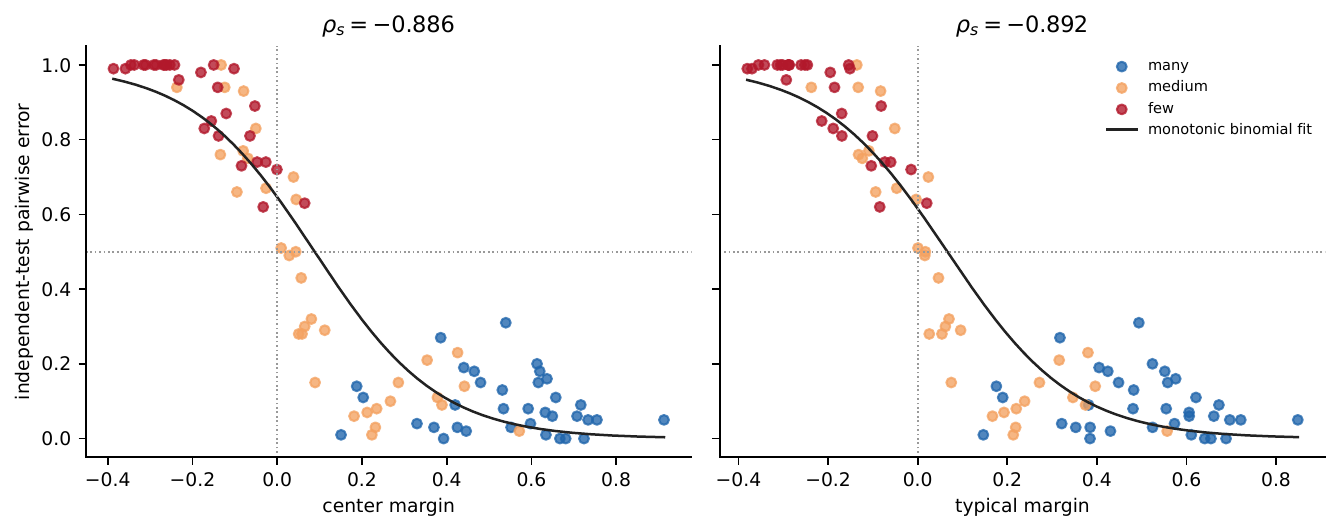}
  \caption{\textbf{Margins and fitted-score errors.} Prototype-centre and random-typical margins versus reconstructed pairwise errors.}
  \label{fig:real-margin-app}
\end{figure}

\subsection{Matched ResNet and ResNeXt Comparison}
\label{sec:backbone-control}
\label{app:cross-backbone}

We apply the identical frozen protocol to the public 90-epoch ImageNet-LT
ProCo ResNet-50 and ResNeXt-50 checkpoints
\citep{du2024probabilisticcontrastivelearninglongtailed,he2016deep,Xie_2017_CVPR}.
All 115,846 training features determine class statistics, and all 20,000
validation images are queried. Both models use a 2,048-dimensional encoder
and a 1,024-dimensional normalized projection output. The full fixed grid
uses $\beta\in\{0,0.5,1,1.5\}$, $\gamma\in\{-1,0,1\}$, and $\tau_0=0.1$.
Table~\ref{tab:cross-backbone} reports the projection results. Equalization
at zero prior changes accuracy by $+0.195$ and $+0.315$ percentage points,
whereas at $\gamma=-1$ it changes accuracy by $-1.665$ and $-1.465$ points.
Thus prior-dependent gain sensitivity is present in both architectures.
In the encoder space, zero-prior equalization gives $+2.020$ and $+1.635$
points under the same fitted-score protocol.

The released ResNeXt classification head separately obtains $58.026\%$
on the 50,000-image test split, with head/medium/tail accuracies of
$67.787/55.620/38.868\%$. Public checkpoints support this frozen comparison;
they are not the newly trained models in Table~\ref{tab:historical-power-temperature}.

\begin{table}[ht]
\centering\small
\caption{\textbf{Frozen gain interventions across backbones.} ImageNet-LT projection-score accuracy changes in percentage points relative to ProCo ($\beta=1$) at the same prior.}
\label{tab:cross-backbone}
\begin{tabular}{@{}rrrr@{}}
\toprule
$\beta$ & $\gamma$ & ResNet-50 & ResNeXt-50 \\
\midrule
0 & -1 & -1.665 & -1.465 \\
0 & 0 & +0.195 & +0.315 \\
0 & 1 & +0.175 & -0.175 \\
0.5 & -1 & -0.635 & -0.610 \\
0.5 & 0 & +0.260 & +0.240 \\
0.5 & 1 & +0.055 & -0.040 \\
1 & -1 & +0.000 & +0.000 \\
1 & 0 & +0.000 & +0.000 \\
1 & 1 & +0.000 & +0.000 \\
1.5 & -1 & +0.650 & +0.625 \\
1.5 & 0 & -0.395 & -0.280 \\
1.5 & 1 & -0.095 & -0.050 \\
\bottomrule
\end{tabular}
\end{table}

\section{Additional Mechanism Studies}
\label{sec:additional-mechanism-studies}

This section reports auxiliary experiments that probe the mechanism outside
the main Pure Angular training comparison.

\subsection{Prior-Dependent Frozen Gain Interventions}
\label{sec:prior-dependent-frozen}

We first vary gain disparity and additive class offsets on fixed
representations.

The frozen gain path is
\[
g_c^{\mathrm{lin}}(\beta)
=
\bar g+\beta(g_c-\bar g),
\]
with $\beta\in\{0,0.5,1,1.5\}$. The class-offset coefficient is varied over $\gamma\in\{-1,0,1\}$. At a fixed prior, $\beta=1$ is the original ProCo-gain reference.

At zero offset, complete equalization changes Projection accuracy by $+2.57$ points on CIFAR-100-LT, $+0.195$ on ImageNet-LT, and $+2.297$ on iNaturalist.

Under $\gamma=-1$, the corresponding CIFAR-100 and ImageNet changes become $-5.35$ and $-1.665$ points.

The same gain manipulation therefore interacts strongly with the additive
class offset.

\begin{figure}[H]
\centering
\includegraphics[width=85mm]{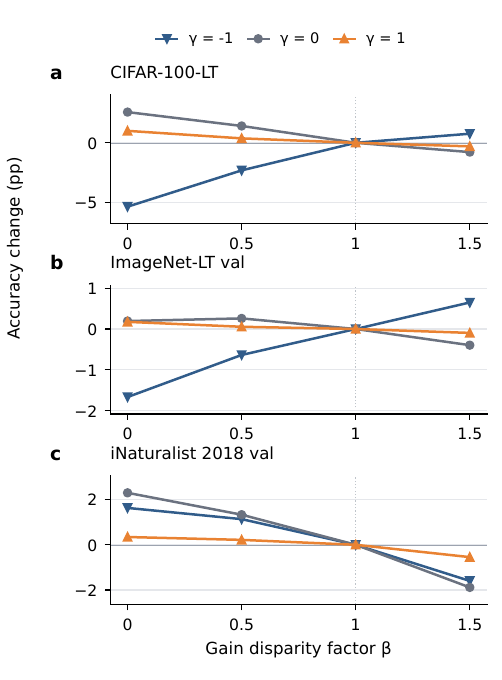}
\caption{\textbf{Prior-dependent effects of angular gain.} Frozen vMF accuracy changes relative to ProCo at the same prior strength $\gamma$.}
\label{fig:gain-prior}
\end{figure}

\subsection{Historical Power-Temperature Intervention}
\label{sec:historical-power-temperature}

Before introducing the cleaner linear Pure Angular control, we also examined
a positive power-temperature family during training:
\[
g_c^{\mathrm{pow}}(\beta)
=
\bar g\,
\frac{
g_c^\beta
}{
C^{-1}\sum_j g_j^\beta
},
\]
with
\[
\tau_c
=
\frac{A_c}{g_c^{\mathrm{pow}}(\beta)},
\qquad
\beta\in\{0.5,1,1.5\}.
\]

This path preserves the mean applied gain while reducing or increasing its
between-class dispersion.
Because it acts through temperature, it changes the complete score rather
than only the linear angular term.

The historical study uses CIFAR-10-LT and CIFAR-100-LT at IF50 and IF100
with ResNet-32 and joint training.

For CIFAR-100-LT IF100, the final applied gain standard deviations are $0.630,\; 0.349,\; 0.948$ for ProCo, reduced disparity, and increased disparity.

The corresponding native statistical-state gain standard deviations are $0.630,\; 0.684,\; 0.648$. Thus changing the applied gain does not trivially impose the same ordering on
the learned class-statistical state.

\subsection{Local Gradient Prediction During Training}
\label{sec:historical-gradient}

At selected epochs, each historical model and its statistical bank are
frozen.
All three gain policies are then evaluated on identical queries.

At epoch $200$, the leading-gradient relative errors under the respective
training policies are $4.87\%,\; 5.34\%,\; 4.57\%$. For the same-state change from the native policy to either gain intervention,
the leading angular approximation predicts the exact gradient increment with
mean relative error $8.82\%-10.18\%$. Thus the leading gain explains most of the local gradient response while
leaving measurable finite-dimensional corrections.

\begin{figure}[H]
\centering
\includegraphics[width=85mm]{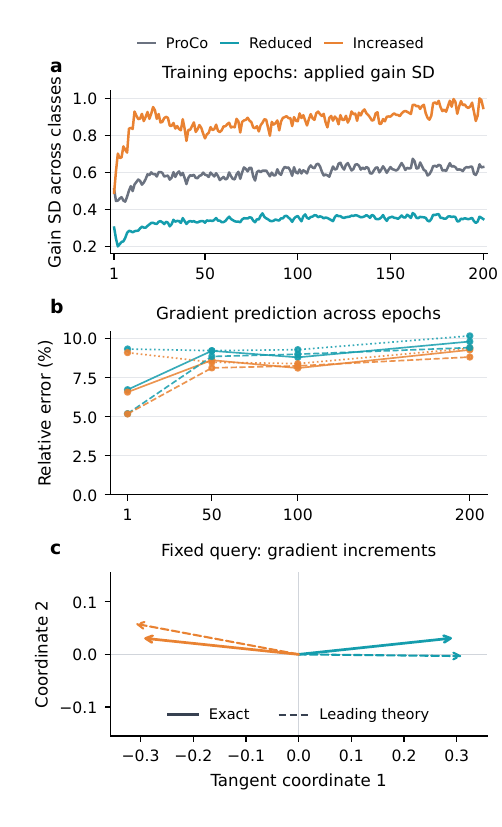}
\caption{\textbf{Angular gain and local gradients.} (a) Gain dispersion during training. (b) Relative error of leading gradient increments. Colors identify interventions; line styles identify training states. (c) Exact (solid) and leading (dashed) increments projected onto a common plane.}
\label{fig:training-gain}
\end{figure}

\subsection{Historical Classification Outcomes}
\label{sec:historical-classification}

The original jointly trained classification heads at the final epoch are:

\begin{table}[t]
\centering
\small
\caption{
Historical power-temperature intervention.
Original classification-head test accuracy (\%).
}
\label{tab:historical-power-temperature}
\begin{tabular}{lcccc}
\toprule
& \multicolumn{2}{c}{CIFAR-10-LT}
& \multicolumn{2}{c}{CIFAR-100-LT}\\
Policy & IF50 & IF100 & IF50 & IF100\\
\midrule
ProCo & 87.57 & 85.39 & 55.78 & 51.34\\
Reduced gain disparity & 87.72 & 85.10 & 56.62 & 51.72\\
Increased gain disparity & 87.56 & 85.14 & 55.99 & 52.05\\
\bottomrule
\end{tabular}
\end{table}

The relative ordering changes across imbalance settings.
This historical family therefore supports gain as an active training
variable but does not define a monotone rule for reducing or increasing
gain disparity.

\begin{figure}[H]
\centering
\includegraphics[width=85mm]{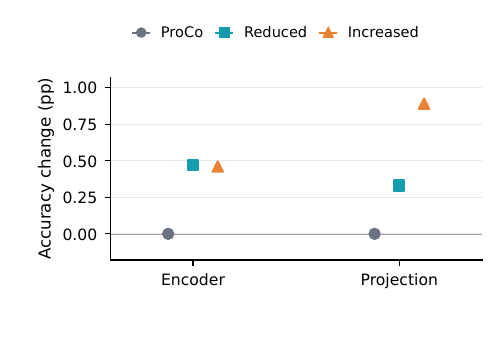}
\caption{\textbf{Changes in common linear readouts.} Accuracy differences
from ProCo at epoch 200 within each feature layer on CIFAR-100-LT IF100.
Each classifier is fitted independently under the same fixed protocol.
These probes are distinct from the original classification heads in
Table~\ref{tab:historical-power-temperature}.}
\label{fig:common-readouts}
\end{figure}

\section{Complete CIFAR-10 Realization Controls}
\label{sec:cifar10-controls}

This section reports the complete CIFAR-10 control matrix across imbalance,
training prior, training objective, score realization, and intervention
strength.

All models use the same controlled two-view recipe within a setting.
The score policies are ProCo and the three realizations
Direct Temperature, Intercept-Preserving, and Pure Angular at $\beta\in\{0,0.5\}$. Uniform-prior runs use zero class offset.
Training-frequency-prior runs use the class counts of the corresponding
long-tail subset.

Encoder (E) and Projection (P) denote balanced-support linear probes.
FC denotes the original classification head and is available only under
joint training.

\subsection{Complete Numerical Results}

\begin{table}[H]
\centering
\small
\caption{CIFAR-10, IF1, uniform prior. Accuracy (\%).}
\label{tab:c10-if1-u}
\begin{tabular}{lc|cc|ccc}
\toprule
& & \multicolumn{2}{c|}{Contrastive-only}
& \multicolumn{3}{c}{Joint}\\
Realization & $\beta$ & E & P & E & P & FC\\
\midrule
ProCo & 1 & 92.15 & 92.91 & 92.95 & 93.04 & 92.81\\
Temperature & 0 & 92.09 & 93.13 & 92.77 & 92.60 & 92.54\\
Temperature & 0.5 & 92.32 & 92.95 & 92.40 & 92.32 & 92.38\\
Intercept-preserving & 0 & 92.46 & 93.01 & 93.01 & 93.04 & 92.85\\
Intercept-preserving & 0.5 & 92.59 & 93.19 & 92.84 & 92.89 & 92.55\\
Pure Angular & 0 & 92.13 & 93.05 & 93.06 & 93.12 & 92.87\\
Pure Angular & 0.5 & 92.62 & 93.11 & 92.60 & 92.38 & 92.27\\
\bottomrule
\end{tabular}
\end{table}

\begin{table}[H]
\centering
\small
\caption{CIFAR-10, IF10, uniform prior. Accuracy (\%).}
\label{tab:c10-if10-u}
\begin{tabular}{lc|cc|ccc}
\toprule
& & \multicolumn{2}{c|}{Contrastive-only}
& \multicolumn{3}{c}{Joint}\\
Realization & $\beta$ & E & P & E & P & FC\\
\midrule
ProCo & 1 & 88.97 & 88.63 & 88.57 & 88.28 & 86.32\\
Temperature & 0 & 88.34 & 87.58 & 88.66 & 88.01 & 85.96\\
Temperature & 0.5 & 88.29 & 88.21 & 88.20 & 87.84 & 85.96\\
Intercept-preserving & 0 & 88.61 & 88.20 & 88.36 & 87.96 & 85.62\\
Intercept-preserving & 0.5 & 88.51 & 88.24 & 88.34 & 87.96 & 85.83\\
Pure Angular & 0 & 88.70 & 88.83 & 88.00 & 87.54 & 86.07\\
Pure Angular & 0.5 & 88.73 & 88.87 & 88.48 & 88.11 & 86.35\\
\bottomrule
\end{tabular}
\end{table}

\begin{table}[H]
\centering
\small
\caption{CIFAR-10, IF10, training-frequency prior. Accuracy (\%).}
\label{tab:c10-if10-f}
\begin{tabular}{lc|cc|ccc}
\toprule
& & \multicolumn{2}{c|}{Contrastive-only}
& \multicolumn{3}{c}{Joint}\\
Realization & $\beta$ & E & P & E & P & FC\\
\midrule
ProCo & 1 & 88.60 & 88.41 & 88.44 & 88.19 & 87.46\\
Temperature & 0 & 88.60 & 88.94 & 88.23 & 88.00 & 87.29\\
Temperature & 0.5 & 88.55 & 88.34 & 88.31 & 88.10 & 87.34\\
Intercept-preserving & 0 & 88.74 & 89.02 & 88.23 & 87.96 & 87.41\\
Intercept-preserving & 0.5 & 88.35 & 88.53 & 88.44 & 87.76 & 87.27\\
Pure Angular & 0 & 88.98 & 89.19 & 88.49 & 87.96 & 87.24\\
Pure Angular & 0.5 & 88.58 & 88.24 & 88.11 & 87.79 & 87.43\\
\bottomrule
\end{tabular}
\end{table}

\begin{table}[H]
\centering
\small
\caption{CIFAR-10, IF50, uniform prior. Accuracy (\%).}
\label{tab:c10-if50-u}
\begin{tabular}{lc|cc|ccc}
\toprule
& & \multicolumn{2}{c|}{Contrastive-only}
& \multicolumn{3}{c}{Joint}\\
Realization & $\beta$ & E & P & E & P & FC\\
\midrule
ProCo & 1 & 82.70 & 81.52 & 83.70 & 82.29 & 76.14\\
Temperature & 0 & 76.82 & 68.89 & 83.63 & 81.89 & 75.81\\
Temperature & 0.5 & 80.98 & 78.34 & 83.97 & 82.07 & 76.04\\
Intercept-preserving & 0 & 78.18 & 72.34 & 83.78 & 81.73 & 75.23\\
Intercept-preserving & 0.5 & 81.54 & 79.21 & 83.36 & 81.88 & 76.28\\
Pure Angular & 0 & 77.64 & 70.61 & 84.09 & 81.95 & 76.30\\
Pure Angular & 0.5 & 81.55 & 79.55 & 84.24 & 82.10 & 75.52\\
\bottomrule
\end{tabular}
\end{table}

\begin{table}[H]
\centering
\small
\caption{CIFAR-10, IF50, training-frequency prior. Accuracy (\%).}
\label{tab:c10-if50-f}
\begin{tabular}{lc|cc|ccc}
\toprule
& & \multicolumn{2}{c|}{Contrastive-only}
& \multicolumn{3}{c}{Joint}\\
Realization & $\beta$ & E & P & E & P & FC\\
\midrule
ProCo & 1 & 84.86 & 83.91 & 83.42 & 82.04 & 80.52\\
Temperature & 0 & 84.57 & 83.61 & 83.36 & 82.95 & 80.46\\
Temperature & 0.5 & 85.16 & 84.31 & 84.25 & 83.37 & 81.24\\
Intercept-preserving & 0 & 84.60 & 84.19 & 84.13 & 83.07 & 80.81\\
Intercept-preserving & 0.5 & 84.38 & 84.04 & 84.60 & 82.93 & 80.79\\
Pure Angular & 0 & 85.05 & 84.26 & 83.47 & 82.65 & 80.79\\
Pure Angular & 0.5 & 84.78 & 83.90 & 84.25 & 82.71 & 80.80\\
\bottomrule
\end{tabular}
\end{table}

\begin{table}[H]
\centering
\small
\caption{CIFAR-10, IF100, uniform prior. Accuracy (\%).}
\label{tab:c10-if100-u}
\begin{tabular}{lc|cc|ccc}
\toprule
& & \multicolumn{2}{c|}{Contrastive-only}
& \multicolumn{3}{c}{Joint}\\
Realization & $\beta$ & E & P & E & P & FC\\
\midrule
ProCo & 1 & 78.28 & 74.65 & 81.64 & 79.55 & 71.25\\
Temperature & 0 & 70.46 & 57.88 & 81.10 & 76.62 & 71.58\\
Temperature & 0.5 & 74.56 & 67.21 & 81.93 & 79.16 & 71.28\\
Intercept-preserving & 0 & 70.65 & 58.17 & 81.44 & 79.14 & 71.44\\
Intercept-preserving & 0.5 & 74.82 & 67.63 & 81.42 & 78.43 & 71.31\\
Pure Angular & 0 & 70.18 & 54.81 & 81.39 & 78.40 & 71.70\\
Pure Angular & 0.5 & 75.41 & 67.30 & 81.57 & 78.30 & 71.43\\
\bottomrule
\end{tabular}
\end{table}

\begin{table}[H]
\centering
\small
\caption{CIFAR-10, IF100, training-frequency prior. Accuracy (\%).}
\label{tab:c10-if100-f}
\begin{tabular}{lc|cc|ccc}
\toprule
& & \multicolumn{2}{c|}{Contrastive-only}
& \multicolumn{3}{c}{Joint}\\
Realization & $\beta$ & E & P & E & P & FC\\
\midrule
ProCo & 1 & 82.49 & 81.32 & 81.50 & 80.13 & 77.79\\
Temperature & 0 & 82.43 & 80.56 & 81.56 & 79.72 & 77.39\\
Temperature & 0.5 & 82.15 & 80.41 & 81.29 & 79.52 & 77.11\\
Intercept-preserving & 0 & 82.64 & 81.49 & 80.90 & 79.31 & 76.71\\
Intercept-preserving & 0.5 & 82.71 & 81.53 & 81.12 & 78.91 & 76.36\\
Pure Angular & 0 & 82.51 & 81.19 & 80.89 & 79.07 & 76.50\\
Pure Angular & 0.5 & 82.86 & 81.46 & 80.89 & 79.31 & 77.37\\
\bottomrule
\end{tabular}
\end{table}

\subsection{Cross-Setting Pattern}
\label{sec:cifar10-cross-setting}

The complete matrix shows a strong interaction between gain editing and the
training prior.

Under a uniform prior, complete Pure Angular equalization substantially
reduces contrastive-only representation accuracy at IF50 and IF100.
With the training-frequency prior, the same intervention has much smaller
effects and is positive in the main long-tail evaluation reported in the
paper.

Joint training introduces an additional objective dependence.
At IF50 with the training-frequency prior, partial editing improves all three
joint readouts.
At IF100, both Pure Angular strengths reduce the joint readouts.

These results locate the source of the mixed extension results:
the effect of angular gain depends on the additive prior, intervention
strength, and training objective.

\clearpage
\begin{figure}[H]
\centering
\includegraphics[width=\linewidth]{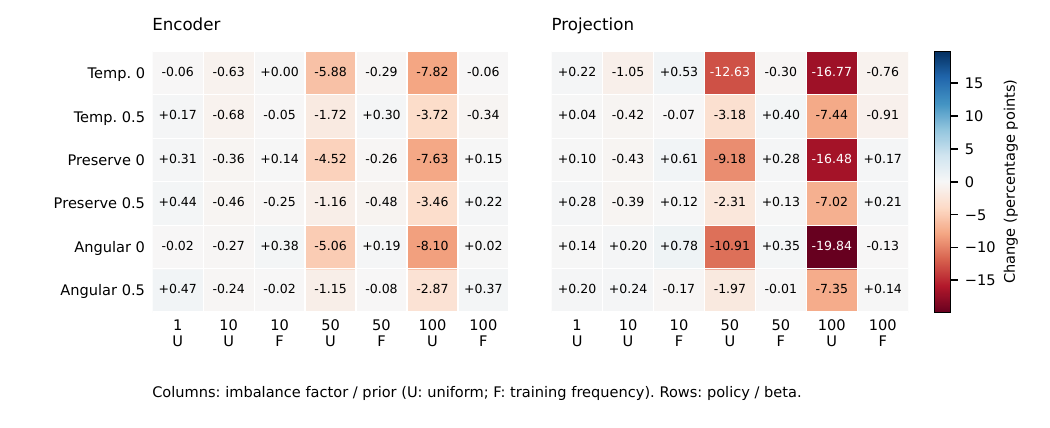}
\caption{\textbf{Contrastive-only training across sampling and prior settings.} Balanced-support probe accuracy changes from ProCo in percentage points; colors are centered at zero.}
\label{fig:c10-pure-complete}
\end{figure}
\clearpage

\begin{figure}[H]
\centering
\includegraphics[width=\linewidth]{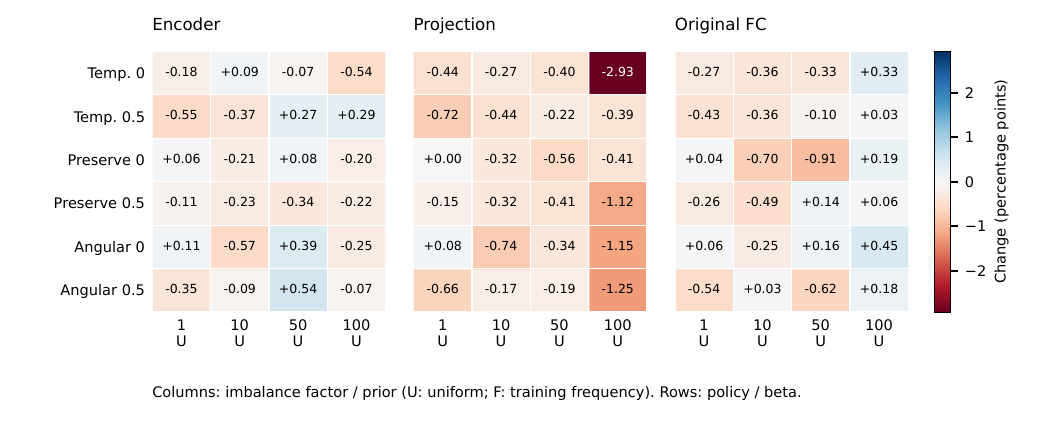}
\caption{\textbf{Joint training with a uniform prior.} Accuracy changes from ProCo in percentage points. Encoder/Projection use balanced-support probes; FC is the jointly trained head.}
\label{fig:c10-joint-uniform-complete}
\end{figure}

\begin{figure}[H]
\centering
\includegraphics[width=\linewidth]{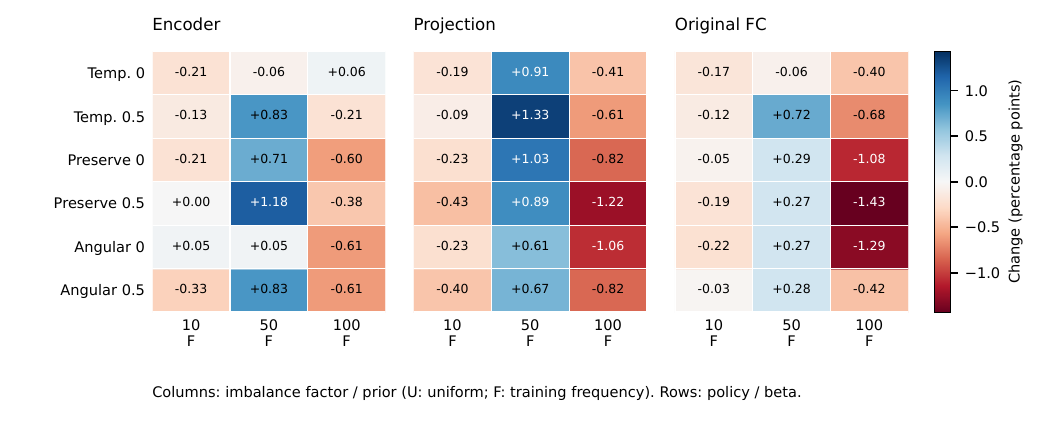}
\caption{\textbf{Joint training with the training-frequency prior.} Accuracy changes from ProCo in percentage points. Encoder/Projection use balanced-support probes; FC is the jointly trained head.}
\label{fig:c10-joint-actual-complete}
\end{figure}

\subsection{Local Mechanism Across Training States}
\label{sec:cifar10-local-mechanism}

For the Pure Angular and Intercept-Preserving policies, the cosine-zero
intercept is preserved at every recorded training state.
Consequently, accuracy decreases under these controls cannot be attributed to
intercept drift.

At the recorded states, the leading angular gradient remains close to the
exact gradient even in settings whose final representation accuracy is lower.
The local score mechanism and the final optimization outcome are therefore
distinct empirical objects.
\begin{figure}[H]
\centering
\includegraphics[width=\linewidth]{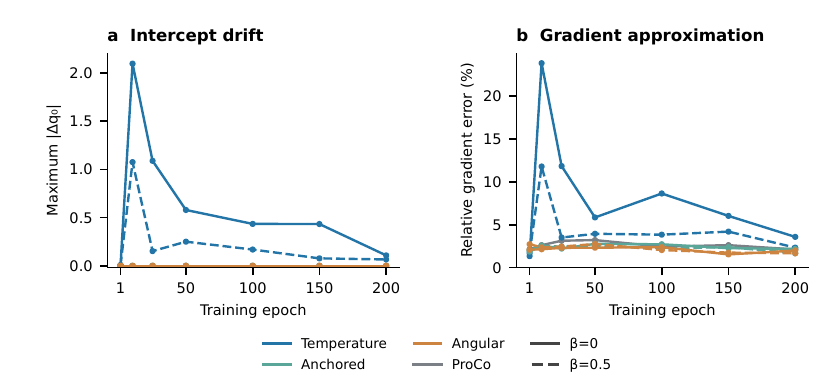}
\caption{\textbf{Score and gradient trajectories during training.} IF50 with a uniform prior: cosine-zero intercept drift and leading-gradient approximation error.}
\label{fig:c10-trajectory}
\end{figure}

\section{Representation-Learning Extensions}
\label{app:learning-extensions}
\label{sec:learning-extensions}

This section collects representation-learning results outside the main
contrastive-only, training-frequency-prior comparison.

\subsection{Joint-Training Classification Heads}
\label{sec:joint-head-results}

Joint training adds the original classification objective to the
probabilistic contrastive objective.
The intervention can therefore affect a shared encoder through both
branches.

The original classification-head accuracies under complete equalization are
shown below.

\begin{table}[H]
\centering
\small
\caption{
Joint-training FC-head accuracy (\%) with the training-frequency prior.
All modified policies use $\beta=0$.
$\Delta$ is Pure Angular minus ProCo.
}
\label{tab:joint-fc}
\begin{tabular}{lrrrrr}
\toprule
Dataset / IF & ProCo & Direct Temp. & Preserve & Pure Angular & $\Delta$\\
\midrule
CIFAR-10 / 10  & 87.46 & 87.29 & 87.41 & 87.24 & -0.22\\
CIFAR-10 / 50  & 80.52 & 80.46 & 80.81 & 80.79 & +0.27\\
CIFAR-10 / 100 & 77.79 & 77.39 & 76.71 & 76.50 & -1.29\\
CIFAR-100 / 10  & 58.26 & 58.24 & 57.55 & 58.75 & +0.49\\
CIFAR-100 / 50  & 47.63 & 46.73 & 48.82 & 48.10 & +0.47\\
CIFAR-100 / 100 & 44.67 & 43.15 & 43.54 & 43.88 & -0.79\\
ImageNet-LT      & 57.26 & 56.75 & -- & 57.35 & +0.09\\
\bottomrule
\end{tabular}
\end{table}

The response is mixed:
Pure Angular improves CIFAR-100 at IF10/50 and CIFAR-10 at IF50, but declines
at the remaining CIFAR settings.
This contrasts with the consistent gains in the main contrastive-only
training comparison.

\subsection{Training Prior and Evaluation Support}
\label{sec:prior-support-extension}

The main representation table uses the training-frequency prior during
feature learning and the corresponding long-tail subset for linear
evaluation.

Uniform-prior training leaves the examples long-tailed but removes the
frequency-dependent logit offset.

Under this control, complete Pure Angular equalization on CIFAR-10 changes
long-tail Encoder/Projection accuracy by $-11.47/-14.51$ points at IF50 and $-15.98/-18.31$ points at IF100.

Balanced-support probes use the full balanced CIFAR fitting pool and therefore
measure a different downstream labeled-support regime.
Absolute accuracies from the balanced-support and long-tail-support protocols
are not pooled.

\subsection{Frozen Decision Rules Versus Learned Representations}
\label{sec:frozen-vs-learning}

The frozen vMF experiments and representation-learning experiments answer
different questions.

Frozen scoring holds the representation fixed and changes only the decision
rule.
For example, on CIFAR-10 IF50/100, complete Pure Angular frozen scoring
changes Encoder/Projection accuracy by $-0.75/-0.14$ and $-0.74/-0.23$ points, while retaining $99.46\%-99.95\%$ prediction agreement with cosine prototypes.

By contrast, the main linear-probe results compare representations learned
under different training score realizations.

These two evaluations are therefore used as complementary evidence:
the frozen experiments identify the score geometry, while the end-to-end
training experiments test whether the intervention changes learned features.

\subsection{ImageNet Gain-Strength Extension}
\label{sec:imagenet-strength-extension}

In the three-seed ImageNet-LT contrastive-only comparison, original ProCo
gives $23.05\pm0.12\%$ and $36.86\pm0.06\%$ Encoder/Projection accuracy,
while complete Pure Angular equalization gives $23.22\pm0.19\%$ and
$37.07\pm0.11\%$, respectively (Table~\ref{tab:angular-training}).
A separate single-run $\beta=1.5$ extension gives $23.56/37.76\%$.
Thus gain editing remains active at ImageNet scale, while complete
equalization is not the best tested gain profile in this setting.

\subsection{Backbone and Calibration Controls}
\label{sec:backbone-calibration-extension}

The frozen ResNet-50/ResNeXt-50 comparison in
Appendix~\ref{app:cross-backbone}
shows qualitatively similar prior dependence across architectures.

Accuracy and probabilistic calibration can also move in different directions.
In the original iNaturalist zero-offset frozen study, equalization improves
Encoder/Projection accuracy by approximately $7.165/2.297$ points while increasing NLL by $0.0521/0.0097$. These results separate classification geometry from probability calibration.
\section{Algorithm and Hyperparameter Summary}
\label{sec:algorithm-hyperparameters}

This section summarizes the score intervention, training procedure, and the
two evaluation procedures used in the paper.

\subsection{Controlled Score Realization and Training}
\label{sec:training-algorithm}

\begin{algorithm}[H]
\caption{Controlled score realization and representation learning}
\label{alg:controlled-training}
\begin{algorithmic}[1]
\REQUIRE
long-tail training set;
encoder $f_\theta$;
projector $h_\phi$;
optional classification head $W$;
baseline statistical estimator;
base temperature $\tau_0$;
strength $\beta$;
realization
$\mathcal R\in
\{\mathrm{native},\mathrm{temp},\mathrm{preserve},\mathrm{angular}\}$;
prior coefficient $\gamma$;
classification weight $\alpha\in\{0,1\}$.

\STATE Compute class frequencies
\[
\pi_c=\frac{n_c}{\sum_j n_j}.
\]

\FOR{each mini-batch}
    \STATE Generate the baseline augmented views and compute normalized
    Projection features
    \[
    z=
    \frac{h_\phi(f_\theta(x))}
    {\|h_\phi(f_\theta(x))\|}.
    \]

    \STATE Apply the baseline statistical-estimator update/order and obtain
    the class snapshot used for scoring.
    Detach $(\mu_c,\kappa_c)$ from the computation graph.

    \STATE Compute
    \[
    A_c=
    \frac{I_{p/2}(\kappa_c)}
    {I_{p/2-1}(\kappa_c)},
    \qquad
    g_c=\frac{A_c}{\tau_0},
    \qquad
    \bar g=\frac1C\sum_c g_c,
    \]
    and
    \[
    g_c^\star
    =
    \bar g+\beta(g_c-\bar g).
    \]

    \STATE For each query compute
    \[
    \rho_c=\mu_c^\top z,
    \qquad
    q_c^0=q_c(\rho_c;\tau_0).
    \]

    \STATE Choose the score realization
    \[
    q_c^{\mathcal R}
    =
    \begin{cases}
    q_c^0,
    &\mathcal R=\mathrm{native},\\[3pt]
    q_c(\rho_c;\tau_c^\star),
    &\mathcal R=\mathrm{temp},\\[3pt]
    q_c(\rho_c;\tau_c^\star)
    -q_c(0;\tau_c^\star)
    +q_c(0;\tau_0),
    &\mathcal R=\mathrm{preserve},\\[3pt]
    q_c^0+(g_c^\star-g_c)\rho_c,
    &\mathcal R=\mathrm{angular},
    \end{cases}
    \]
    where
    \[
    \tau_c^\star
    =
    \frac{A_c}{g_c^\star}.
    \]

    \STATE Form the contrastive logits
    \[
    s_c
    =
    q_c^{\mathcal R}
    +
    \gamma\log\pi_c
    \]
    and compute the probabilistic contrastive cross-entropy
    $L_{\mathrm{ProCo}}$.

    \IF{$\alpha=1$}
        \STATE Compute the baseline classification-head loss
        $L_{\mathrm{cls}}$ and set
        \[
        L
        =
        L_{\mathrm{ProCo}}
        +
        L_{\mathrm{cls}}.
        \]
    \ELSE
        \STATE Set
        \[
        L=L_{\mathrm{ProCo}}.
        \]
    \ENDIF

    \STATE Backpropagate through the score to the query representation,
    encoder, projector, and optional classification head.
    Do not differentiate through the statistical snapshot or the target-gain
    construction.

    \STATE Update trainable parameters using the baseline optimizer and
    schedule.
\ENDFOR

\STATE Save the fixed final-epoch model.
\end{algorithmic}
\end{algorithm}

The estimator snapshot follows the update order of the corresponding baseline
implementation.
The main intervention does not replace the statistical estimator.

For CIFAR, two strong augmented views supply the contrastive objective.
For ImageNet, the forward construction retains one weak and two strong views;
the two strong views supply the probabilistic contrastive objective.
In contrastive-only training, the classification loss is omitted.

\subsection{Ordinary Linear Evaluation}
\label{sec:linear-evaluation-algorithm}

\paragraph{Algorithm 2A: representation evaluation.}
For a trained model:

\begin{enumerate}
    \item Freeze the encoder and projector.
    Extract deterministic Encoder and Projection features using the
    preprocessing specified for the corresponding dataset.

    \item Fit one ordinary affine classifier on Encoder features and one on
    Projection features using the prescribed labeled fitting pool.
    Optimize only the new classifier parameters with standard
    cross-entropy.

    \item Do not apply a vMF score, score intervention, or additional
    class-frequency logit offset during this linear evaluation.

    \item Evaluate the classifiers on the held-out split and compare
    representations learned under different training score realizations.
\end{enumerate}

\subsection{Direct Frozen vMF Evaluation}
\label{sec:frozen-vmf-algorithm}

\paragraph{Algorithm 2B: frozen decision-rule evaluation.}
For a fixed checkpoint:

\begin{enumerate}
    \item Extract unit-normalized features in the chosen Encoder or
    Projection space.

    \item Using only the fitting images, estimate $(\mu_c,\kappa_c,A_c)$ for each class.

    \item Hold the representation and all fitted statistics fixed.
    Apply the requested score realization to every held-out query.

    \item For the main frozen comparison, use zero evaluation offset and
    predict
    \[
    \widehat y
    =
    \arg\max_c q_c^{\mathcal R}.
    \]

    \item Report accuracy and prediction agreement on the identical queries.
\end{enumerate}

Algorithm 2A compares representations learned under different objectives.
Algorithm 2B compares decision rules on the same representation.

\subsection{Feature-Learning Hyperparameters}
\label{sec:training-hyperparameters}

\begin{table}[H]
\centering
\small
\caption{Main feature-learning hyperparameters.}
\label{tab:training-hyperparameters}
\begin{tabular}{lll}
\toprule
Setting & CIFAR-10/100-LT & ImageNet-LT\\
\midrule
Encoder & ResNet-32 & ResNet-50\\
Encoder dimension & 64 & 2048\\
Projection dimension & 128 & 1024\\
Training epochs & 200 & 90\\
Global batch size & 256 & 256\\
Optimizer & SGD & SGD\\
Initial learning rate & 0.3 & 0.1\\
Momentum & 0.9 & 0.9\\
Weight decay & $2\times10^{-4}$ & $5\times10^{-4}$\\
Training temperature $\tau_0$ & 0.1 & 0.07\\
Contrastive views & two strong & two strong\\
Forward views & two & one weak + two strong\\
Main training prior $\gamma$ & 1 & 1\\
Main intervention strength $\beta$ & 0 & 0\\
Contrastive loss weight & 1 & 1\\
Classification weight (pure / joint) & 0 / 1 & 0 / 1\\
\bottomrule
\end{tabular}
\end{table}

CIFAR uses the recorded cosine schedule.
ImageNet retains its baseline learning-rate schedule.
The main CIFAR imbalance factors are $10,\; 50,\; 100$. Additional controls use $\beta=0.5$, and the ImageNet gain-strength extension additionally uses $\beta=1.5$. \subsection{Frozen-Evaluation Hyperparameters}
\label{sec:frozen-evaluation-hyperparameters}

\begin{table}[H]
\centering
\small
\caption{Frozen and linear-evaluation settings.}
\label{tab:frozen-hyperparameters}
\begin{tabular}{p{0.31\linewidth}p{0.63\linewidth}}
\toprule
Setting & Value or protocol\\
\midrule
Linear-classifier epochs
& 100; fixed final-epoch representation\\

Linear-classifier optimizer
& SGD; batch size 256; learning rate 0.1; momentum 0.9;
zero weight decay; recorded cosine schedule\\

Linear-classifier objective
& Ordinary cross-entropy; no extra prior offset; no vMF correction\\

Main linear fitting pool
& Corresponding long-tail training subset;
ImageNet uses 115,846 fitting images\\

Main Encoder preprocessing
& CIFAR: raw Encoder features;
ImageNet: unit-normalized Encoder features in the main comparison\\

Projection preprocessing
& Original unit-normalized Projection features\\

Historical CIFAR balanced pool
& 50,000 labeled training images; reported separately\\

Frozen vMF base temperature
& $\tau_0=0.1$ for all frozen decision-rule evaluations\\

Main frozen strength
& $\beta=0$; extended controls also use $\beta=0.5$\\

Frozen evaluation prior
& Zero offset in the main frozen table;
frequency offsets are separate controls\\

Frozen vMF fitting pool
& CIFAR/ImageNet: corresponding training subset;
iNaturalist: fixed 353,237-image fitting subset\\

Held-out classification images
& CIFAR: 10,000;
ImageNet: 50,000 test images;
iNaturalist: 24,426 official validation images\\

Boundary calculation
& 2,001 short-arc bracket locations followed by Brent refinement\\
\bottomrule
\end{tabular}
\end{table}

\end{document}